\documentclass[12pt]{article}
\usepackage[letterpaper, margin=1in]{geometry}

\usepackage{amsmath,amsfonts,graphicx,amsthm}
\usepackage{amssymb,tikz}
\usepackage{enumitem}
\usepackage{bbm}
\usepackage{xcolor}
\usepackage{algorithm}
\usepackage{algpseudocode}
\usepackage{float}
\usepackage{subcaption}
\usepackage[round, authoryear]{natbib}
\usepackage{hyperref}
\hypersetup{hidelinks,colorlinks=true,linkcolor=blue,citecolor=blue,linktocpage=true}
\newtheorem{theorem}{Theorem}[section]

\newtheorem{proposition}{Proposition}
\newtheorem{lemma}{Lemma}

\DeclareMathOperator*{\argmin}{argmin}

\allowdisplaybreaks

\title{Time-uniform conformal and PAC prediction\footnote{Some content of this work has been presented by the second author at IMS Annual Meeting 2022, London.}}
\author{Kayla E. Scharfstein$^1$ and Arun Kumar Kuchibhotla$^1$}
\date{$^1$Department of Statistics and Data Science, Carnegie Mellon University\\[2ex]%
\today}

\begin{document}

\maketitle

\begin{abstract}
Given that machine learning algorithms are increasingly being deployed to aid in high stakes decision-making, uncertainty quantification methods that wrap around these black box models such as conformal prediction have received much attention in recent years. In sequential settings, where data are observed/generated in a streaming fashion, traditional conformal methods do not provide any guarantee without fixing the sample size. More importantly, traditional conformal methods cannot cope with sequentially updated predictions. As such, we develop an extension of the conformal prediction and related probably approximately correct (PAC) prediction frameworks to sequential settings where the number of data points is not fixed in advance. The resulting prediction sets are anytime-valid in that their expected coverage is at the required level at any time chosen by the analyst even if this choice depends on the data. We present theoretical guarantees for our proposed methods and demonstrate their validity and utility on simulated and real datasets.
\end{abstract}

\section{Introduction} \label{introduction_sec}

As black box machine learning algorithms become increasingly prevalent in high stakes decision-making processes, it is crucial that methods for quantifying the uncertainty of their predictions keep pace. For example, complex algorithms are used to inform decisions about bail, sentencing, and parole in the criminal justice system, support donor-recipient matching in organ transplantation, and detect nearby objects in self-driving cars \citep{angwin2016, briceno2020, badue2021}. In these contexts, quantifying the uncertainty of predictions can help decision-makers hedge against costly wrong choices and calibrate their expectations for the true outcome. 

Because the distribution of the data used in the decision-making pipeline is typically unknown, a popular line of research for uncertainty quantification has been to construct distribution-free prediction sets that wrap around the black box machine learning model. As described in \cite{vovk2005}, conformal prediction is one popular framework that provides finite-sample coverage guarantees. Given a desired nominal miscoverage level $\alpha \in (0, 1)$, the goal of conformal prediction is to construct a prediction set $\widehat{C}_{n, \alpha}$ using IID observations $\{Z_i\}_{i=1}^n$ satisfying \begin{equation} \label{eq:conformal_goal}
    \mathbb{P}(Z \in \widehat{C}_{n, \alpha}) ~\geq~ 1 - \alpha
\end{equation} 
for a test point $Z$ that is independent of $\{Z_i\}_{i=1}^n$ and shares the same distribution. Here, the probability is taken over the joint distribution of $\{Z_i\}_{i=1}^n$ and $Z$. Condition~\eqref{eq:conformal_goal} can be equivalently phrased in terms of the probability measure of $Z$: If $\mu_Z(A) = \mathbb{P}(Z \in A)$ for any Borel set $A$, then inequality~\eqref{eq:conformal_goal} is the same as
\begin{equation}\label{eq:equivalent-conformal-goal}
    \mathbb{E}[\mu_Z(\widehat{C}_{n,\alpha})] ~\ge~ 1 - \alpha.
\end{equation}
Note that $\mu_Z(\widehat{C}_{n,\alpha})\in[0, 1]$ is a random variable because $\widehat{C}_{n,\alpha}$ is a random set constructed using $\{Z_i\}_{i=1}^n$.
Several methods have been proposed that achieve \eqref{eq:conformal_goal} by relying only on the exchangeability of $(\{Z_i\}_{i=1}^n,\, Z)$ including the full conformal method \citep{vovk2005, lei2013} and the split conformal method \citep{papadopoulos2002, lei2014}.

A related line of research inspired by classical tolerance regions \citep{guttman1970, krishnamoorthy2009} is concerned with constructing the so-called probably approximately correct (PAC) prediction sets. The goal differs from that of the conformal approach in that the \textit{training data conditional} coverage of the prediction set should be approximately correct with high probability: \begin{equation} \label{eq:pac_goal}
    \mathbb{P}(\mathbb{P}(Z \in \widehat{C}_{n, \alpha, \delta} \vert Z_1, \dots, Z_n) \geq 1 - \alpha) \geq 1 - \delta\quad\equiv\quad \mathbb{P}(\mu_Z(\widehat{C}_{n,\alpha}) \ge 1-\alpha) \ge 1 - \delta.
\end{equation} The prediction set $\widehat{C}_{n, \alpha, \delta}$ is said to be $(\alpha, \delta)$-PAC if it satisfies \eqref{eq:pac_goal}. Note that an additional parameter $\delta \in (0, 1)$ is required to formulate the PAC guarantee which is absent from the conformal framework. \cite{vovk2005} shows that the inductive conformal method can also be used to construct a PAC prediction set. Other methods assume the data $\{Z_i\}_{i=1}^n$ and the test point $Z$ to be independent and identically distributed (IID) so that concentration results such as the Dvoretzky-Kiefer-Wolfowitz inequality \citep{massart1990} can be applied to achieve the PAC guarantee; see, for example, \cite{gyorfi2019} and \cite{yang2021}.

Notably, in both \eqref{eq:conformal_goal} and \eqref{eq:pac_goal}, the probabilities are taken over the observed data $\{Z_i\}_{i=1}^n$ and the test data point $Z$ while the sample size $n$ is treated as fixed. Thus, an implicit assumption of the previously described conformal and PAC formulations is that the sample size $n$ is known in advance. In settings where data are observed sequentially, this assumption often does not hold in practice. For example, if the data are expensive to obtain, a practitioner might want to stop accruing data as soon as some goal is reached, e.g. the prediction set is precise enough. 

In this paper, we extend the conformal and PAC frameworks to settings where data points are observed IID in a sequential manner and the sample size is a random quantity which might depend on the data. More concretely, consider a scenario where an analyst observes $Z_1$, reports a prediction set $\widehat{C}_{1}$ based on $Z_1$, observes $Z_2$, reports a prediction set $\widehat{C}_{2}$ based on $Z_1$ and $Z_2$, and so on until they decide to stop from some reason at time $T$. We would like to guarantee that the prediction set $\widehat{C}_T$ has appropriate coverage. If we know that the random time $T$ is larger than some $t_0 \geq 1$ with probability 1, then one can always return $\widehat{C}_{t_0, \alpha}$ ($\widehat{C}_{t_0, \alpha, \delta}$) obtained from the fixed sample size conformal (PAC) prediction method and the resulting prediction set trivially has appropriate coverage. Unfortunately, this naive method suffers from sub-optimality in that it is a trivial set for sample sizes smaller than $t_0$, and for $T$ much larger than $t_0$, the width is significantly larger than the best possible. 

Sequential construction of prediction sets is not new and has been considered in the works of~\cite{gibbs2021adaptive,gibbs2024conformal},~\cite{zaffran2022adaptive},~\cite{bhatnagar2023improved}, and~\cite{angelopoulos2024conformal}, among others. Their goal is much more ambitious than ours because the guarantee is long-run average coverage without any assumptions on the underlying data. We, on the other hand, assume IID observations and want coverage at arbitrary stopping times (not long-run coverage). After the initial inception of our method,~\cite{avelin2023} proposed a sequential inductive prediction interval (SIPI) based on a time-uniform version of the Dvoretzky-Kiefer-Wolfowitz inequality from \cite{howard2022}, to obtain appropriate coverage in a PAC sense. Their procedure also provides uniform coverage over the miscoverage levels $\alpha\in[0,1]$ and is thus sub-optimal in terms of width if one is only seeking time-uniform coverage (for a fixed $\alpha$). Additionally, their method has no conformal equivalent (i.e., the analog of~\eqref{eq:conformal_goal}). \cite{gauthier2025} consider a scenario where data arrive sequentially in batches and show how e-values can be used to construct a sequence of batch anytime-valid conformal prediction sets. This differs from our setup in that we target uniform coverage over the observations, while they target uniform coverage over the batches. Practically speaking, their approach is suited to, for example, constructing simultaneously valid prediction sets for the effect of a drug for new patients at hospitals where the drug is sequentially deployed, whereas our approach is suited to constructing simultaneously valid prediction sets for the effect of a drug for patients within a hospital as they are sequentially given the drug.

The notion of time-uniform prediction sets originates from the pragmatic requirement for sequential prediction methods that minimize memory by avoiding the storage of all prior data. In typical applications of prediction sets, the true label is revealed post-prediction, offering a chance to incorporate this new information for enhancement. When considering such iterative improvements under the assumption of IID data, time-uniform validity stands as the uniquely appropriate objective. As illustrated in Figure~\ref{fig:changing_dist}, such updating prediction sets sequentially also has an added benefit of potentially adapting to distribution shifts.

Formally, the time-uniform conformal (TUC) prediction goal is to construct a prediction set $\widehat{C}_{T, \alpha}$ based on the first $T$ observations $\left\{Z_t\right\}^T_{t=1}$ with the guarantee that \begin{equation} \label{eq:tuc_goal}\tag{TUC}
    \mathbb{P}(Z \in \widehat{C}_{T, \alpha}) \geq 1 - \alpha,\quad\mbox{or equivalently,}\quad \mathbb{E}[\mu_Z(\widehat{C}_{T,\alpha})] \ge 1 - \alpha.
\end{equation} This is analogous to \eqref{eq:conformal_goal}, with the fixed sample size $n$ replaced by the random time $T$. Without making any restrictive distributional assumptions about the random time $T$, we only know that it takes on values in $\mathbb{N}^{+}$ and, thus, in order to guarantee that $\widehat{C}_{T, \alpha}$ is a valid TUC prediction set, the best one can do is to enforce that the fixed-time prediction set $\widehat{C}_{t, \alpha}$ has adequate coverage simultaneously at every time $t$ in the full support of the random quantity $T$ (i.e. at each time $t \geq 1$). The following proposition makes clear the connection between the TUC prediction set $\widehat{C}_{T, \alpha}$ and the sequence of fixed-time prediction sets $\{\widehat{C}_{t, \alpha}\}_{t=1}^{\infty}$. \begin{proposition} \label{prop:equiv_TUC_goal}
Suppose $Z_t, Z \sim \mu_Z$ for each $t = 1, 2, \dots$. Then,
\begin{equation}
    \mathbb{P}(Z \in \widehat{C}_{T, \alpha}) \geq 1 - \alpha \text{ for all random times $T$ } \iff \mathbb{E}\left[\min_{t \geq 1} \mu_Z(\widehat{C}_{t, \alpha})\right] \geq 1 - \alpha.
\end{equation}
\end{proposition}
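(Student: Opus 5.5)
The proof rests on the identity $\mathbb{P}(Z \in \widehat{C}_{T,\alpha}) = \mathbb{E}[\mu_Z(\widehat{C}_{T,\alpha})]$. This holds because $Z$ is independent of $\{Z_t\}_{t\ge1}$ and of $T$. I will take "random time" to mean any $\mathbb{N}^+$-valued random variable measurable with respect to the data sequence (possibly with external randomization independent of $Z$), so the independence is available. Conditioning on the data and integrating out $Z$ then gives the identity. With it in hand, the statement becomes
\[
\mathbb{E}[\mu_Z(\widehat{C}_{T,\alpha})] \ge 1-\alpha \ \text{for all } T \iff \mathbb{E}\Bigl[\inf_{t\ge1}\mu_Z(\widehat{C}_{t,\alpha})\Bigr] \ge 1-\alpha.
\]
I will read $\min$ as $\inf$, since the infimum need not be attained, and I will assume the maps $\omega\mapsto \mu_Z(\widehat{C}_{t,\alpha})$ are measurable so that everything is well defined.

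For ($\Leftarrow$), fix any random time $T$. Pointwise on the sample space, $\mu_Z(\widehat{C}_{T,\alpha}) \ge \inf_{t\ge1}\mu_Z(\widehat{C}_{t,\alpha})$. Taking expectations gives $\mathbb{E}[\mu_Z(\widehat{C}_{T,\alpha})] \ge 1-\alpha$, which is the claim for this $T$.

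For ($\Rightarrow$) I argue by contraposition. Suppose $\mathbb{E}[\inf_t \mu_Z(\widehat{C}_{t,\alpha})] < 1-\alpha$, and set $\eta = 1-\alpha - \mathbb{E}[\inf_t \mu_Z(\widehat{C}_{t,\alpha})] > 0$. I then construct a random time that nearly attains the infimum. Define
\[
T_\epsilon = \min\Bigl\{t\ge 1 : \mu_Z(\widehat{C}_{t,\alpha}) \le \inf_{s\ge1}\mu_Z(\widehat{C}_{s,\alpha}) + \epsilon\Bigr\}.
\]
This set is nonempty for every $\omega$, because the infimum is finite (the values lie in $[0,1]$). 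So $T_\epsilon$ is a well-defined $\mathbb{N}^+$-valued random variable. It is measurable since $\{T_\epsilon = t\}$ is built from countably many measurable events. Then $\mathbb{E}[\mu_Z(\widehat{C}_{T_\epsilon,\alpha})] \le \mathbb{E}[\inf_t \mu_Z(\widehat{C}_{t,\alpha})] + \epsilon$. Choosing $\epsilon = \eta/2$ makes this strictly less than $1-\alpha$, which contradicts the hypothesis for $T_\epsilon$. If the infimum is attained, $\epsilon = 0$ works directly.

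The main subtlety is the class of random times. $T_\epsilon$ depends on the whole sequence, so it is not a stopping time. The proposition must therefore be read with "all random times" meaning arbitrary data-dependent times, not merely stopping times, which matches the paper's phrasing "without making any restrictive distributional assumptions about the random time $T$". I would state this reading explicitly. The other point to check is the independence of $Z$ from $(T,\{Z_t\})$ that justifies the conditioning identity, which I would note as part of the setup.
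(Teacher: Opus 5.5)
Your proof is correct and follows the paper's argument. The backward direction uses the pointwise bound $\mu_Z(\widehat{C}_{T,\alpha}) \ge \inf_{t\ge1}\mu_Z(\widehat{C}_{t,\alpha})$ after conditioning on the data, and the forward direction plugs in a data-dependent (non-stopping) time that minimizes the probability content; your $\epsilon$-near-minimizer $T_\epsilon$ additionally covers the case where the paper's $\argmin_{t\ge1}\mu_Z(\widehat{C}_{t,\alpha})$ is not attained.
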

\begin{proof}
A detailed proof is given in Section \ref{equiv_TUC_proof}.
\end{proof} 
Proposition \ref{prop:equiv_TUC_goal} implies that constructing a $(1-\alpha)$-TUC prediction set $\widehat{C}_{T, \alpha}$ is equivalent to constructing an infinite sequence of fixed-time prediction sets $\{\widehat{C}_{t, \alpha}\}_{t=1}^{\infty}$ with expected minimum probability content exceeding $1-\alpha$. Based on this reformulation of the TUC prediction goal, it is clear that it is much stronger than the usual conformal prediction goal \eqref{eq:conformal_goal} for a fixed sample size $n$ when the observations and test point are drawn IID from $\mu_Z$. The time-uniform probably approximately correct (TUPAC) prediction goal can be formulated analogously to the TUC prediction goal. See Proposition \ref{prop:equiv_TUPAC_goal} in Section \ref{proofs} for details.

To further drive home the point that the TUC prediction guarantee is much stronger than its fixed-sample-size counterpart, we illustrate how split conformal prediction sets fail to provide appropriate time-uniform coverage through a toy example. In particular, suppose that an analyst's stream of data comes from a 1-dimensional standard Normal distribution so that we know the true probability content of any prediction set $\widehat{C}_{t, \alpha}$ as well as the oracle parametric prediction set $[z_{\alpha/2}, z_{1-\alpha/2}]$ where $z_{\alpha/2}$ is the $(1-{\alpha}/{2})$-quantile of the Normal distribution. In Figure \ref{fig:numerical_illustration}, we see that the true probability content of prediction sets constructed using the split conformal prediction algorithm concentrates around the desired $1-\alpha$ coverage level. This behavior is desirable when the sample size is fixed in advance and implies that the fixed-sample-size conformal prediction guarantee is satisfied, but it violates the goal of having simultaneous coverage at every time $t \geq 1$. As suggested by Proposition \ref{prop:equiv_TUC_goal}, one would instead hope that the expected minimum probability content over all prediction sets $\widehat{C}_{t, \alpha}$ exceeds the desired coverage level $1-\alpha$. This behavior is exhibited by our proposed split TUC prediction sets constructed using Algorithm \ref{fixed_anytime_alg} in Figure \ref{fig:numerical_illustration}. Likewise, our proposed split TUPAC prediction sets as well as prediction sets built using confidence sequences (CSs) from \cite{howard2022} and the SIPIs proposed in \cite{avelin2023} satisfy an equivalent time-uniform PAC guarantee as in Proposition \ref{prop:equiv_TUPAC_goal}, though the SIPIs are much more conservative because they additionally guarantee uniform coverage over the miscoverage level $\alpha$.

\begin{figure}[h]
    \centering
    \includegraphics[width=0.9\textwidth]{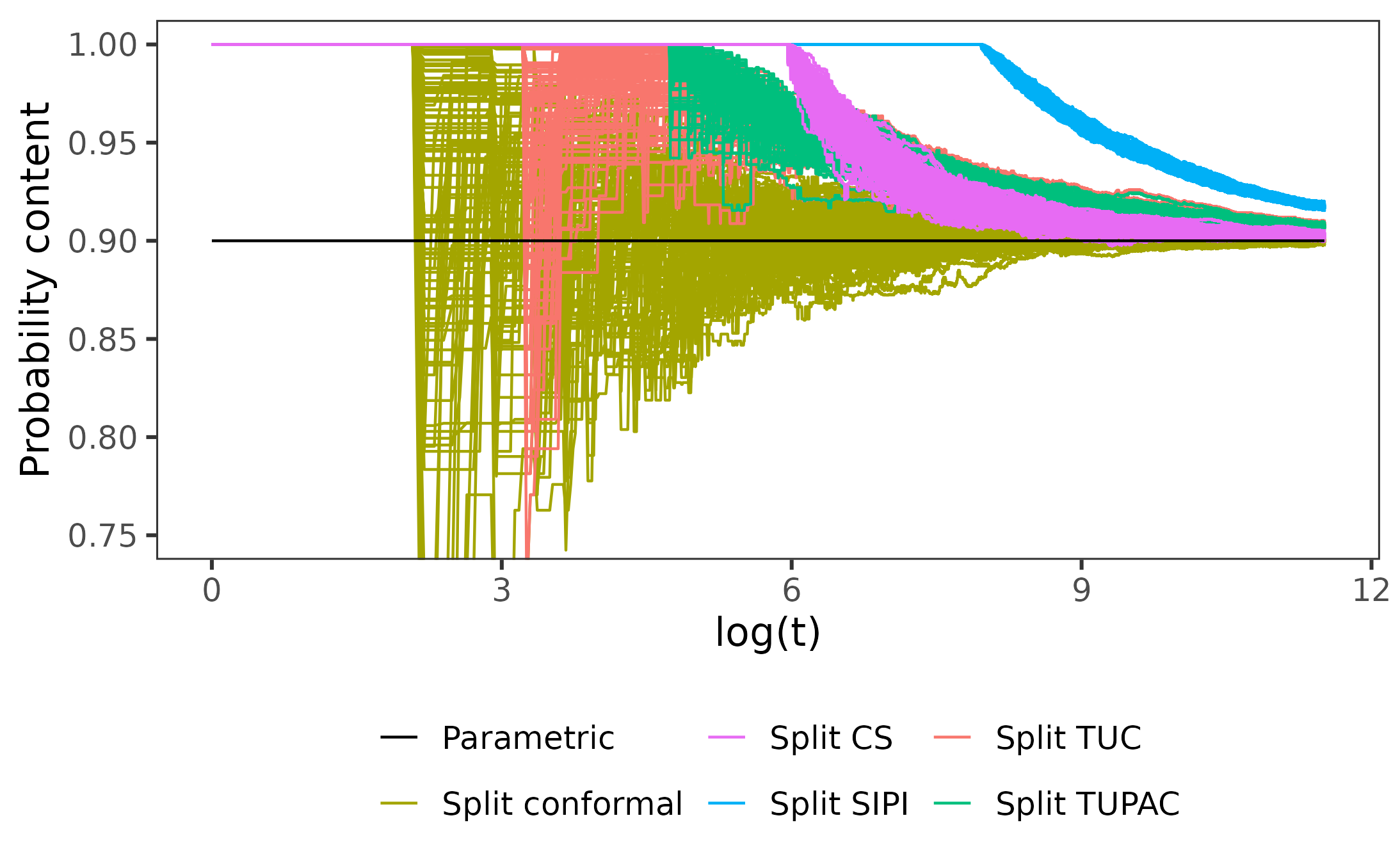}
    \caption{The true probability content of prediction sets $\{\widehat{C}_{t, \alpha, \delta}\}_{t=1}^{100,000}$ constructed using split conformal, TUC, TUPAC, CS, and SIPI algorithms over 100 replications where $\alpha = \delta = 0.1$. The prediction intervals use the transformation $R(Z) = |Z - \bar{Z}|$ (signed for the SIPI algorithm) where $\bar{Z}$ is the empirical mean based on a separate dataset of 100 standard Normal random variables drawn independently from the data stream. To compute the split TUC and TUPAC prediction sets, we take the function $h$ in Algorithm \ref{fixed_anytime_alg} to be the PMF of discretized log Normal random variables with mean 11 and variance 1, respectively (i.e. $h$ is the PMF of $Y = \lfloor X \rfloor$ where $X \sim \text{Lognormal}(11, 1)$). In computing the split CS prediction sets, we choose $u_t$ in Theorem \ref{fixed_confidence_sequence_pac_thm} using the Beta-Binomial mixture approach described in \protect\cite{howard2022}.}
    \label{fig:numerical_illustration}
\end{figure}

The empirical minimum probability content over 100 replications of 100,000 point data streams for split conformal and split TUC prediction intervals across various choices of the target coverage level $1-\alpha$ is compiled in Table \ref{tab:numerical_illustration}. While the empirical minimum probability content of split conformal prediction intervals is consistently below the desired $1-\alpha$ level, the split TUC prediction intervals are better calibrated.

\begin{table}[!h] 
\centering 
\begin{tabular}{lccc} \hline
    & \multicolumn{3}{c}{\textbf{Empirical minimum probability content}} \\
    & $1 - \alpha = 0.9$ & $1- \alpha = 0.85$ & $1- \alpha = 0.8$ \\ \hline
    \textbf{Split conformal prediction} & 0.838 (0.070) & 0.768 (0.088) & 0.684 (0.111) \\
    \textbf{Split TUC prediction} & 0.890 (0.035) & 0.836 (0.052) & 0.811 (0.001) \\ \hline
\end{tabular}
\caption{Empirical estimates of the minimum probability content of split conformal and split TUC prediction sets $\{\widehat{C}_{t, \alpha}\}_{t=1}^{100,000}$ over 100 replications.}
\label{tab:numerical_illustration}
\end{table}

The failure of split conformal prediction in a time-uniform sense stems from picking inappropriate sample quantiles to construct the prediction intervals. More specifically, the split conformal prediction algorithm for a dataset of size $n = n_1 + n_2$ involves using one portion of the observed data, say $\{Z_i\}_{i=1}^{n_1}$, to choose a 1-dimensional real-valued transformation $\widehat{R}_{n_1}: \mathcal{Z} \to \mathbb{R}$ and the other portion, say $\{Z_i\}_{i=n_1 + 1}^{n_1+n_2}$, to choose a particular sample quantile $\widehat{q}_{n_2, \alpha}$ of $\{\widehat{R}_{n_1}(Z_i)\}_{i=n_1 + 1}^{n_1 + n_2}$. The split conformal prediction set is then $\widehat{C}_{n, \alpha} := \{z: \widehat{R}_{n_1}(z) \leq \widehat{q}_{n_2, \alpha}\}$. In the same spirit as this approach, the TUC prediction sets are constructed by simply offsetting this choice of the sample quantile in order to ensure adequate time-uniform coverage. 

In Figure~\ref{fig:numerical_illustration}, our procedure involves computing the non-conformity score $\widehat{R}_{n_1}(\cdot)$ based on a fixed sample size $n_1$. This is also comparable to that of~\cite{avelin2023} and can be a source of sub-optimality, in general. For simplicity, we first discuss the construction of TUC and TUPAC prediction sets in this framework assuming a fixed independent dataset (i.e., held-out) for the non-conformity score. Later, we consider truly sequential prediction sets that can be constructed without any held-out data.

An added benefit of TUC prediction sets is that they can be built in an online fashion. This is in contrast to the split conformal prediction algorithm which processes data in batches. In particular, we propose an algorithm for constructing TUC prediction sets that is efficient in terms of both memory and computation – the analyst can update the TUC prediction set as data points are observed using fast online algorithms such as gradient descent and need not store data from previous epochs. In practice, the online nature of the proposed algorithm also lends itself to adaptivity when the distribution of the observations changes over time. That is, while a batch-trained model would need to be retrained from scratch in order to maintain its predictive accuracy (which is often costly and impractical), a model which is learned online is updated as more data are accumulated, allowing it to adjust to change points or shifts in the distribution; see Figure \ref{fig:changing_dist} for a simulated example of how the validity of our proposed online TUC prediction sets recovers when there is a change point in the observed data distribution. Our online uncertainty quantification methods are designed to wrap around online training algorithms and to inherit their adaptive properties. See \cite{gama2014}, \cite{lu2019}, and \cite{hoi2021} for a review of supervised learning algorithms for handling so-called concept drift.

\paragraph{Notation and assumptions} For the remainder of the paper, we will assume that an analyst observes a stream of independent data points $Z_1, Z_2, \dots$ where $Z_t \in \mathcal{Z}$ and $Z_t \sim \mu_Z$ for each $t = 1, 2, \dots$. In addition, we take $Z \in \mathcal{Z}$ to be an independent test point coming from the same distribution. The IID assumption is crucial because it allows us to reformulate the TUC and TUPAC goals in terms of the common probability measure $\mu_Z$. We use $T$ to denote a time chosen by the analyst at which to stop. In particular, $T$ is a random variable taking values in $\mathbb{N}^+$ whereas $t$ is taken to represent general fixed time points. We make no assumptions about the distribution of $T$.

\paragraph{Organization} The rest of the paper is organized as follows. In Section \ref{fixed_transformation_sec}, we introduce a simplified version of the TUC and TUPAC problems in which we assume access to some fixed transformation of the data, or equivalently, a held-out dataset for constructing non-conformity score. We present an algorithm for constructing time-uniform prediction sets and provide theoretical conformal and PAC guarantees in this setting. In Section \ref{dynamic_transformation_sec}, we relax the need to have access to a transformation ahead of time by allowing the transformation to change as we accrue more data. We present a time-uniform prediction set algorithm as well as theoretical results in this setting. In Section \ref{width_sec}, we show that our proposed time-uniform prediction sets are optimal in the sense that their widths converge to that of the oracle prediction set asymptotically. We present simulated and real-data experiments in Sections \ref{experiments_sec} and \ref{application_sec} that support the validity and utility of our approach. Finally, we conclude with a discussion of the results and propose directions for future work in Section \ref{discussion}.

Code for reproducing our experiments can be found at \hyperlink{https://github.com/kscharfs/time-uniform-conformal}{https://github.com/kscharfs/time-uniform-conformal}.

\section{Split time-uniform conformal and PAC prediction} \label{fixed_transformation_sec}

Recall that the time-uniform conformal (TUC) prediction goal is to construct a prediction set $\widehat{C}_{T, \alpha}$ based on the first $T$ observations with the guarantee of~\eqref{eq:tuc_goal}.
In an asymptotic sense, one can interpret \eqref{eq:tuc_goal} as \begin{equation*}
    \lim_{s \to \infty} \frac{1}{s}\sum_{t=1}^s \mathbbm{1}\left\{Z^*_t \in \widehat{C}_{T, \alpha}\right\} \geq 1 - \alpha,
\end{equation*} where $Z^*_1, \dots, Z^*_s$ are independent random variables from $\mu_Z$ independent of $\{\widehat{{C}}_{t,\alpha}\}_{t\ge 0}$. Thus, the TUC prediction guarantee is that if an analyst decides for some reason to stop and report $\widehat{C}_{T, \alpha}$ at time $T$, then approximately $100(1-\alpha)\%$ of all future observations will lie in $\widehat{C}_{T, \alpha}$.

Guarantee \eqref{eq:tuc_goal} differs from the usual conformal prediction guarantee \eqref{eq:conformal_goal} in that the number of observations is taken to be random and we have seen through Proposition \ref{prop:equiv_TUC_goal} that it is a much stronger guarantee. Despite this discrepancy, we can still use methods for the vanilla conformal approach to motivate our time-uniform approach. The key idea behind methods such as split conformal prediction is that we can reformulate \eqref{eq:conformal_goal} by introducing a 1-dimensional transformation of the data and then exploit a well-known result about the sample quantile to guarantee finite-sample coverage. In particular, notice that if we let $R: \mathcal{Z} \to \mathbb{R}$ be a real-valued transformation, then we can rewrite \eqref{eq:conformal_goal} as finding $\widehat{q}_{n, \alpha}$ based on $\{Z_i\}_{i=1}^n$ so that 
  $  \mathbb{P}\left(R(Z) \leq \widehat{q}_{n, \alpha}\right) \geq 1 - \alpha,$
since we can simply take $\widehat{C}_{n, \alpha} = \left\{z : R(z) \leq \widehat{q}_{n, \alpha}\right\}$. Defining $\widehat{F}_n(r) = n^{-1}\sum_{i=1}^n \mathbbm{1}\{R(Z_i) \leq r\}$ and choosing $\widehat{q}_{n, \alpha} = \inf\{r: n\widehat{F}_n(r)/(n+1) \geq 1-\alpha\}$ yields the desired coverage guarantee. Note that here we treat $R$ as fixed, but in practice, one could choose the transformation based on an independent dataset and obtain coverage results conditional on this dataset. 

Despite the fact that $T$ is random, we can apply the same idea by taking $\widehat{C}_{T, \alpha} = \left\{z: R(z) \leq \widehat{q}_{T, \alpha}\right\}$ given some fixed transformation $R$. It remains to choose an appropriate sample quantile of $\{R(Z_t)\}_{t=1}^T$ as $\widehat{q}_{T, \alpha}$. This motivates the following algorithm in which we abuse notation by dropping the indices $\alpha$ and $\delta$ in denoting the prediction sets and sample quantiles. We call this algorithm split TUC/TUPAC prediction. The algorithm is essentially the same as the split conformal prediction algorithm except that a more conservative sample quantile is chosen to construct the prediction set at each iteration.

\begin{algorithm} 
\caption{Split TUC/TUPAC/CS prediction} \label{fixed_anytime_alg}
\begin{algorithmic} 
\Require Transformation $R$; coverage level $1-\alpha \in (0, 1)$; $1-\delta \in (0, 1)$ if TUPAC variant. 
\Ensure A sequence of valid prediction sets $\widehat{C}_{1}, \widehat{C}_{2}, \dots$.
\State Let $t=0$. 
\While{the analyst wants to continue at time $t$}
    \State Set $t = t + 1$.
    \State Observe $Z_t$.
    \State Compute $R(Z_t)$.
    \State Compute $\widehat{q}_t$. \Comment{as defined in Theorems \ref{fixed_anytime_conformal_thm}, \ref{fixed_anytime_pac_thm}, or \ref{fixed_confidence_sequence_pac_thm}}.
    \State Report $\widehat{C}_{t} = \left\{z: R(z) \leq \widehat{q}_{t}\right\}$ as a prediction set for $Z_{t+1}$.
\EndWhile
\end{algorithmic}
\end{algorithm}

We consider two approaches for choosing $\widehat{q}_t$. First, we apply recent work on confidence sequences for a fixed quantile from \cite{howard2022} to achieve the TUPAC guarantee as in Theorem \ref{fixed_confidence_sequence_pac_thm}. Recall that a prediction set $\widehat{C}_{T, \alpha, \delta}$ is said to be $(\alpha, \delta)$-TUPAC if \begin{equation} \label{eq:anytime_pac_goal}\tag{TUPAC}
    \mathbb{P}\left\{\mathbb{P}\left(Z \in \widehat{C}_{T, \alpha, \delta} \vert Z_1, \dots, Z_T\right) \geq 1 - \alpha\right\} \geq 1 - \delta.
\end{equation} This is similar to the approach taken by \cite{avelin2023} except that they use confidence sequences for all $\alpha$ simultaneously; the resulting prediction sets are consequently larger than necessary if one is only seeking time-uniform coverage (with a fixed $\alpha$). Though confidence sequences can be used to obtain the TUPAC guarantee, it remains unclear how to achieve the TUC guarantee using such machinery. As such, we consider an alternative approach for choosing $\widehat{q}_t$ which relies on the fact that if $\widehat{q}_t$ is taken to be a sample quantile, then the probability content of the prediction set has the same distribution as a particular standard uniform order statistic. We then use a simple union bound over all $t \geq 0$ to choose an appropriate sample quantile. This approach yields a TUC prediction set when the sample quantile is chosen as in Theorem \ref{fixed_anytime_conformal_thm} and a TUPAC prediction set when the sample quantile is chosen as in Theorem \ref{fixed_anytime_pac_thm}.

\begin{theorem} \label{fixed_confidence_sequence_pac_thm}
Define \begin{equation*}
    \ell_t = \frac{1.4\log \log(2.1t) + \log\left({10}/{\delta}\right)}{t}, \quad u_t = 1.5\sqrt{\alpha(1-\alpha)\ell_t} + 0.8\ell_t
\end{equation*} and take $\widehat{q}_t = \inf\{r: \widehat{F}_t(r) \geq 1-\alpha + u_t\}.$ Then, Algorithm \ref{fixed_anytime_alg} produces a sequence of prediction sets satisfying \eqref{eq:anytime_pac_goal}.
\end{theorem}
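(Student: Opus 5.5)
Fix $R$ and let $F$ denote the CDF of $R(Z)$. The prediction set is $\widehat C_t=\{z:R(z)\le\widehat q_t\}$, so its conditional coverage is $\mu_Z(\widehat C_t)=F(\widehat q_t)$. By the TUPAC analogue of Proposition~\ref{prop:equiv_TUC_goal} (Proposition~\ref{prop:equiv_TUPAC_goal}), it suffices to prove
\[
\mathbb{P}\bigl(F(\widehat q_t)\ge 1-\alpha \text{ for all } t\ge 1\bigr)\ge 1-\delta .
\]
The required event holds for an arbitrary random time $T$ whenever it holds simultaneously for all $t$. So the task reduces to a time-uniform statement about empirical CDFs evaluated at a single fixed point.

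\textbf{Reduction to one fixed point.} Let $q^*=\inf\{r:F(r)\ge 1-\alpha\}$ be the population $(1-\alpha)$-quantile, and suppose $F(\widehat q_t)<1-\alpha$. Then $\widehat q_t<q^*$. The set $\{r:F(r)<1-\alpha\}$ is a half-line $(-\infty,q^*)$, and $\widehat F_t$ is monotone. Combining these with the definition of $\widehat q_t$ gives $\widehat F_t(\widehat q_t)\ge 1-\alpha+u_t$, hence
\[
\widehat F_t(q^{*-})\ge 1-\alpha+u_t, \qquad \widehat F_t(q^{*-})=t^{-1}\sum_{i\le t}\mathbbm 1\{R(Z_i)<q^*\}.
\]
The indicators $\mathbbm 1\{R(Z_i)<q^*\}$ are IID Bernoulli with mean $p=F(q^{*-})\le 1-\alpha$. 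So a miscoverage at some time $t$ forces the running mean $S_t/t$ of these Bernoullis to exceed $1-\alpha+u_t\ge p+u_t$.

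\textbf{Uniform-in-time Bernoulli bound.} It remains to show
\[
\mathbb{P}\bigl(\exists t:\ S_t-tp\ge t u_t\bigr)\le\delta .
\]
For this I would invoke the polynomial stitched boundary of \cite{howard2022} for sub-Bernoulli (equivalently sub-gamma) increments. Its closed form, with the stated constants $1.4$, $\log\log(2.1t)$, $\log(10/\delta)$, $1.5$ and $0.8$, is
\[
t u_t = 1.5\sqrt{t\,v\,\ell_t' }+0.8\,\ell_t', \qquad v=\alpha(1-\alpha),
\]
where $\ell_t'=t\ell_t$. This is exactly the form appearing there (their Eq.~(10)-type bound) with variance proxy $v$. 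Two points need checking.

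First, the variance proxy. The centered increment $\mathbbm 1\{\cdot\}-p$ is sub-Bernoulli with variance $p(1-p)$, not $\alpha(1-\alpha)$. If $p<1-\alpha$, I would handle this by noting that the event only becomes more likely as $p$ increases. Because of this monotonicity, a coupling lets me replace the indicators by Bernoulli$(1-\alpha)$ variables that dominate them pointwise (say via uniforms $U_i$ and thresholds). Then $S_t\le S_t'$ with $S_t'\sim$ Bin$(t,1-\alpha)$, and the bound applies with variance $(1-\alpha)\alpha$.

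Second, the range. The scale constant $c$ in the sub-gamma/sub-Bernoulli condition must be handled, and I would read off from \cite{howard2022} that the constants $1.5$ and $0.8$ absorb it. If $1-\alpha+u_t\ge 1$, the event is empty at that $t$ and the bound holds trivially.

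\textbf{Main obstacle.} The main obstacle is matching the constants exactly to a stated theorem in \cite{howard2022}, including the correct Bernoulli scale parameter, and carrying out the coupling argument rigorously when $F$ has atoms at $q^*$. Given these, the conclusion follows: with probability at least $1-\delta$, $F(\widehat q_t)\ge 1-\alpha$ for all $t$, and hence for $t=T$. This yields \eqref{eq:anytime_pac_goal}.
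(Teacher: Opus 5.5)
Your proposal is correct and follows the paper's route: reduce via Proposition~\ref{prop:equiv_TUPAC_goal} to $\mathbb{P}(F(\widehat q_t)\ge 1-\alpha \text{ for all } t)\ge 1-\delta$, then invoke \cite{howard2022}. The only difference is depth. The paper cites their Theorem~1 directly as the one-sided quantile confidence sequence $q^*\le\widehat q_t$ for all $t$. You instead unpack it into the binomial crossing at the fixed point $q^{*-}$, the coupling to Bernoulli$(1-\alpha)$ increments, and the stitched boundary, which is essentially how that theorem is proved; this also makes explicit the handling of atoms and that only one side is needed, leaving a spare factor of two in $\log(10/\delta)$.
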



Note that \cite{howard2022} also provides another approach for choosing $u_t$ in Theorem \ref{fixed_confidence_sequence_pac_thm}, albeit not in closed form. This alternative approach yields a larger $u_t = O(\sqrt{{\log(t)}/{t}})$ asymptotically, but often performs better in practice because the asymptotically optimal $u_t$ sacrifices precision at smaller sample sizes in order to gain tightness in the long run. Our union bounding approach (below) yields a similar asymptotically suboptimal $u_t$ that has similar practical advantages. In addition, we provide a mechanism for the analyst to choose sample sizes that are practically relevant; in Theorems \ref{fixed_anytime_conformal_thm} and \ref{fixed_anytime_pac_thm}, the function $h$ allocates the $\alpha$-budget in order to achieve time-uniform coverage. For instance, if one anticipates selecting a random time $T$ near $t=100$, $h(\cdot)$ could be designed to peak around 100, potentially using a Poisson$(100)$ probability mass function.

\begin{theorem} \label{fixed_anytime_conformal_thm}
Define \begin{equation*}
    u_t = \frac{4(2\alpha - 1)\log\left({1}/{h(t)}\right)}{3(t+3)} + \sqrt{\frac{2\alpha(1-
    \alpha)\log\left({1}/{h(t)}\right)}{t+2}} + \frac{1}{2}\sqrt{\frac{2\pi\alpha(1-\alpha)}{t+2}}\left(1 - \sum_{s=0}^{t_0} h(s)\right)
\end{equation*} and take $\widehat{q}_t = \inf\{r: \frac{t}{t+1}\widehat{F}_t(r) \geq 1-\alpha + u_t\}$ where $t_0$ is the smallest natural number such that $\widehat{q}_t$ is finite for all $t > t_0$ and $h: \mathbb{N} \to \mathbb{R}^{\geq 0}$ is a function satisfying $\sum_{t=0}^{\infty} h(t) = 1$.
Then, Algorithm \ref{fixed_anytime_alg} produces a sequence of prediction sets satisfying \eqref{eq:tuc_goal}. 
\end{theorem}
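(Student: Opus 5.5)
By Proposition~\ref{prop:equiv_TUC_goal}, it suffices to show that $\mathbb{E}[\min_{t\ge1}\mu_Z(\widehat{C}_t)]\ge 1-\alpha$. We first reduce to uniforms. Let $F$ be the CDF of $R(Z)$. Standard arguments (randomization or the probability integral transform, treating ties conservatively) let us assume $U_t=F(R(Z_t))$ are IID $\mathrm{Unif}(0,1)$. Then $\widehat{q}_t$ is the $k_t$-th order statistic of $R(Z_1),\dots,R(Z_t)$, where $k_t=\lceil (t+1)(1-\alpha+u_t)\rceil$. So $\mu_Z(\widehat{C}_t)=F(\widehat{q}_t)\ge U_{(k_t:t)}$, which is distributed as $\mathrm{Beta}(k_t,t+1-k_t)$. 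For $t\le t_0$ the set is trivial, so its content is $1$. Writing $M=\min_t \mu_Z(\widehat{C}_t)$ and $D_t=(1-\alpha)-U_{(k_t:t)}$, we get
\begin{equation*}
1-\alpha-\mathbb{E}[M]\le \mathbb{E}\Big[\max_{t>t_0}D_t^+\Big]\le \sum_{t>t_0}\mathbb{E}[D_t^+]\,\mathbbm{1}\{\cdot\}
\end{equation*}
as the crude version, which is too lossy. The plan is to allocate via $h$ instead.

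Concretely, we write $\mathbb{E}[\max_t D_t^+]=\int_0^\infty \mathbb{P}(\max_t D_t>x)\,dx$ and union bound: $\mathbb{P}(\max_t D_t>x)\le \sum_{t>t_0}\mathbb{P}(D_t>x)$. The goal is to show this is at most $1-\alpha$ minus $\mathbb{E}[M]$ deficit $\le 0$. The right target is to show that for each $t$, $\mathbb{E}[U_{(k_t:t)}]$ exceeds $1-\alpha$ by enough. Specifically, $U_{(k_t:t)}\ge 1-\alpha$ except on an event of probability at most $h(t)$, and the expected shortfall on that event is controlled. We use a Bernstein-type tail for the Beta/binomial. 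The event $\{U_{(k_t:t)}<1-\alpha\}$ equals $\{\mathrm{Bin}(t,1-\alpha)\ge k_t\}$. Since $k_t/t\approx 1-\alpha+u_t$, Bernstein's inequality with variance $\alpha(1-\alpha)$ and range term $(2\alpha-1)/3$ gives probability at most $\exp\{-t u^2/(2\alpha(1-\alpha)+\tfrac{2}{3}(2\alpha-1)u)\}$. Solving for $u$ so that this is at most $h(t)$ yields exactly the first two terms of $u_t$, namely $\tfrac{4(2\alpha-1)\log(1/h)}{3(t+3)}+\sqrt{2\alpha(1-\alpha)\log(1/h)/(t+2)}$, up to the $t+2$, $t+3$ shifts coming from the $\tfrac{t}{t+1}$ normalization and the Beta parameters.

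The cleaner route, which I would actually follow, is a decomposition rather than a pure tail bound. Let $E_t=\{U_{(k_t:t)}<1-\alpha\}$, so that $\mathbb{P}(E_t)\le h(t)$ by the Bernstein step. On $\bigcap_t E_t^c$ we have $M\ge 1-\alpha$. Hence
\begin{equation*}
\mathbb{E}[M]\ge (1-\alpha)\Big(1-\sum_{t>t_0}h(t)\Big)+\text{compensation},
\end{equation*}
and the loss of size $\sum_{t>t_0}h(t)=1-\sum_{s\le t_0}h(s)$ must be paid back. This is where the third term of $u_t$ enters. The extra shift $\tfrac12\sqrt{2\pi\alpha(1-\alpha)/(t+2)}\,(1-\sum_{s\le t_0}h(s))$ raises the mean of each $U_{(k_t:t)}$. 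Using $\mathbb{E}|{\rm Beta}-\text{mean}|\approx \sqrt{2/\pi}\,\mathrm{sd}$ and $\mathrm{sd}\approx\sqrt{\alpha(1-\alpha)/(t+2)}$, this shift ensures $\mathbb{E}[(1-\alpha-U_{(k_t:t)})^+]\le$ the gained margin. So we bound
\begin{equation*}
1-\alpha-\mathbb{E}[M]\le \mathbb{E}\Big[\sum_t (1-\alpha-U_{(k_t:t)})^+\mathbbm{1}_{E_t}\Big]
\end{equation*}
and show that the total is nonpositive after accounting for $\mathbb{E}[M]\ge \mathbb{E}[U_{(k_{t^*}:t^*)}]$ on the complementary event.

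The main obstacle is this last bookkeeping. We must turn the union bound over times into a bound on the expected minimum, and check that the mean-absolute-deviation constant $\tfrac12\sqrt{2\pi}$ together with the Beta variance at parameters $(k_t,t+1-k_t)$ exactly offsets the mass $1-\sum_{s\le t_0}h(s)$. I would first try to bound $\mathbb{E}[(1-\alpha-U_{(k_t:t)})^+]$ directly by Gaussian comparison of the Beta distribution, with the shift $u_t$ built in. If the constants do not close, I would fall back on a Bernstein-integrated tail bound, $\int_0^\infty \mathbb{P}(D_t>x)\,dx$ with $h(t)$-weighted thresholds, and verify the inequality numerically-inspired term by term.
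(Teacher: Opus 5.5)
\textbf{There is a genuine gap, and it is the step you defer as ``bookkeeping''.} Both of your decompositions bound the deficit $1-\alpha-\mathbb{E}[M]$ from above by strictly positive quantities, namely $\mathbb{E}[\max_t D_t^+]$ or $\mathbb{E}[\sum_t (1-\alpha-U_{(k_t:t)})^+\mathbbm{1}_{E_t}]$. So neither can ever show the deficit is nonpositive. Any compensation would have to come from the good event $\bigcap_t E_t^c$. Your appeal to $\mathbb{E}[M]\ge \mathbb{E}[U_{(k_{t^*}:t^*)}]$ points the wrong way, since the minimum lies below every $U_{(k_t:t)}$.

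Worse, when $u_t\to 0$ there is no compensation at all. This covers, e.g., $h(t)\propto (t+1)^{-2}$ and the discretized log-normal $h$ used in the experiments. Then $k_t/t\to 1-\alpha$. All $U_{(k_t:t)}$ are order statistics of one i.i.d.\ sequence, so $U_{(k_t:t)}\to 1-\alpha$ almost surely by Glivenko--Cantelli. Hence $M\le \liminf_t U_{(k_t:t)}=1-\alpha$ almost surely. On your good event $M$ therefore equals $1-\alpha$ exactly. On the positive-probability event $\{U_{(k_{t_0+1}:t_0+1)}<1-\alpha\}$ it is strictly smaller. So $\mathbb{E}[\inf_{t>t_0}U_{(k_t:t)}]<1-\alpha$.

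For continuous $F_R$, where $\mu_Z(\widehat{C}_t)=U_{(k_t:t)}$ exactly, this means the criterion of Proposition~\ref{prop:equiv_TUC_goal} fails. No Gaussian comparison, mean-absolute-deviation constant, or integrated Bernstein tail can close the argument in that regime. Your coupling reduction to a single uniform sequence is fine. So is your union bound $\mathbb{P}(\bigcup_{t>t_0}E_t)\le\sum_{t>t_0}h(t)$, which is the kind of probability statement behind Theorem~\ref{fixed_anytime_pac_thm} and needs no compensation.

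\textbf{The paper's proof shares your first half.} It applies Skorski's Bernstein bound to $U_{(k_t:t)}\sim\mathrm{Beta}(k_t,t+1-k_t)$, centered at its mean $k_t/(t+1)$. It chooses levels so each tail is $e^{-\delta^2}h(t)$, union-bounds over $t>t_0$, and integrates using $\int_0^\infty e^{-\delta^2}\,d\delta=\sqrt{\pi}/2$. That integral, multiplied by $\sqrt{2\alpha(1-\alpha)/(t+2)}$ and by $1-\sum_{s\le t_0}h(s)$, is the actual origin of the third term of $u_t$.

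The paper then ``rearranges'' a bound on $\mathbb{E}[\max_{t>t_0} a_t(\cdot)]$, with $a_t=\sqrt{(t+2)/(2\alpha(1-\alpha))}$, into $\mathbb{E}[\max_{t>t_0}\{k_t/(t+1)-U_{(k_t:t)}-u_t\}]\le 0$. It concludes via $\mathbb{E}[\min_t U_{(k_t:t)}]\ge\min_t\{k_t/(t+1)-u_t\}$. That rearrangement is not valid, because a $t$-dependent factor cannot be pulled out of a maximum. Its conclusion is also false when $u_t\to 0$: the terms $k_t/(t+1)-U_{(k_t:t)}-u_t$ tend to $0$ almost surely, so their supremum is nonnegative almost surely and positive with positive probability.

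So the obstacle you ran into is real and is shared by the paper. A valid statement of this form necessarily requires $\liminf_t u_t>0$, i.e.\ asymptotically conservative coverage. It also needs a correct passage from the union-bounded tail to the expected minimum.
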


\begin{proof}
The full proof is given in Section \ref{fixed_anytime_conformal_proof}. The key idea is to apply Proposition \ref{prop:equiv_TUC_goal} in order to rewrite the TUC guarantee in terms of the expected minimum probability content over an infinite sequence of fixed-time prediction sets. From here, one can show that if $\widehat{q}_t \equiv \widehat{q}_{t, \alpha}$ is taken to be a sample quantile, then the probability content of the prediction set has the same distribution as a particular standard uniform order statistic. Using the well-known results about the Beta distribution, in particular Theorem 1 from \cite{skorski2023}, and a union bounding argument over time, we prove that $\mathbb{E}[\min_{t > t_0} U_{\lceil (t+1)(1-\alpha + u_t)\rceil:t}] \geq 1-\alpha$, where $U_{s:t}$ represents the $s$-th order statistics among the first $t$ in an infinite sequence of standard uniform random variables.  
\end{proof}

Theorem \ref{fixed_anytime_pac_thm} uses union bounding to give an analogous theoretical guarantee for Algorithm \ref{fixed_anytime_alg} in the context of the PAC framework. Let $\psi:[0,1]^2 \to\mathbb{R}$ represent the KL divergence between two Bernoulli random variables, i.e., 
$\psi(x, p) := p\log\left({p}/{x}\right) + (1 - p)\log\left({(1-p)}/{(1-x)}\right).$
\begin{theorem} \label{fixed_anytime_pac_thm} Define \begin{equation*}
    u_t = \frac{\log\left(\frac{1}{\delta}\left(1 - \sum_{s=0}^{t_0} h(s)\right)\right) - \log h(t)}{t+1}
\end{equation*} and take $\widehat{q}_t = \inf\{r: \psi(1-\alpha, \frac{t}{t+1}\widehat{F}_t(r)) \geq u_t \text{ and } t\widehat{F}_t(r) \geq (1-\alpha)(t+1)\}$ where $t_0$ is the smallest natural number such that $\widehat{q}_t$ is finite for all $t > t_0$ and $h: \mathbb{N} \to \mathbb{R}^{\geq 0}$ is a function satisfying $\sum_{t=0}^{\infty} h(t) = 1$. Then, Algorithm \ref{fixed_anytime_alg} produces a sequence of prediction sets satisfying \eqref{eq:anytime_pac_goal}.
\end{theorem}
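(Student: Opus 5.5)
We reduce the \eqref{eq:anytime_pac_goal} guarantee to a statement about the sequence of fixed-time prediction sets, and then take a union bound over $t > t_0$ with the budget allocated by $h$. By the PAC analogue of Proposition \ref{prop:equiv_TUC_goal} (Proposition \ref{prop:equiv_TUPAC_goal}), it suffices to show
\[
\mathbb{P}\Bigl(\min_{t > t_0} \mu_Z(\widehat{C}_t) \ge 1-\alpha\Bigr) \ge 1-\delta,
\]
equivalently $\mathbb{P}(\exists t > t_0 : \mu_Z(\widehat{C}_t) < 1-\alpha) \le \delta$. For $t \le t_0$ we use the trivial set: $\widehat{q}_t = \infty$ and $\widehat{C}_t = \mathcal{Z}$, which has content $1$.

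Fix $t > t_0$. Because $\widehat{q}_t$ is a sample quantile of $R(Z_1),\dots,R(Z_t)$, let $k_t$ be the smallest integer $k$ such that $\psi(1-\alpha, k/(t+1)) \ge u_t$ and $k \ge (1-\alpha)(t+1)$. Then $\widehat{q}_t = R_{(k_t)}$, the $k_t$-th order statistic. Note that $k_t$ is deterministic.

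By the probability integral transform, $\mu_Z(\widehat{C}_t) = F(R_{(k_t)})$ stochastically dominates $U_{k_t:t} \sim \mathrm{Beta}(k_t, t-k_t+1)$. Ties or atoms in $R$ only increase the content, and this can be handled by the usual randomization or a monotonicity argument. Hence
\[
\mathbb{P}(\mu_Z(\widehat{C}_t) < 1-\alpha) \le \mathbb{P}(U_{k_t:t} < 1-\alpha) = \mathbb{P}\bigl(\mathrm{Bin}(t, 1-\alpha) \ge k_t\bigr).
\]

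Next we bound this binomial tail with the Chernoff--KL inequality. Since $k_t/t \ge k_t/(t+1) \ge 1-\alpha$,
\[
\mathbb{P}(\mathrm{Bin}(t,1-\alpha) \ge k_t) \le \exp\bigl(-t\,\psi(1-\alpha, k_t/t)\bigr).
\]
We then want to compare this with $\exp(-(t+1)\psi(1-\alpha, k_t/(t+1)))$, which is at most $\exp(-(t+1)u_t)$ by the choice of $k_t$. This comparison is the main obstacle. The map $s \mapsto s\,\psi(p, k/s)$ for $s \ge k/p$ has to be shown to be nonincreasing, or else we need a slightly sharper tail bound directly in terms of $t+1$.

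The route we would try first is the Beta tail itself. For the Beta distribution, $\mathbb{P}(\mathrm{Beta}(k, t-k+1) \le x) \le \exp(-(t+1)\psi(x, k/(t+1)))$ for $x \le k/(t+1)$. This should follow from a Chernoff bound on the Beta, or from writing the Beta as a ratio of Gamma variables, where the $(t+1)$ normalization appears naturally. The constraint $k_t \ge (1-\alpha)(t+1)$ in the definition of $\widehat{q}_t$ is exactly what places us in the regime $x = 1-\alpha \le k_t/(t+1)$ where this bound applies. If that fails, we would fall back on the monotonicity argument with a careful derivative computation.

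With the per-time bound $\exp(-(t+1)u_t)$ in hand, the plug-in is mechanical. From the definition of $u_t$,
\[
\exp(-(t+1)u_t) = \frac{\delta\, h(t)}{1 - \sum_{s=0}^{t_0} h(s)}.
\]
Summing over $t > t_0$ gives $\delta \sum_{t>t_0} h(t) \big/ \sum_{t>t_0} h(t) = \delta$, using $\sum_t h(t) = 1$. This proves the uniform event, and hence \eqref{eq:anytime_pac_goal}.

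One subtlety remains: $t_0$ depends on $h$, $\alpha$ and $\delta$ but not on the data, since $k_t$ is deterministic. So the union bound over the deterministic index set $\{t > t_0\}$ is legitimate.
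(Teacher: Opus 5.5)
Your proposal is correct and follows the paper's proof. Like the paper, you reduce via Proposition~\ref{prop:equiv_TUPAC_goal}, write $\widehat{q}_t$ as the order statistic of deterministic rank $k_t$, and bound $\mathbb{P}(U_{k_t:t}<1-\alpha)$ by $e^{-(t+1)\psi(1-\alpha,k_t/(t+1))}$, which the paper takes from Proposition 2.1 of \cite{dumbgen1998}. Your Gamma-ratio Chernoff argument reproduces that bound exactly, with optimal $\lambda=(p-x)/(x(1-x))$. You then finish, as the paper does, with the $h$-weighted union bound over $t>t_0$.

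Your fallback also works, but you stated its range backwards: it should be $s\le k/p$, not $s\ge k/p$. On that range $\frac{d}{ds}\bigl[s\,\psi(p,k/s)\bigr]=\log\frac{1-k/s}{1-p}\le 0$. Hence the plain binomial Chernoff bound $e^{-t\psi(1-\alpha,k_t/t)}$ is already at least as sharp as the $(t+1)$-scaled one.
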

\begin{proof}
The proof is given in Section \ref{fixed_anytime_pac_proof} and follows from applying Proposition \ref{prop:equiv_TUPAC_goal} and then using a similar argument to that for Theorem \ref{fixed_anytime_conformal_thm} so that a well-known result about the Beta distribution, in particular Proposition 2.1 from \cite{dumbgen1998}, can be applied to conclude the proof.
\end{proof}

Note that as $t\to\infty$, $\widehat{q}_t$ from Theorems \ref{fixed_confidence_sequence_pac_thm}, \ref{fixed_anytime_conformal_thm}, and \ref{fixed_anytime_pac_thm} approach the true $(1-\alpha)-$th quantile as expected. Since traditional split conformal prediction also relies asymptotically on this empirical quantile, we expect TUC/TUPAC and split conformal methods to exhibit similar width properties in the limit. This intuition is formalized in Section~\ref{width_sec}.

\section{Online time-uniform conformal and PAC prediction} \label{dynamic_transformation_sec}
While Algorithm~\ref{fixed_anytime_alg} provides sequential prediction sets satisfying TUC and TUPAC guarantees, it is bound by a fixed transformation that is implausible to be available in practice. For example, the optimal prediction set for a random variable $Z$ is given by $\{z:\, p_Z(z) \ge t_{\alpha}\}$ where $p_Z(z)$ is the density of $Z$ and $t_{\alpha}$ is some threshold to guarantee $1-\alpha$ coverage. With a fixed amount of data, one cannot learn $p_Z(\cdot)$ accurately, and so, Algorithm~\ref{fixed_anytime_alg} cannot be optimal no matter how many observations are used for quantile estimation.

To remedy this sub-optimality, we need the transformation to converge to the optimal one in an asymptotic sense. In addition, in settings where data are expensive to store, it is preferable to be able to construct prediction sets in an online fashion so that previous observations can be discarded. As such, we adapt the results from the previous section to allow for a dynamic transformation that can be updated as additional data points are observed.

Before proposing the algorithm, we will first provide some intuition. Consider splitting time into geometric epochs as 
    $\{t \geq 1\} = \cup_{k \geq 0} \{s: \eta^k \leq s < \eta^{k+1}\}$
for some $\eta > 1$. The algorithm is best understood in two stages. For time $t \in [\eta^k, \eta^{k+1})$, we use a transformation computed from data up to time $s < \eta^k$ and apply the sequential empirical quantile method, as described in Algorithm~\ref{fixed_anytime_alg}. Once data up to time $\eta^{k+1}$ is observed, we update the transformation using all data available up to $\eta^{k+1}$; this can be done in the backend through a sequential procedure (such as SGD) without storing all the data between $\eta^k$ and $\eta^{k+1}$.

In theory, this algorithm ensures TUC/TUPAC coverage guarantees. However, for $t \approx \eta^k, k\ge1$, the algorithm tends to produce significantly larger prediction sets due to insufficient data for accurately estimating the empirical quantile. To address this, we allow the analyst to revert to the transformation from the previous epoch if it results in a smaller prediction set. Figure \ref{fig:dynamic_transformation_illustration} and Algorithm~\ref{alg:dynamic_algo-illustration} below illustrate this dynamic setup.

\begin{figure}
    \centering
    \includegraphics[width=\textwidth]{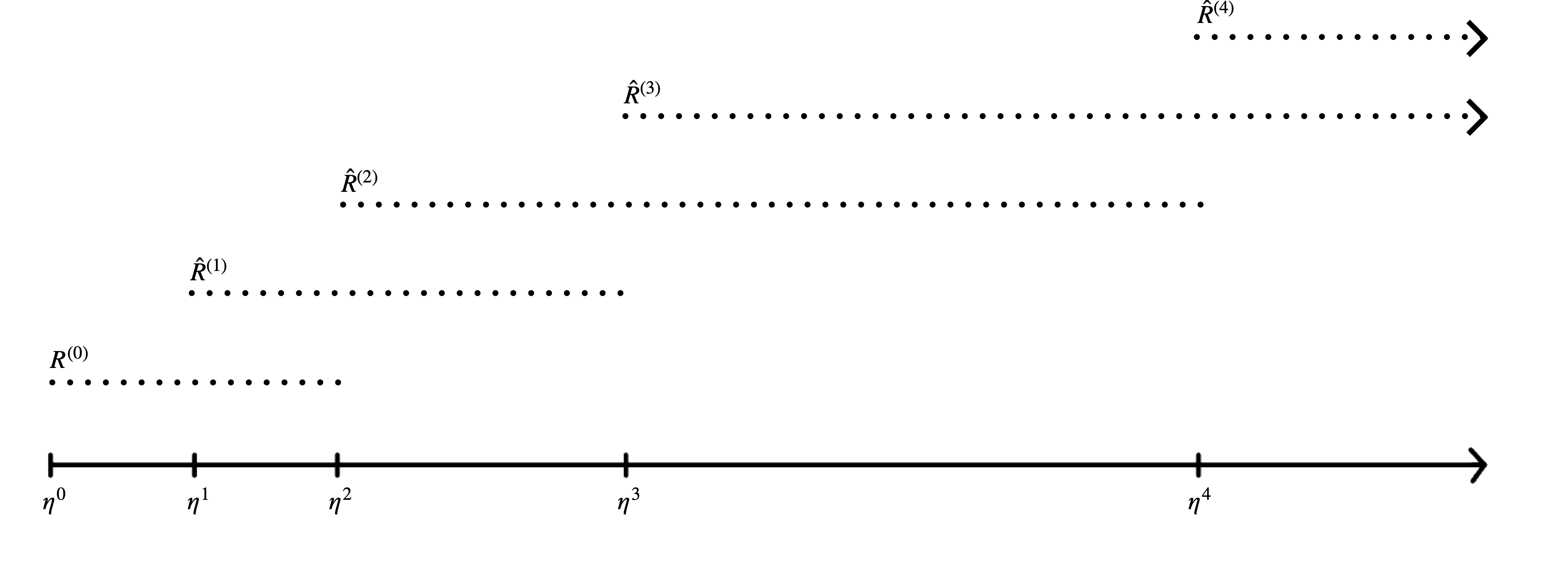}
    \caption{Illustration of dynamic updating of transformation for online  TUC and TUPAC prediction.}
    \label{fig:dynamic_transformation_illustration}
\end{figure}

Algorithm \ref{alg:dynamic_algo-illustration} formalizes this intuition for constructing time-uniform prediction sets using a dynamic transformation. We again abuse notation by suppressing the subscripts $\alpha$ and $\delta$ when denoting the prediction sets and sample quantiles. From Steps 4-6, it is clear that the transformation $\widehat{R}^{(k)}(\cdot)$ is computed based on $\lfloor \eta^k\rfloor$ observations. This, in turn, implies that the prediction set $\widehat{C}_{t,k-1}$ when evaluated for $t \in (\eta^k,\, \eta^{k+1}]$ has quantile estimated from $t - \lceil \eta^{k-1}\rceil + 1 \ge \lceil\eta^k\rceil - \lceil \eta^{k-1}\rceil + 1$ observations, which diverges as $k\to\infty$.

\begin{algorithm}
    \caption{Online TUC/TUPAC}\label{alg:dynamic_algo-illustration}
    \begin{algorithmic}[1]
        \Require Starting transformation $\widehat{R}^{(0)}$; algorithm $\mathcal{A}_t(z; R)$ for updating the transformation $R$ using data point $z$ for $t \geq 1$; epoch size $\eta > 1$; coverage level $1-\alpha \in (0, 1)$; $1 - \delta \in (0, 1)$ if PAC variant.
        \State Set $t=0$; $k=0$; $\widehat{R} = \widehat{R}^{(0)}$.
        \While{the analyst wants to continue at time $t$}
            \State Set $t = t + 1$.
            \If{$t \geq \eta^{k+1}$}
                \State Set $k = k + 1$.
                \State Set $\widehat{R}^{(k)} = \widehat{R}$.
            \EndIf
            \State Observe $Z_t$.
            \State Compute $\widehat{R}^{(k-1)}(Z_t)$ and $\widehat{R}^{(k)}(Z_t)$.
            \State Compute $\widehat{q}_{t, k-1}$ and $\widehat{q}_{t, k}$.  \Comment{as defined in Theorems \ref{dynamic_anytime_conformal_thm} or \ref{dynamic_anytime_pac_thm}}
            \State Report the smaller of $\widehat{C}_{t, k-1} = \{z: \widehat{R}^{(k-1)}(z) \leq \widehat{q}_{t, k-1}\}$ and $\widehat{C}_{t, k} = \{z: \widehat{R}^{(k)}(z) \leq \widehat{q}_{t, k}\}$ as a prediction set for $Z_{t+1}$.
            \State Update $\widehat{R} = \mathcal{A}_t(Z_t; \widehat{R})$.
        \EndWhile
    \end{algorithmic}
\end{algorithm}

In the following results, for $t \in [\eta^k, \eta^{k+2})$, let
\begin{equation*}
    \widehat{F}_{t, k}(r) = \frac{1}{s_{t, k}}\sum_{s=1}^{s_{t, k}}\mathbbm{1}\{\widehat{R}^{(k)}(Z_{\lceil\eta^k\rceil+s}) \leq r\},\quad\mbox{with}\quad s_{t, k} = t - (\lceil\eta^k\rceil -1),
\end{equation*} 
be the empirical CDF over the calibration set through time $t \in [\eta^k, \eta^{k+2})$. The following theorem shows that the proposed algorithm produces a prediction set that achieves the TUC prediction guarantee.

\begin{theorem} \label{dynamic_anytime_conformal_thm}
Define \begin{align*}
    u_{t,k} &= \frac{4(2\alpha - 1)\log \left(\frac{1}{h(s_{t, k})}\right)}{3(s_{t, k}+3)} + \sqrt{\frac{2\alpha(1-\alpha)}{s_{t, k}+2}}\left(\sqrt{\log \frac{1}{h(s_{t, k})}} + \sqrt{\log\left(\sum_{t'=t_k+1}^{\lfloor \eta^{k+2}\rfloor}\frac{h(s_{t', k})}{g(k)}\right)} + \frac{\sqrt{\pi}}{2}\right)
\end{align*} and take $\widehat{q}_{t, k} = \inf\{r: \frac{s_{t,k}}{s_{t,k} + 1}\widehat{F}_{t, k}(r) \geq 1-\alpha+u_{t, k}\}$ where $t_k$ is the smallest natural number such that $\widehat{q}_{t, k}$ for all $t \in (t_k, \eta^{k+2})$ and $h, g: \mathbb{N} \to \mathbb{R}^{\geq 0}$ are functions satisfying $\sum_{t=0}^{\infty} h(t) = \sum_{t=0}^{\infty} g(t) = 1$. Then, Algorithm \ref{alg:dynamic_algo-illustration} produces a sequence of prediction sets satisfying \eqref{eq:tuc_goal}.
\end{theorem}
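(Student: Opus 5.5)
By Proposition \ref{prop:equiv_TUC_goal}, it suffices to show that the reported sets satisfy $\mathbb{E}[\min_{t \ge 1} \mu_Z(\widehat{C}_t)] \ge 1-\alpha$. Since the algorithm reports the \emph{smaller} of $\widehat{C}_{t,k-1}$ and $\widehat{C}_{t,k}$, its probability content is at least the minimum of the two contents. For $t \in [\eta^k,\eta^{k+1})$ the reported set therefore has content at least
\[
\min_{j\in\{k-1,k\}} \mu_Z(\widehat{C}_{t,j}).
\]
Each index $j$ is used for $t \in [\eta^j, \eta^{j+2})$. Hence
\[
\min_{t\ge1}\mu_Z(\widehat{C}_t) \;\ge\; \min_{k\ge0}\,\min_{t\in(t_k,\eta^{k+2})}\mu_Z(\widehat{C}_{t,k}).
\]
For $t \le t_k$ the set is trivial, with content $1$. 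Within epoch $k$, the transformation $\widehat{R}^{(k)}$ depends only on $Z_1,\dots,Z_{\lfloor\eta^k\rfloor}$, while the calibration points $Z_{\lceil\eta^k\rceil+s}$ are fresh. So, conditionally on $\widehat{R}^{(k)}$, we are exactly in the fixed-transformation setting of Theorem \ref{fixed_anytime_conformal_thm}. As in that proof, $\mu_Z(\widehat{C}_{t,k})$ stochastically dominates the uniform order statistic $U^{(k)}_{\lceil (s_{t,k}+1)(1-\alpha+u_{t,k})\rceil : s_{t,k}}$, where the uniforms $U^{(k)}_s = F_k(\widehat{R}^{(k)}(Z_{\lceil\eta^k\rceil+s}))$ are built from the calibration stream of epoch $k$. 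Ties and discontinuities are handled by the usual randomization or domination argument.

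The main step is to lower bound $\mathbb{E}[\min_k \min_t W_{t,k}]$, where $W_{t,k}$ denotes this order statistic. I write
\[
\mathbb{E}[\min W] = 1 - \int_0^1 \mathbb{P}(\min W < 1-x)\,dx,
\]
bound the probability by a double union bound over $k$ and $t$, and then integrate, taking $\min\{1,\cdot\}$ as needed. For each fixed $(t,k)$ I use the Beta tail bound from \cite{skorski2023} (Theorem 1), exactly as in Theorem \ref{fixed_anytime_conformal_thm}. This is a Bernstein-type bound
\[
\mathbb{P}\bigl(\mathrm{Beta}(a,b) \le \mathbb{E} - \epsilon\bigr) \le \exp\bigl(-\epsilon^2/(2v + c\epsilon)\bigr),
\]
with variance $v \approx \alpha(1-\alpha)/(s+2)$. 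The terms $\frac{4(2\alpha-1)\log(1/h)}{3(s+3)}$ and $\sqrt{2\alpha(1-\alpha)\log(1/h)/(s+2)}$ in $u_{t,k}$ are designed to absorb the mean shift and the Bernstein deviation at level $h(s_{t,k})$.

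The new ingredient compared with the fixed case is the extra union over epochs. I allocate a budget $g(k)$ to epoch $k$ and split it within the epoch proportionally to $h(s_{t,k})/H_k$, where $H_k = \sum_{t'=t_k+1}^{\lfloor\eta^{k+2}\rfloor} h(s_{t',k})$. Setting the per-$(t,k)$ tail at deviation $\epsilon$ so that it equals $h(s_{t,k}) g(k)\exp(-\epsilon'^2 (s+2)/(2\alpha(1-\alpha)))/H_k$ requires, via $\sqrt{a+b}\le\sqrt a+\sqrt b$, the additional $\sqrt{\log(H_k/g(k))}$ term. I expect this to be the only delicate point, because the sign convention in the statement, $\log\sum h/g$, should match this. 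Summing over $t$ and $k$ then gives $\mathbb{P}(\min W < 1-\alpha - x) \lesssim \sum_{k}g(k)\,e^{-x^2 c} \le e^{-cx^2}$ after absorbing scales. Integrating the Gaussian tail $\int_0^\infty e^{-(s+2)x^2/(2\alpha(1-\alpha))}dx = \frac12\sqrt{2\pi\alpha(1-\alpha)/(s+2)}$ accounts precisely for the $\frac{\sqrt\pi}{2}\sqrt{2\alpha(1-\alpha)/(s+2)}$ term.

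The obstacle is making this integration uniform across heterogeneous $(t,k)$: the variance scale depends on $s_{t,k}$. I would handle it as the fixed-case proof presumably does. The threshold $u_{t,k}$ scales with $1/\sqrt{s_{t,k}+2}$, so I parametrize deviations as $x\sqrt{2\alpha(1-\alpha)/(s_{t,k}+2)}$, i.e., in standardized units, and apply the union bound at each standardized level $x$. The integral over $x$ of $\min\{1,e^{-x^2}\}$ is then at most $\sqrt\pi/2$, which yields expected minimum content at least $1-\alpha$. Finally, Proposition \ref{prop:equiv_TUC_goal} converts this bound into \eqref{eq:tuc_goal}.
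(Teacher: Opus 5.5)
\textbf{Verdict: there is a genuine gap, and it is not repairable with the stated $u_{t,k}$.} Up to the union bound your route is the paper's. That includes Proposition~\ref{prop:equiv_TUC_goal}, bounding the reported content by the smaller candidate content, conditioning on $\widehat{R}^{(k)}$, and the budget split $g(k)h(s_{t,k})/H_k$, which produces the $\sqrt{\log(H_k/g(k))}$ term exactly as in the statement. The gap is the step you yourself flag as the obstacle, and it does not close.

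Write $\sigma_{t,k}=\sqrt{2\alpha(1-\alpha)/(s_{t,k}+2)}$ and $D_{t,k}=\mathbb{E}[W_{t,k}]-W_{t,k}$, so that $1-\alpha-\min_{k,t}W_{t,k}\le\max_{k,t}(D_{t,k}-u_{t,k})$. Running the union bound at a common standardized level $x$ gives $\mathbb{P}\bigl(\max_{k,t}\sigma_{t,k}^{-1}(D_{t,k}-u_{t,k})+\tfrac{\sqrt{\pi}}{2}>x\bigr)\le e^{-x^2}$. Hence $\mathbb{E}[\max_{k,t}\sigma_{t,k}^{-1}(D_{t,k}-u_{t,k})]\le 0$. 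What you need, however, is $\mathbb{E}[\max_{k,t}(D_{t,k}-u_{t,k})]\le 0$, and $\mathbb{E}[\max_i Y_i]\le 0$ does not imply $\mathbb{E}[\max_i\sigma_iY_i]\le 0$ for differing $\sigma_i>0$.

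Here the implication actually fails. Along $t=\lceil\eta^{k+1}\rceil-1$ one has $s_{t,k}\to\infty$ and $D_{t,k}-u_{t,k}\to 0$ in probability. So $\max_{k,t}(D_{t,k}-u_{t,k})\ge 0$ almost surely, and it is strictly positive with positive probability. Working in raw units instead, your identity $\mathbb{E}[\min W]=1-\int_0^1\mathbb{P}(\min W<1-x)\,dx$ plus the union bound only gives $\mathbb{E}[\min W]\ge 1-\alpha-\int_0^{1-\alpha}\sum_{k,t}\mathbb{P}(D_{t,k}>u_{t,k}+x)\,dx$. That deficit is strictly positive. The offset $\frac{\sqrt{\pi}}{2}\sigma_{t,k}$ inside $u_{t,k}$ cancels a Gaussian integral only in standardized units, where negative values of the standardized maximum pay for it.

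The inequality you are aiming at is in fact false. Suppose $u_{t,k}\to 0$, as for the log-normal-type $h,g$ used in the paper. Then $W_{t,k}\to 1-\alpha$ in probability along $t=\lceil\eta^{k+1}\rceil-1$, so $\min_{k,t}W_{t,k}\le 1-\alpha$ almost surely. Any single $W_{t,k}$ with $t>t_k$ is a nondegenerate Beta variable, so $\mathbb{P}(\min W<1-\alpha)>0$. Therefore $\mathbb{E}[\min W]<1-\alpha$.

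The same reasoning applies to $\min_t\mu_Z(\widehat{C}_t)$ when the scores are continuous. For large $k$, at $t=\lceil\eta^k\rceil$ only $\widehat{C}_{t,k-1}$ is nontrivial. So by Proposition~\ref{prop:equiv_TUC_goal} the conclusion itself fails: the expected-minimum criterion together with asymptotically exact coverage would force $\mu_Z(\widehat{C}_t)\ge 1-\alpha$ for all $t$ almost surely.

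The paper's proofs of Lemmas~\ref{fixed_anytime_conformal_lemma} and~\ref{dynamic_anytime_conformal_lemma} make the same unjustified passage from the standardized expectation to ``so $\mathbb{E}[\max\{\cdots-u_{t,k}\}]\le 0$''. Deferring to the fixed-case proof therefore inherits the error. What your double union bound legitimately supports is one of two things:
\begin{itemize}
\item a TUPAC-type high-probability statement; or
\item an expectation bound once the $\frac{\sqrt{\pi}}{2}$ correction is placed at the common scale $\max_{k,t}\sigma_{t,k}$ instead of $\sigma_{t,k}$. That correction does not vanish as $t\to\infty$, consistent with the obstruction above.
\end{itemize}

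Separately, the per-$(t,k)$ tail step has a sign problem. In the usual regime $\alpha<1/2$ the law of $W_{t,k}$ is left-skewed. Skorski's bound for its lower tail therefore carries a positive scale term of order $(1-2\alpha)/s_{t,k}$. The negative term $\frac{4(2\alpha-1)\log(1/h(s_{t,k}))}{3(s_{t,k}+3)}$ that you plan to reuse cannot absorb it.
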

The proof is given in Section \ref{dynamic_anytime_conformal_proof} and follows a similar argument used to prove Theorem \ref{fixed_anytime_conformal_thm}.
An analogous result in the PAC setting is given in Theorem \ref{dynamic_anytime_pac_thm}. 

\begin{theorem} \label{dynamic_anytime_pac_thm}
Define \begin{align*}
    u_{t,k} &= (s_{t, k} + 1)^{-1}{\log\left(\sum_{t' = t_k + 1}^{\lfloor \eta^{k+2}\rfloor}\frac{ h(s_{t', k})}{\delta g(k)h(s_{t, k})}\right)}
\end{align*} and take $\widehat{q}_{t, k} = \inf\{r: \psi(1-\alpha, \frac{s_{t,k}}{s_{t,k} + 1}\widehat{F}_{t, k}(r)) \geq u_{t, k} \text{ and } s_{t, k}\widehat{F}_{t, k}(r) \geq (1-\alpha)(s_{t, k} + 1)\}$ where $t_k$ is the smallest natural number such that $\widehat{q}_{t, k}$ is finite for all $t \in (t_k, \eta^{k+2})$ and $h, g: \mathbb{N} \to \mathbb{R}^{\geq 0}$ are functions satisfying $\sum_{t=0}^{\infty} h(t) = \sum_{t=0}^{\infty} g(t) = 1$. Then, Algorithm \ref{alg:dynamic_algo-illustration} produces a sequence of prediction sets satisfying \eqref{eq:anytime_pac_goal}.
\end{theorem}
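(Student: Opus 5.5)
The argument runs parallel to Theorem \ref{fixed_anytime_pac_thm}, with one extra union bound over epochs. First, by Proposition \ref{prop:equiv_TUPAC_goal} it suffices to show
\[
\mathbb{P}\Big(\inf_{t\ge 1}\mu_Z(\widehat{C}_t) < 1-\alpha\Big) \le \delta,
\]
where $\widehat{C}_t$ is the reported set. At time $t\in[\eta^{k+1},\eta^{k+2})$ the reported set is the smaller of $\widehat{C}_{t,k}$ and $\widehat{C}_{t,k+1}$, and the ``smaller'' choice is data dependent. I therefore bound the failure event for the reported set by the union of the failure events for every candidate set $\widehat{C}_{t,k}$ with $k\ge 0$ and $t\in(t_k,\eta^{k+2})$, i.e.\ over all times at which transformation $\widehat{R}^{(k)}$ is in use. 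For $t\le t_k$ the set is trivial (infinite quantile, whole space) and cannot fail. A union bound over $k$ then reduces the claim to showing, for each $k$,
\[
\mathbb{P}\big(\exists\, t\in(t_k,\eta^{k+2}) : \mu_Z(\widehat{C}_{t,k})<1-\alpha\big)\le \delta g(k).
\]
Summing over $k$ gives at most $\delta\sum_k g(k)\le\delta$.

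Fix $k$ and condition on $Z_1,\dots,Z_{\lfloor\eta^k\rfloor}$, which determine $\widehat{R}^{(k)}$. The calibration points $Z_{\lceil\eta^k\rceil+s}$ are independent of this sigma-field and IID, so conditionally $\widehat{R}^{(k)}$ is a fixed transformation. The fixed-transformation argument then applies. Since $\widehat{q}_{t,k}$ is the $m_t$-th order statistic of $s_{t,k}$ calibration scores, with $m_t=\lceil s_{t,k}\widehat{F}_{t,k}(\widehat{q}_{t,k})\rceil$ determined by the deterministic thresholds, we have $\mu_Z(\widehat{C}_{t,k})=F(\widehat{q}_{t,k})\ge U_{m_t:s_{t,k}}$ for the probability-integral-transformed uniforms. (Ties and discontinuities only help, by stochastic domination.) Hence
\[
\mu_Z(\widehat{C}_{t,k}) \;\succeq\; \mathrm{Beta}\big(m_t,\, s_{t,k}+1-m_t\big).
\]
The defining conditions of $\widehat{q}_{t,k}$ say that $m_t/(s_{t,k}+1)\ge 1-\alpha$ and $\psi(1-\alpha, m_t/(s_{t,k}+1))\ge u_{t,k}$.

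Next I apply Proposition 2.1 of \cite{dumbgen1998}, the Chernoff-type Beta tail bound used in Theorem \ref{fixed_anytime_pac_thm}:
\[
\mathbb{P}\big(\mathrm{Beta}(m,n+1-m)<1-\alpha\big)\le \exp\!\big(-(n+1)\,\psi(1-\alpha, m/(n+1))\big).
\]
With the choice of $u_{t,k}$ this gives
\[
\mathbb{P}\big(\mu_Z(\widehat{C}_{t,k})<1-\alpha\big)\le \exp\!\big(-(s_{t,k}+1)u_{t,k}\big) = \frac{\delta\, g(k)\, h(s_{t,k})}{\sum_{t'=t_k+1}^{\lfloor\eta^{k+2}\rfloor} h(s_{t',k})}.
\]
Summing over $t\in(t_k,\eta^{k+2})$, the numerators exactly reproduce the normalizing sum, so the total is at most $\delta g(k)$. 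Integrating out the conditioning finishes the per-epoch bound.

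I expect the main obstacle to be the bookkeeping, not the probabilistic inequality. Three points need care. First, the calibration indices for $\widehat{R}^{(k)}$ must start after $\lfloor\eta^k\rfloor$, so that independence from the data used to fit $\widehat{R}^{(k)}$ genuinely holds; one has to check that the online update $\mathcal{A}_t$ and the definition of $s_{t,k}$ are consistent with this. Second, $t_k$ is defined as the point where the quantile becomes finite, and this is deterministic because the thresholds depend only on $s_{t,k}$; this is what lets the union bound range over a deterministic index set. Third, choosing the smaller set must not break validity, and it does not, because we control the failure of both candidates simultaneously.
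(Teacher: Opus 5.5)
Your proposal is correct and follows the paper's proof. The paper bounds each epoch by a union bound over $t\in(t_k,\eta^{k+2})$ using D\"umbgen's Beta tail bound, with $u_{t,k}$ chosen so the terms sum to at most $\delta g(k)$, and then takes a union bound over $k$. Your explicit treatment of the data-dependent choice between the two candidate sets, of conditioning on the data used to fit $\widehat{R}^{(k)}$, and of ties via stochastic dominance makes precise steps the paper leaves implicit. One small fix: index by the deterministic smallest $\beta$ meeting both thresholds, so that $\widehat{q}_{t,k}$ is exactly the $\beta$-th order statistic. The count $s_{t,k}\widehat{F}_{t,k}(\widehat{q}_{t,k})$ you wrote can be random and inflated by ties.
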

The proof is given in Section \ref{dynamic_anytime_pac_proof} and follows a similar argument to that used to prove Theorem \ref{dynamic_anytime_conformal_thm}.

\paragraph{Practical considerations.} While Theorems \ref{dynamic_anytime_conformal_thm} and \ref{dynamic_anytime_pac_thm} demonstrate the theoretical viability of applying Algorithm \ref{alg:dynamic_algo-illustration} to construct time-uniform prediction sets, there are several important practical considerations worth mentioning. Algorithm \ref{alg:dynamic_algo-illustration} can be implemented as an online algorithm as long as the transformation can be updated in an online fashion. For example, suppose the analyst observes data points $Z_t = (X_t, Y_t)$, $t = 1, 2, \dots$ where $X_t$ are the observed features or covariates at time $t$ and $Y_t$ is the outcome at time $t$. A common choice of transformation is a loss function measuring the discrepancy between the outcome $Y_t$ and a prediction of the outcome based on the covariates $X_t$ (e.g., the absolute loss or squared loss). In particular, suppose the analyst sets the transformation introduced at the $k$th epoch to be $\widehat{R}^{(k)}(x, y) = \ell(y, \widehat{\mu}^{(k)}(x))$ where $\ell$ is a loss function and $\widehat{\mu}^{(k)}$ is a predictor trained on the dataset $\{(X_t, Y_t)\}_{t=1}^{\lfloor\eta^k - 1\rfloor}$. If the predictor can be trained in an online fashion (e.g., via stochastic gradient descent), then the transformation can also be updated online, which eliminates the need to store data from previous epochs. This property can be extended to other settings, such as conformalized quantile regression \citep{romano2019}. 
See \cite{hu2012} for an online quantile regression algorithm which involves descending along the gradient of a version of the pinball loss function. In addition, one may also want to consider using (online) algorithms which forget or downweight old observations in favor of newer data for training the predictor or quantile regressor. Such unlearning algorithms have been shown to be able to adapt more quickly to shifting distributions, making them attractive in practice; see \cite{you2016}, \cite{liu2016}, \cite{zhang2017}, \cite{thudi2022}, and \cite{yang2025} for examples.

\section{Oracle inequalities for the ``size'' of prediction sets} \label{width_sec}

To evaluate the optimality of our proposed online prediction sets, we compare their widths to those of the corresponding oracle prediction sets, as was done in~\cite{lei2018distribution}. We consider the special case of applying Algorithm \ref{alg:dynamic_algo-illustration} with the transformation introduced at the $k^{\text{th}}$ epoch taken to be $\widehat{R}^{(k)}\left(z\right) := R(z ; \widehat{\xi}^{(k)})$, where $R$ is a fixed transformation and $\widehat{\xi}^{(k)}$ contains nuisance functions which depend on the data $\left\{Z_s\right\}_{s=1}^{\lfloor \eta^k \rfloor}$. For example, in the case of regression where an analyst observed IID covariate-response pairs $Z_t = (X_t, Y_t)$, $t=1, 2, \dots$, one could take $\widehat{\xi}^{(k)} = \widehat{\mu}^{(k)}$ to be an estimate of the regression function $\mu(x) = \mathbb{E}[Y \vert X=x]$ based on the data $\left\{(X_s, Y_s)\right\}_{s=1}^{\lfloor \eta^k \rfloor}$ so that $R((x, y); \widehat{\xi}^{(k)}) = |\widehat{\mu}^{(k)}(x) - y|$. As another example, in order to obtain an online time-uniform version of conformalized quantile regression, one could take $\widehat{\xi}^{(k)} = (\widehat{q}_{\alpha/2}^{(k)}, \widehat{q}_{1 - \alpha/2}^{(k)})$, where $\widehat{q}_{\alpha/2}^{(k)}$ and $\widehat{q}_{1-\alpha/2}^{(k)}$ are estimates of the ${\alpha}/{2}$ and $1-{\alpha}/{2}$ quantiles of the conditional distribution of $Y$ given $X$ based on $\left\{(X_s, Y_s)\right\}_{s=1}^{\lfloor \eta^k \rfloor}$, respectively, and set $R((x, y); \widehat{\xi}^{(k)}) = \max\{\widehat{q}^{(k)}_{\alpha/2}(x) - y, y - \widehat{q}^{(k)}_{1-\alpha/2}(x)\}$.

From Algorithm \ref{alg:dynamic_algo-illustration}, the online TUC prediction set at time $t \in \left[\eta^k, \eta^{k+1}\right)$ is \begin{equation*}
    \widehat{C}_{t, \alpha} = \widehat{C}_{t, k^*, \alpha} = \left\{z : R(z; \widehat{\xi}^{(k^*)}) \leq \widehat{q}_{t, k^*, \alpha}\right\},
\end{equation*} 
where $k^* = \argmin_{j \in \{k-1, k\}} L(\widehat{C}_{t, j, \alpha})$, $L$ denotes the Lebesgue measure (or any measure of ``size'' of the set), and $\widehat{q}_{t, j, \alpha}$ is a particular sample quantile of the learned transformation $R(\cdot; \widehat{\xi}^{(j)})$ applied to the calibration set $\left\{Z_s\right\}_{\left\lceil \eta^j \right\rceil}^t$ for $j \in \{k-1, k\}$. 

We compare the ``size'' of the prediction set reported to an oracle prediction set. The oracle prediction set can be defined as the one obtained at $t = \infty$, i.e., using $\xi^*$ the ``population limit'' of $\xi^{(k)}$ as $k\to\infty$ and $q^*_{\alpha}$ the $(1-\alpha)$-th quantile of $R(Z; \xi^*)$. Formally, we define the oracle prediction set to be \begin{equation*}
    C^*_{\alpha} = \left\{z: R\left(z; \xi^*\right) \leq q^*_{\alpha}\right\}.
\end{equation*} 

Ideally, as we accumulate more data, $\widehat{C}_{t, \alpha}$ should get very close to $C^*_{\alpha}$. As such, we introduce three conditions which together with the implicit assumption that the data are independent and identically distributed, will imply that the width of the online TUC prediction band converges to that of the oracle prediction set as is shown in Theorem \ref{thm:optimal_width}. 

\begin{enumerate}[label=(\textbf{A\arabic*})] 
    \item For some metric $d(\cdot, \cdot)$ on the space of nuisance functions ($\xi$), \label{eq:Lipschitz-score}
    \begin{equation*}
        \sup_z \left|R\left(z; \xi\right) - R(z; \xi')\right| \le d(\xi,\xi')\quad\mbox{for all}\quad \xi, \xi'.
    \end{equation*} 
\end{enumerate}

\begin{theorem} \label{thm:optimal_width}
Under assumption~\ref{eq:Lipschitz-score}, the online TUC prediction set $\widehat{C}_{t, \alpha}$ (for any fixed $t \ge 1$) satisfies 
\begin{equation*}
    L(\widehat{C}_{t, \alpha}) \le   L\left(C^*_{\alpha}\right) + g\left(2d\left(\xi^*,\,\widehat{\xi}^{(k-1)}\right) + \max_{\omega\in\{\pm 1\}}\left|q^{*}_{\alpha} - q^*_{\alpha + u_{t, k-1} + \omega\sqrt{\eta\log(2/\kappa)/(2t(\eta - 1))}}\right|\right),
\end{equation*} 
with probability at least $1 - \kappa$, where $g(\epsilon) = L\left(\left\{z: \left|R(z; \xi^*) - q^*_{\alpha}\right| < \epsilon\right\}\right)$. In particular, if the density of $R(Z; \xi^*)$ is bounded away from zero in a neighborhood of $q^*_{\alpha}$ and $g(\epsilon)\asymp\epsilon$ as $\epsilon\to0$, then for large enough $t$, we get
\[
L(\widehat{C}_{t, \alpha}) \le   L\left(C^*_{\alpha}\right) + O_{\mathbb{P}}(1)d\left(\xi^*, \widehat{\xi}^{(k-1)}\right) + O_{\mathbb{P}}(1)\left(\frac{1}{\sqrt{t}} + u_{t, k-1}\right).
\]
\end{theorem}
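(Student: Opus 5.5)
Since $\widehat{C}_{t,\alpha}$ is the smaller of $\widehat{C}_{t,k-1,\alpha}$ and $\widehat{C}_{t,k,\alpha}$, we have $L(\widehat{C}_{t,\alpha})\le L(\widehat{C}_{t,k-1,\alpha})$. It therefore suffices to bound the size of the set built from the previous-epoch transformation $R(\cdot;\widehat{\xi}^{(k-1)})$. For that set the calibration sample $\{Z_s\}_{s\ge\lceil\eta^{k-1}\rceil}^{t}$ is independent of $\widehat{\xi}^{(k-1)}$, and its size is $s_{t,k-1}\ge t-\eta^{k-1}\ge t(1-1/\eta)=t(\eta-1)/\eta$ because $t\ge\eta^k$. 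This accounts for the factor $\eta/(\eta-1)$ inside the square root. Throughout, I condition on $\widehat{\xi}^{(k-1)}$ and treat the score $\widehat{R}(z):=R(z;\widehat{\xi}^{(k-1)})$ as fixed.

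\textbf{Step 1 (quantile concentration).} Let $\widehat{F}$ be the empirical CDF of $\widehat{R}(Z_s)$ over the calibration set, and $F$ its population CDF given $\widehat{\xi}^{(k-1)}$. By definition, $\widehat{q}_{t,k-1,\alpha}$ is (up to the factor $s/(s+1)$, which only helps) the empirical quantile at level $1-\alpha+u_{t,k-1}$. I apply the DKW inequality with Massart's constant: with probability at least $1-\kappa$,
\[
\sup_r|\widehat{F}(r)-F(r)|\le\epsilon_t:=\sqrt{\eta\log(2/\kappa)/(2t(\eta-1))}.
\]
On this event, $\widehat{q}_{t,k-1,\alpha}$ lies between the population quantiles of $\widehat{R}(Z)$ at levels $1-\alpha+u_{t,k-1}\mp\epsilon_t$. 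Using the paper's convention that $q^*_\beta$ denotes the $(1-\beta)$-quantile, these are the quantiles indexed by $\alpha-u_{t,k-1}\pm\epsilon_t$. I will need to check how the indexing lines up with the statement's $q^*_{\alpha+u+\omega\epsilon}$; I expect a sign convention mismatch that is absorbed by the maximum over $\omega$.

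\textbf{Step 2 (transfer to $\xi^*$).} Write $\Delta=d(\xi^*,\widehat{\xi}^{(k-1)})$. By \ref{eq:Lipschitz-score}, $|\widehat{R}(z)-R(z;\xi^*)|\le\Delta$ for every $z$. Hence the quantiles of $\widehat{R}(Z)$ and of $R(Z;\xi^*)$ at any level differ by at most $\Delta$, because the event $\{R(Z;\xi^*)\le r-\Delta\}$ is contained in $\{\widehat{R}(Z)\le r\}$, which is contained in $\{R(Z;\xi^*)\le r+\Delta\}$. Combining this with Step 1 gives
\[
|\widehat{q}_{t,k-1,\alpha}-q^*_\alpha|\le\Delta+\max_\omega|q^*_\alpha-q^*_{\alpha+u+\omega\epsilon_t}|=:\Delta+M.
\]

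\textbf{Step 3 (set inclusion).} If $z\in\widehat{C}_{t,k-1,\alpha}$, then
\[
R(z;\xi^*)\le\widehat{R}(z)+\Delta\le\widehat{q}_{t,k-1,\alpha}+\Delta\le q^*_\alpha+2\Delta+M.
\]
So $\widehat{C}_{t,k-1,\alpha}\subseteq C^*_\alpha\cup\{z:\,q^*_\alpha<R(z;\xi^*)\le q^*_\alpha+2\Delta+M\}$. The second set is contained in $\{|R(z;\xi^*)-q^*_\alpha|<\epsilon\}$ for any $\epsilon>2\Delta+M$; the boundary is handled by monotonicity or by taking a limit. This yields
\[
L(\widehat{C}_{t,\alpha})\le L(C^*_\alpha)+g(2\Delta+M),
\]
which is the first display.

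\textbf{Step 4 (rate statement).} Suppose the density of $R(Z;\xi^*)$ is bounded below by $c>0$ near $q^*_\alpha$. Then the quantile function is $1/c$-Lipschitz there, so $M\le(u_{t,k-1}+\epsilon_t)/c$ once $t$ is large enough that $u_{t,k-1}+\epsilon_t$ is small. Together with $g(\epsilon)\asymp\epsilon$, this gives the $O_{\mathbb{P}}$ form.

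The main obstacle is Step 1. I need to handle the bookkeeping of the exact quantile level, including the $s/(s+1)$ factor and the condition $t>t_{k-1}$ ensuring finiteness, and I need the quantile inversion of DKW to hold for non-continuous $F$. My plan is to use the generalized-inverse inequalities $F(q)\ge p$ together with left limits, and to verify that the sign conventions match the $\omega\in\{\pm1\}$ form in the statement.
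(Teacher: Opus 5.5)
Your argument is the same as the paper's proof. You reduce to $\widehat{C}_{t,k-1,\alpha}$ via the smaller-set rule, condition on $\widehat{\xi}^{(k-1)}$, apply DKW with Massart's constant on the calibration block, transfer quantiles between $R(\cdot;\widehat{\xi}^{(k-1)})$ and $R(\cdot;\xi^*)$ using \ref{eq:Lipschitz-score}, and finish with $g(2\Delta+M)$. The differences are minor:
the paper bounds $L(\widehat{C}_{t,k-1,\alpha}\triangle C^*_\alpha)$ through a two-sided sandwich, whereas your one-sided inclusion needs only the upper deviation of $\widehat{q}_{t,k-1,\alpha}$. Your bound $s_{t,k-1}\ge t-\eta^{k-1}\ge t(\eta-1)/\eta$ is correct, while the paper's chain passes through $\eta^{k-1}(\eta-1)\ge t(\eta-1)/\eta$, which is backwards for $t>\eta^k$.

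The step that would fail is the index reconciliation you postpone. The maximum over $\omega\in\{\pm1\}$ flips only the DKW term $\epsilon_t=\sqrt{\eta\log(2/\kappa)/(2t(\eta-1))}$, never the sign of $u_{t,k-1}$. Here $q^*_\beta$ is the $(1-\beta)$-quantile, and $\widehat{q}_{t,k-1,\alpha}$ is an empirical quantile at level at least $1-\alpha+u_{t,k-1}$. Inverting DKW therefore gives $q^*_{\alpha-u_{t,k-1}+\omega\epsilon_t}$, and that is what your argument, and the paper's, actually proves. The paper simply writes $\alpha+u_{t,k-1}$ at this inversion step.

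The printed version cannot be recovered. Take $\widehat{\xi}^{(k-1)}=\xi^*$ (so $\Delta=0$), for instance $R(z;\xi^*)=|z|$ on $\mathbb{R}$. Also take $u_{t,k-1}>\epsilon_t$; this is the usual regime, since $u_{t,k-1}$ carries the factor $\sqrt{\log(1/h(s_{t,k-1}))}$. Then both printed quantiles lie at or below $q^*_\alpha$. If $R(Z;\xi^*)$ has large density just below $q^*_\alpha$ and tiny density just above it, the printed bound is close to $L(C^*_\alpha)$. Yet on the DKW event $\widehat{q}_{t,k-1,\alpha}\ge q^*_{\alpha-u_{t,k-1}+\epsilon_t}$, which is far above $q^*_\alpha$. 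Early in epoch $k$, where $\widehat{q}_{t,k,\alpha}=\infty$, this is the reported set.

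So prove and state the $\alpha-u_{t,k-1}$ version. The $O_{\mathbb{P}}$ corollary is unaffected, since under a local density lower bound both versions are $O(u_{t,k-1}+\epsilon_t)$.

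Similarly, the factor $s_{t,k-1}/(s_{t,k-1}+1)$ does not ``only help''. It raises the empirical level to $(1-\alpha+u_{t,k-1})(s_{t,k-1}+1)/s_{t,k-1}$, which pushes $\widehat{q}_{t,k-1,\alpha}$ up. That works against the upper bound your Step 3 needs and costs an extra $O(1/t)$ in the level. The paper hides this by normalizing $\widehat{F}_{t,k-1}$ by $s_{t,k-1}+1$ and applying DKW to it as if it were an empirical CDF. You should carry the $O(1/t)$ shift explicitly in the first display.
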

\begin{proof}
The proof is given in Section \ref{optimal_width_proof}.
\end{proof}
The proof follows the basic structure of the oracle inequality for split conformal prediction. Assumption~\ref{eq:Lipschitz-score} is rather mild and is satisfied in many settings leading to optimality of the volume of the prediction sets. For example, in the context of regression data, $z = (x, y)$ and the CQR score has $\xi^*(z) = (q_{\alpha/2}(x),\,q_{1-\alpha/2}(x))$ and $R(z; \xi^*) = \{\xi_1^*(x, y) - y,\, y - \xi_2^*(x, y)\}$. This implies that
\[
|R(z; \xi) - R(z; \xi')| \le \max\left\{\|\xi_1 - \xi'_1\|_{\infty},\, \|\xi_2 - \xi_2\|_{\infty}\right\}.
\]
Therefore, $d(\xi^*,\, \widehat{\xi}^{(k-1)})$ converges in probability to zero if the conditional quantiles are consistently estimated. Additionally, it follows from the results of~\cite{sesia2020comparison} that $C^*_{\alpha}$ is the narrowest {\em symmetric} prediction interval that has valid coverage conditional on the value of covariates. (Here symmetric means that the true response is equally likely to be overpredicted or underpredicted.) In the regression context, $L(\widehat{C}_{t,\alpha})$ measures the ``width'' conditional on $x$, i.e., with CQR score, we have $L(\widehat{C}_{t,\alpha}) = |\widehat{\xi}^{(k^*)}_2(x) - \widehat{\xi}^{(k^*)}_1(x)| + \widehat{q}_{t,k^*,\alpha}$.

From the proof of Theorem~\ref{thm:optimal_width}, it follows that $\widehat{q}_{t,k^*,\alpha}$ converges in probability to $q^*_{\alpha}$ (as $t\to\infty$) and this combined with consistency of $\widehat{\xi}^{(k-1)}$ to $\xi^*$ implies that the reported TUC and TUPAC prediction sets also satisfy asymptotic conditional coverage guarantee; this follows the same argument as Corollary 1 of~\cite{sesia2020comparison}.

\section{Experiments} \label{experiments_sec}

\subsection{Split time-uniform experiments}

In order to study the properties of our proposed algorithms for constructing time-uniform prediction sets, we conduct an empirical analysis under a simple simulation design based on an IID dataset of covariate-outcome pairs $(X_t, Y_t)$. We construct the true regression function $\mu(x) = \mathbb{E}[Y_t \vert X_t = x]$ to be linear in $x$ and draw 10-dimensional covariates $X_t \sim N(0, I)$ and errors $\epsilon_t := Y_t - \mu(X_t) \sim N(0, 1)$. We split the simulated dataset of size 150,000 into 3 equal parts. The first part is used to train a linear regression model $\widehat{\mu}$ for constructing the transformation $R(x, y) = \left|y - \widehat{\mu}(x)\right|$. We then use the second part to construct prediction sets with $\alpha = \delta = 0.1$ using the split conformal, CS (Theorem \ref{fixed_confidence_sequence_pac_thm}), TUC (Theorem \ref{fixed_anytime_conformal_thm}), TUPAC (Theorem \ref{fixed_anytime_pac_thm}), and SIPI \citep{avelin2023} algorithms. For the split CS method based on the work of \cite{howard2022}, we use the Beta-Binomial mixture approach implemented in the Confseq package \citep{howard2021} which has been shown to perform better in practice than the asymptotically optimal approach described in Theorem \ref{fixed_confidence_sequence_pac_thm}. For the split TUC and TUPAC intervals, we take the $h$ function to be the PMF of a discretized log Normal random variable with mean 11 and variance 1. We compare the prediction sets constructed using the various algorithms using the third split. In particular, we evaluate the coverage and width of the intervals as shown in Figure \ref{fig:fixed_simulation}. We observe that while the split conformal prediction intervals have nearly appropriate coverage at each time $t$, they do not have valid time-uniform coverage. In addition, the widths of all the prediction sets appear to approach that of the oracle parametric prediction interval as $t$ increases. The rate of convergence is slower for the time-uniform prediction intervals than the split conformal prediction intervals, but this is a necessary tradeoff in order to obtain time-uniform coverage. On the other hand, the width of the CS, TUC, and TUPAC intervals converges faster than the width of the SIPIs since they require extra width to guarantee uniform coverage over the miscoverage level $\alpha$.

\begin{figure}[h]
    \centering
    \includegraphics[width=\textwidth]{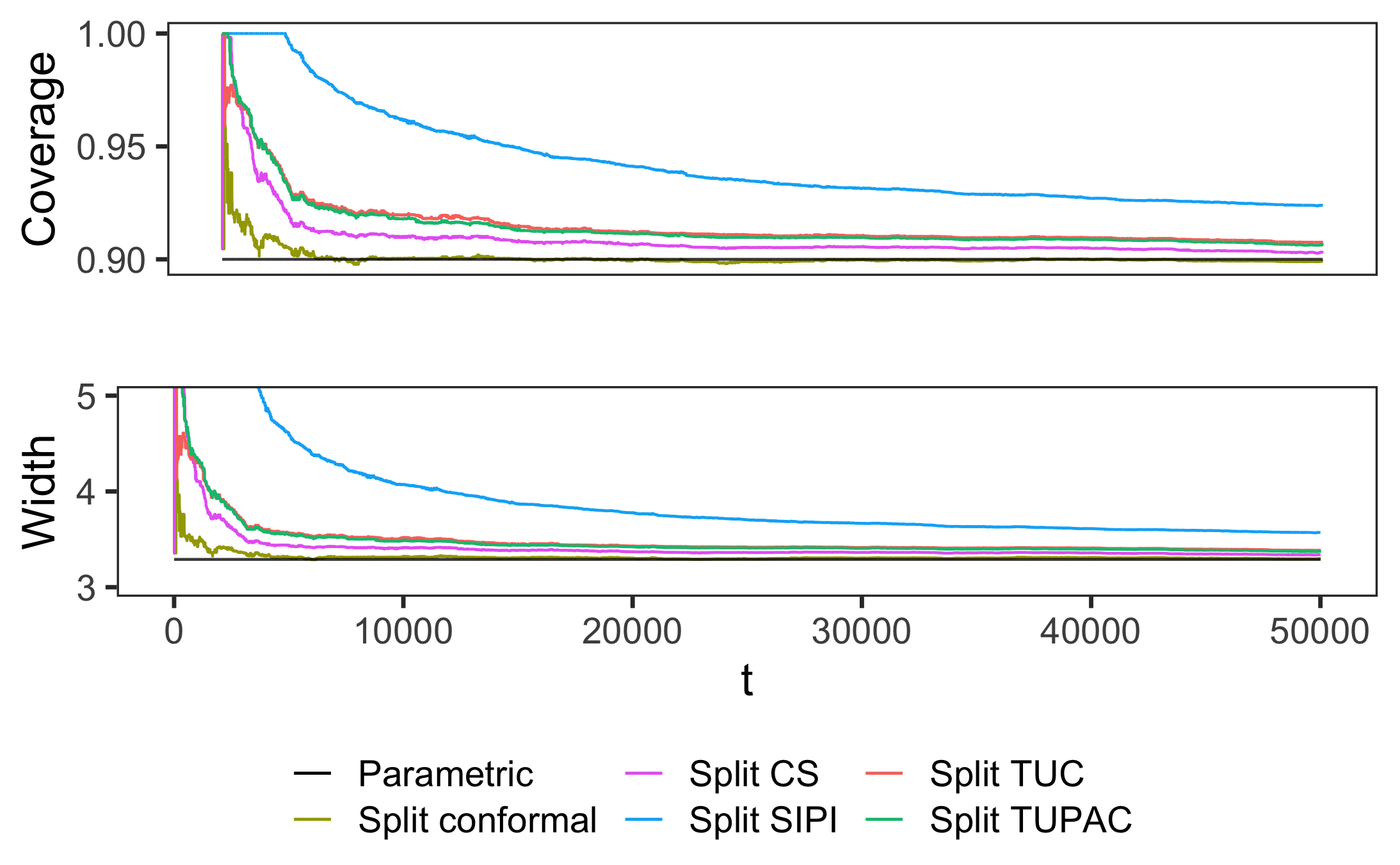}
    \caption{The coverage and width of prediction sets when split conformal, CS, TUC, TUPAC, and SIPI based on linear regression are applied to a simulated dataset described in Section \protect\ref{experiments_sec} are shown above.}
    \label{fig:fixed_simulation}
\end{figure}

\subsection{Online time-uniform experiments}

Though the theoretical guarantees of TUC prediction sets are specific to the IID setting, we suspect that online TUC prediction sets will exhibit attractive adapative behavior when the distribution of the observations changes over time. This is because the transformation is updated as more data is accrued which allows it to change over time. To investigate the adaptivity of the online TUC prediction sets, we return to the toy example described in Section \ref{introduction_sec}. Instead of drawing each observation from a standard Normal distribution, we draw the first 50,000 observations from a standard Normal distribution and the last 50,000 observations from a Normal distribution with mean and variance 1. Thus, the distribution of the observations shifts at $t = 50,000$. In applying the online TUC algorithm, we use the transformation $\widehat{R}^{(k)}(z) = \left|z - \bar{Z}_k\right|$ where $\bar{Z}_k = \frac{1}{\lceil\eta^k\rceil - 1}\sum_{t=1}^{\lceil \eta^k\rceil - 1} Z_t$ is the average of the observations prior to the start of the $k$th epoch, take $\eta=2$, and choose the $h$ and $g$ functions to be PMF of discretized log Normal random variables with means 11 and log(16), respectively, and variance 1. Figure \ref{fig:changing_dist} shows the resulting probability content of the online TUC prediction sets over 100,000 observations. We see that the probability content degrades from the desired 0.9 level at $t=50,000$, but subsequently recovers as more data coming from the shifted distribution are observed. 

\begin{figure}
    \centering
    \includegraphics[width=0.75\textwidth]{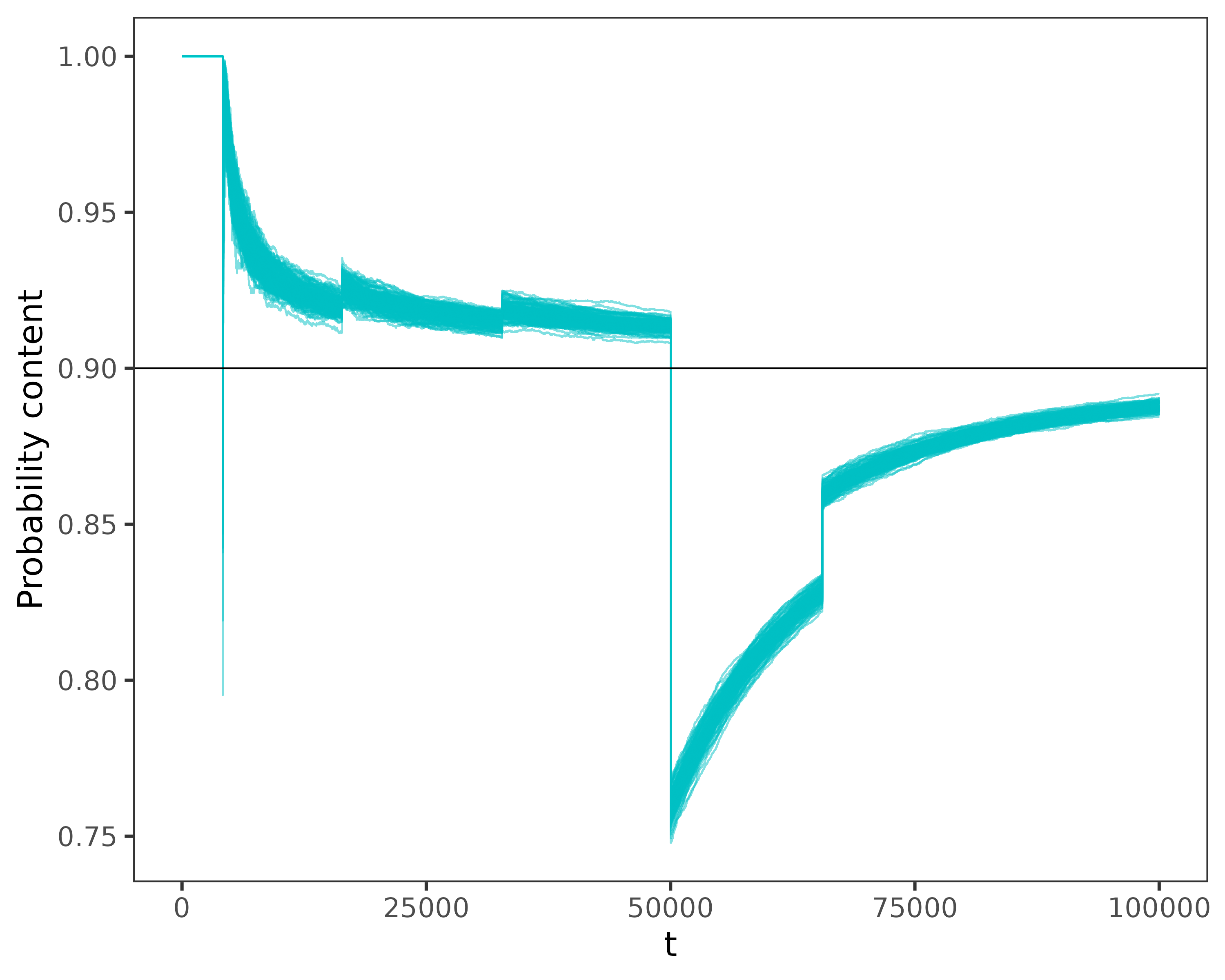}
    \caption{The true probability content of 90\% online TUC prediction intervals $\{\widehat{C}_{t, 0.1}\}_{t=1}^{100,000}$ constructed using Algorithm \protect\ref{alg:dynamic_algo-illustration} over 120 replications is shown above.}
    \label{fig:changing_dist}
\end{figure}

\section{Real data application to spam detection} \label{application_sec}

We apply our TUC and TUPAC prediction algorithms to detect spam emails. In particular, it is of interest to construct a prediction set which contains the true classication of an email (i.e. spam or not spam) with high probability each time an email is received. To illustrate the utility of TUC and TUPAC prediction in this setting, we apply our algorithms to a collection of 4601 emails \citep{dua2019}. Each email is classified as either spam or not spam and has been preprocessed to produce 54 features measuring frequencies of particular words and characters within the email and 3 features measuring the average, longest, and total length of sequences of consecutive capital letters within the email. These features are known to be useful when constructing a personalized spam filter. 

To apply and evaluate Algorithm \ref{fixed_anytime_alg} in this setting, we split the data into 3 equal parts. The first part is used to train a probabilistic classifier; we consider logistic regression, random forest, and a Superlearner ensemble of generalized linear models, generalized additive models, random forest, and kernel support vector machines. We use these classifiers to construct the absolute residual non-conformity score. Treating the classifier trained on the first split as fixed, we then apply Algorithm \ref{fixed_anytime_alg} on the second split to produce CS, TUC, and TUPAC prediction intervals with $\alpha = \delta = 0.1$. We take the $h$ function in Algorithm \ref{fixed_anytime_alg} to be the PMF of a discretized log Normal random variable with mean 6 and variance 1. We also apply the split conformal prediction algorithm as a baseline and note that the SIPI method from \cite{avelin2023} produced completely noninformative prediction sets and we thus omit them from our results. We use the third split to evaluate the coverage and width of the prediction intervals as shown in Figure \ref{fig:fixed_application}. We observe that the split CS, TUC, and TUPAC prediction intervals achieve time-uniform coverage in contrast to the split conformal prediction intervals. However, this time-uniform coverage comes at a cost – the CS, TUC, and TUPAC prediction intervals tend to be noninformative more often than split conformal prediction intervals in the sense that they contain both outcomes, spam and not spam. As $t$ increases, the frequency of prediction intervals containing both spam and not spam among observations in the third split diminishes to 0 for all methods.

\begin{figure}[h]
    \centering
    \includegraphics[width=0.9\textwidth]{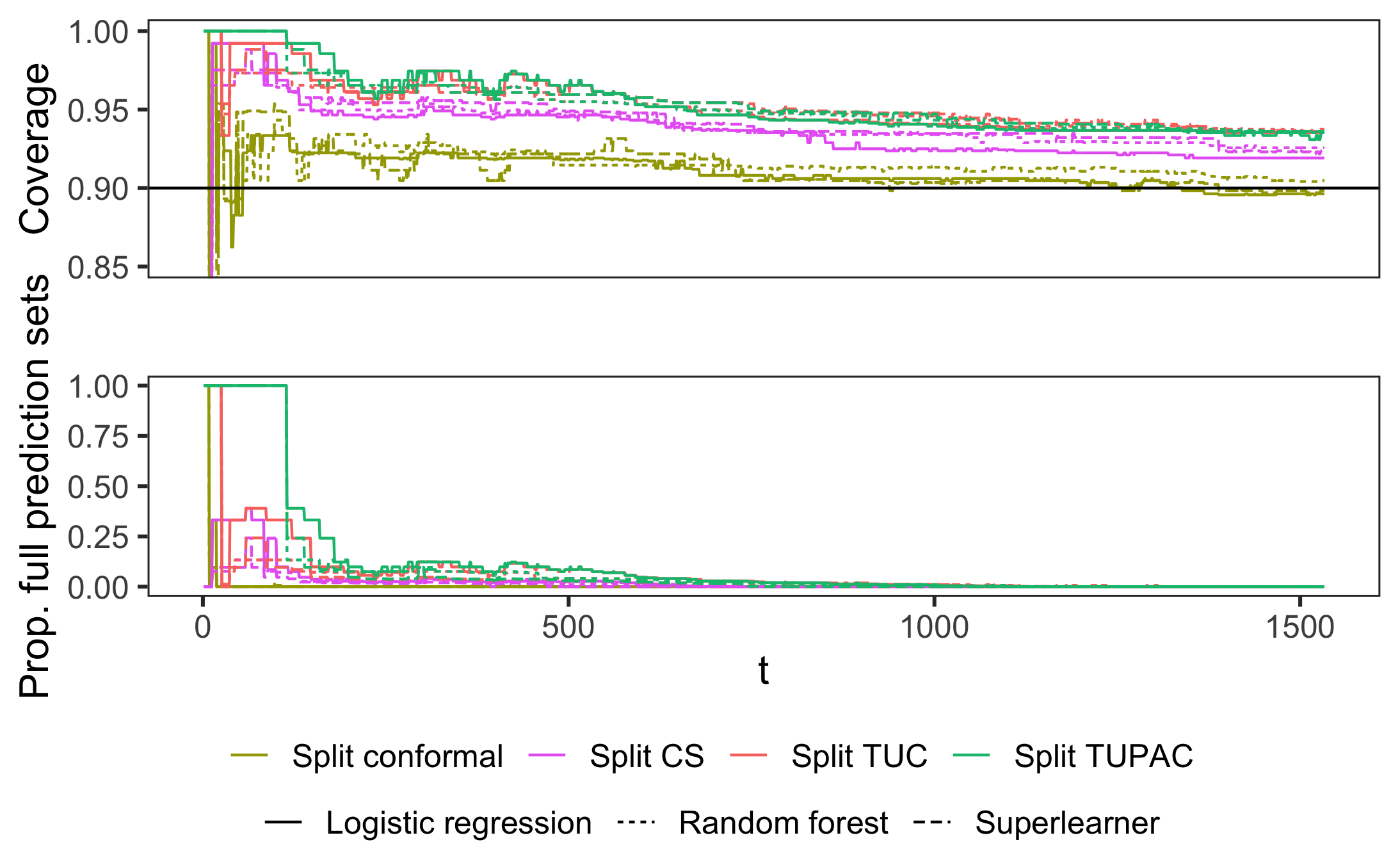}
    \caption{The coverage and proportion of noninformative prediction sets when split conformal, CS, TUC, and TUPAC prediction algorithms based on logistic regression, random forest, and a Superlearner ensemble are applied to the problem of spam detection are shown above.}
    \label{fig:fixed_application}
\end{figure}

We also construct online TUC and TUPAC prediction sets (Algorithm \ref{alg:dynamic_algo-illustration}) based on a logistic regression classifier trained via gradient descent. We again use the absolute residual non-conformity score. We take $\eta = 2$ and the functions $h$ and $g$ to be PMFs of a discretized log Normal random variable with means 6 and $\ln(8)$ and variance 1, respectively. We use $2/3$ of the dataset to construct the prediction intervals and $1/3$ to evaluate their coverage and width. Figure \ref{fig:dynamic_application} shows that the online TUC and TUPAC prediction sets achieve time-uniform coverage. In addition, the proportion of noninformative prediction intervals decays to 0 as $t$ increases for both methods.

\begin{figure}[h]
    \centering
    \includegraphics[width=0.9\textwidth]{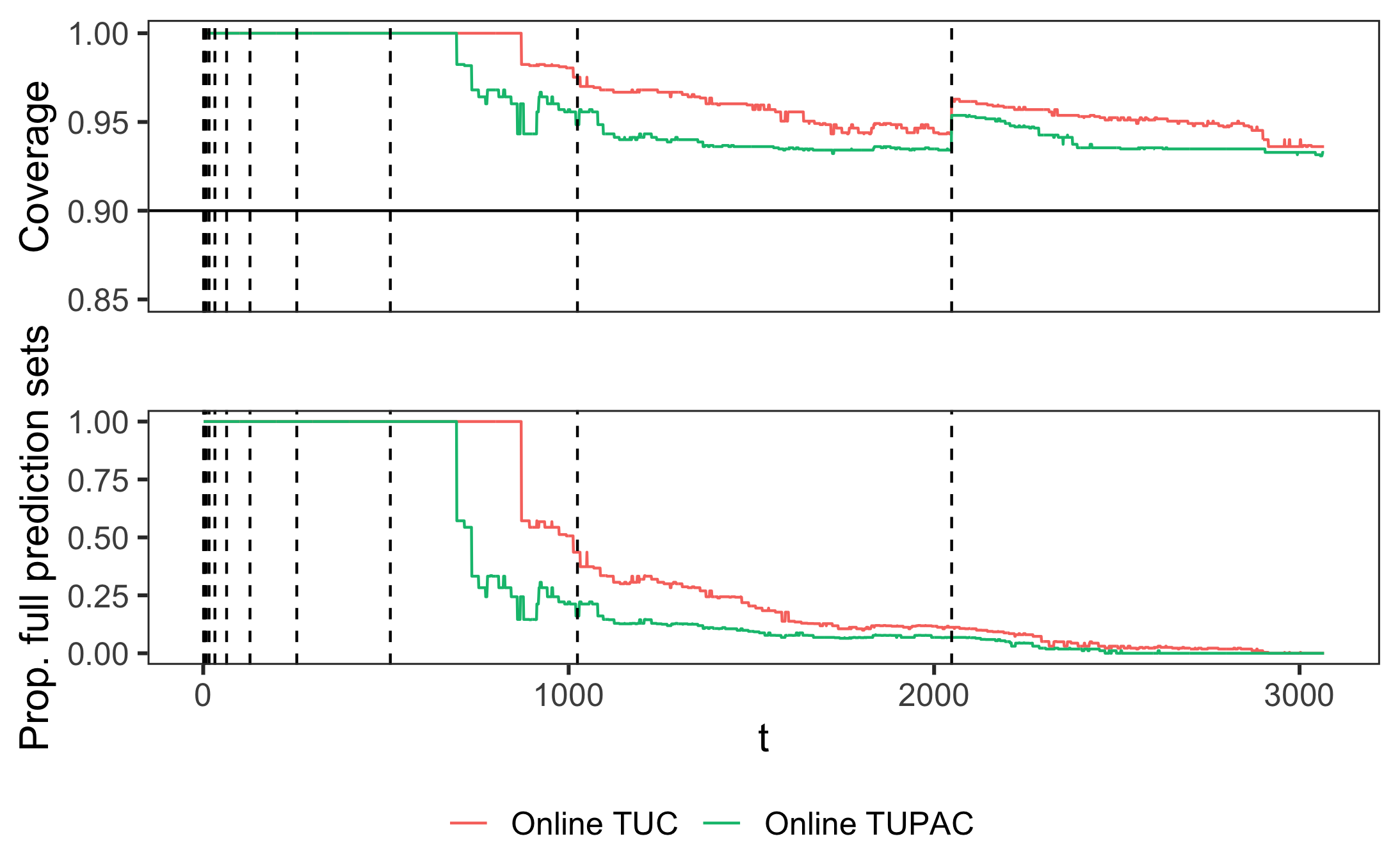}
    \caption{The coverage and proportion of noninformative prediction sets when online TUC and TUPAC prediction algorithms based on logistic regression trained via gradient descent are applied to the problem of spam detection. Each dashed vertical line represents the beginning of a new epoch.}
    \label{fig:dynamic_application}
\end{figure}

\section{Discussion} \label{discussion}
This paper has introduced variants of the conformal and PAC prediction sets that can be built online and relax the requirement of knowing the sample size ahead of time. In particular, we proposed two algorithms for constructing such anytime-valid distribution-free prediction sets; the first requires access to a fixed transformation while the second allows the transformation to be updated online as data is accumulated sequentially. 

To the best of our knowledge, this paper provides the first formalization of the time-uniform conformal and PAC prediction problems and thus opens the door to several interesting directions for future work. An important extension of our work is to settings where the data are not IID; in particular, since we are motivated by settings where data are observed sequentially, the distribution of the data will likely change over time in many cases. While we have shown that our online algorithms are adaptive in practice, it remains to explore theoretical guarantees in such scenarios. The conformal prediction problem under covariate shift has been explored in depth by \cite{tibshirani2019, lei2021, yang2022}. In particular, \cite{lei2021} provides a means for constructing prediction intervals for the individual treatment effect through the lens of covariate shift; it would be interesting to pursue whether our time-uniform conformal prediction method could be extended to construct prediction intervals for such causal quantities when the sample size is not known ahead of time. Another direction for future work is to explore whether the width of the proposed time-uniform prediction sets can be improved in finite samples. 

 \bibliography{references}

@Article{briceno2020,
   author={Brice{\~n}o, Javier and Ayll\'{o}n, Mar{\'i}a Dolores and Ciria, R{\'u}ben},
   title={Machine-learning algorithms for predicting results in liver transplantation: the problem of donor-recipient matching},
   journal={Current Opinion in Organ Transplantation},
   year={2020},
   volume={25},
   pages={406--411},
   month={8}
}

@article{lei2013,
    author={Lei, Jing and Robins, James and Wasserman, Larry},
    title={Distribution Free Prediction Sets},
    journal={Journal of the American Statistical Association},
    year={2013},
    volume={108},
    pages={278--287},
    month={3}
}

@article{lei2014,
    author={Lei, Jing and Wasserman, Larry},
    title={Distribution-free prediction bands for nonparametric regression},
    journal={Journal of the Royal Statistical Society Series B},
    year={2014},
    volume={76},
    pages={71--96}
}

@article{gibbs2021adaptive,
  title={Adaptive conformal inference under distribution shift},
  author={Gibbs, Isaac and Cand{\'e}s, Emmanuel},
  journal={Advances in Neural Information Processing Systems},
  volume={34},
  pages={1660--1672},
  year={2021}
}

@article{gibbs2024conformal,
  title={Conformal inference for online prediction with arbitrary distribution shifts},
  author={Gibbs, Isaac and Cand{\'e}s, Emmanuel},
  journal={Journal of Machine Learning Research},
  volume={25},
  number={162},
  pages={1--36},
  year={2024}
}

@article{angelopoulos2024conformal,
  title={Conformal pid control for time series prediction},
  author={Angelopoulos, Anastasios and Cand{\'e}s, Emmanuel and Tibshirani, Ryan J},
  journal={Advances in neural information processing systems},
  volume={36},
  year={2024}
}

@inproceedings{bhatnagar2023improved,
  title={Improved online conformal prediction via strongly adaptive online learning},
  author={Bhatnagar, Aadyot and Wang, Huan and Xiong, Caiming and Bai, Yu},
  booktitle={International Conference on Machine Learning},
  pages={2337--2363},
  year={2023},
  organization={PMLR}
}

@inproceedings{zaffran2022adaptive,
  title={Adaptive conformal predictions for time series},
  author={Zaffran, Margaux and F{\'e}ron, Olivier and Goude, Yannig and Josse, Julie and Dieuleveut, Aymeric},
  booktitle={International Conference on Machine Learning},
  pages={25834--25866},
  year={2022},
  organization={PMLR}
}

@article{massart1990,
    author = {Massart, Pascal},
    title = {The Tight Constant in the Dvoretzky-Kiefer-Wolfowitz Inequality},
    volume = {18},
    journal = {The Annals of Probability},
    number = {3},
    publisher = {Institute of Mathematical Statistics},
    pages = {1269 -- 1283},
    year = {1990},
    doi = {10.1214/aop/1176990746},
    URL = {https://doi.org/10.1214/aop/1176990746}
}

@article{gyorfi2019,
    author={Gy{\"o}rfi, L{\'a}szl{\'o} and Walk, Harro},
    title={Nearest neighbor based conformal prediction},
    volume={63},
    journal={Pub. Inst. Stat. Univ. Paris},
    year={2019}
}

@misc{yang2021,
    author={Yang, Yachong and Kuchibhotla, Arun Kumar},
    title={Finite-sample efficient conformal prediction},
    year={2021},
    url={https://arxiv.org/abs/2104.13871},
    publisher={arXiv}
}

@article{sesia2020comparison,
  title={A comparison of some conformal quantile regression methods},
  author={Sesia, Matteo and Cand{\'e}s, Emmanuel},
  journal={Stat},
  volume={9},
  number={1},
  pages={e261},
  year={2020},
  publisher={Wiley Online Library}
}

@article{lei2018distribution,
  title={Distribution-free predictive inference for regression},
  author={Lei, Jing and G’Sell, Max and Rinaldo, Alessandro and Tibshirani, Ryan J and Wasserman, Larry},
  journal={Journal of the American Statistical Association},
  volume={113},
  number={523},
  pages={1094--1111},
  year={2018},
  publisher={Taylor \& Francis}
}

@Book{guttman1970,
    author = {Guttman, Irwin},
    language = {eng},
    lccn = {74589969},
    publisher = {Griffin},
    series = {Griffin's statistical monographs and courses ; no. 26.},
    title = {Statistical tolerance regions: classical and Bayesian},
    year = {1970},
    address = {London}
}

@Book{krishnamoorthy2009,
    author={Krishnamoorthy, Kalimuthu and Matthew, Thomas},
    title={Statistical Tolerance Regions: Theory, Applications, and Computation},
    publisher={Wiley},
    year={2009},
    month={3}
    }

@Book{vovk2005,
    author={Vovk, Vladimir and Gammerman, Alexander and Shafer, Glenn},
    title={Algorithmic Learning in a Random World},
    publisher={Springer New York, NY},
    year={2005}}

@article{badue2021,
    title = {Self-driving cars: A survey},
    journal = {Expert Systems with Applications},
    volume = {165},
    pages = {113816},
    year = {2021},
    issn = {0957-4174},
    doi = {https://doi.org/10.1016/j.eswa.2020.113816},
    url = {https://www.sciencedirect.com/science/article/pii/S095741742030628X},
    author = {Claudine Badue and R{\^a}nik Guidolini and Raphael Vivacqua Carneiro and Pedro Azevedo and Vinicius B. Cardoso and Avelino Forechi and Luan Jesus and Rodrigo Berriel and Thiago M. Paix{\~a}o and Filipe Mutz and Lucas {de Paula Veronese} and Thiago Oliveira-Santos and Alberto F. {De Souza}}
}

@misc{angwin2016, 
    title={Machine bias}, 
    url={https://www.propublica.org/article/machine-bias-risk-assessments-in-criminal-sentencing}, 
    journal={ProPublica}, 
    author={Angwin, Julia and Larson, Jeff and Kirchner, Lauren and Mattu, Surya}, 
    year={2016}, 
    month={5}
}

@misc{yang2022,
  doi = {10.48550/ARXIV.2203.01761},
  
  url = {https://arxiv.org/abs/2203.01761},
  
  author = {Yang, Yachong and Kuchibhotla, Arun Kumar and Tchetgen, Eric Tchetgen},
  
  title = {Doubly Robust Calibration of Prediction Sets under Covariate Shift},
  
  publisher = {arXiv},
  
  year = {2022},
  
  copyright = {Creative Commons Attribution Share Alike 4.0 International}
}

@InProceedings{tibshirani2019,
    author={Tibshirani, Ryan J and Barber, Rina Foygel and Cand{\'e}s, Emmanuel and Ramdas, Aaditya},
    title={Conformal Prediction under Covariate Shift},
    year={2019},
    booktitle={Advances in neural information processing systems}}

@article{lei2021,
    author = {Lei, Lihua and Cand{\'e}s, Emmanuel},
    title = {Conformal inference of counterfactuals and individual treatment effects},
    journal = {Journal of the Royal Statistical Society: Series B (Statistical Methodology)},
    volume = {83},
    number = {5},
    pages = {911-938},
    doi = {https://doi.org/10.1111/rssb.12445},
    url = {https://rss.onlinelibrary.wiley.com/doi/abs/10.1111/rssb.12445},
    eprint = {https://rss.onlinelibrary.wiley.com/doi/pdf/10.1111/rssb.12445},
    year = {2021}
}

@InProceedings{papadopoulos2002,
    author={Papadopoulos, Harris and Proedrou, Kostas and Vovk, Volodya and Gammerman, Alex},
    title={Inductive Confidence Machines for Regression},
    booktitle={European Conference on Machine Learning},
    year={2002},
    publisher={Springer Berlin Heidelberg},
    pages={345--356}
}

@article{dumbgen1998,
    author = {Lutz D{\"u}mbgen},
    title = {New goodness-of-fit tests and their application to nonparametric confidence sets},
    volume = {26},
    journal = {The Annals of Statistics},
    number = {1},
    publisher = {Institute of Mathematical Statistics},
    pages = {288 -- 314},
    year = {1998},
    doi = {10.1214/aos/1030563987},
    URL = {https://doi.org/10.1214/aos/1030563987}
}

@article{skorski2023,
    author = {Maciej Skorski},
    title = {Bernstein-type bounds for beta distribution},
    journal = {Modern Stochastics: Theory and Applications},
    volume = {10},
    number = {2},
    year = {2023},
    pages = {211--228},
    doi = {10.15559/23-VMSTA223},
    issn = {2351-6046},
    publisher = {VTeX: Solutions for Science Publishing}
}

@inproceedings{romano2019,
     author = {Romano, Yaniv and Patterson, Evan and Cand{\'e}s, Emmanuel},
     booktitle = {Advances in Neural Information Processing Systems},
     editor = {H. Wallach and H. Larochelle and A. Beygelzimer and F. d\textquotesingle Alch{\'e}-Buc and E. Fox and R. Garnett},
     pages = {},
     publisher = {Curran Associates, Inc.},
     title = {Conformalized Quantile Regression},
     url = {https://proceedings.neurips.cc/paper_files/paper/2019/file/5103c3584b063c431bd1268e9b5e76fb-Paper.pdf},
     volume = {32},
     year = {2019}
}

@article{hu2012,
    title = {Online learning for quantile regression and support vector regression},
    journal = {Journal of Statistical Planning and Inference},
    volume = {142},
    number = {12},
    pages = {3107-3122},
    year = {2012},
    issn = {0378-3758},
    doi = {https://doi.org/10.1016/j.jspi.2012.06.010},
    url = {https://www.sciencedirect.com/science/article/pii/S037837581200211X},
    author = {Ting Hu and Dao-Hong Xiang and Ding-Xuan Zhou}
}

@misc{dua2019,
    author = {Dua, Dheeru and Graff, Casey},
    year = {2017},
    title = {UCI Machine Learning Repository},
    url = {http://archive.ics.uci.edu/ml},
    institution = {University of California, Irvine, School of Information and Computer Sciences}}

@misc{avelin2023,
      title={Sequential inductive prediction intervals}, 
      author={Benny Avelin},
      year={2023},
      eprint={2312.04950},
      archivePrefix={arXiv},
      primaryClass={stat.ME},
      url={https://arxiv.org/abs/2312.04950}, 
}

@article{howard2022,
   title={Sequential estimation of quantiles with applications to A/B testing and best-arm identification},
   volume={28},
   ISSN={1350-7265},
   url={http://dx.doi.org/10.3150/21-BEJ1388},
   DOI={10.3150/21-bej1388},
   number={3},
   journal={Bernoulli},
   publisher={Bernoulli Society for Mathematical Statistics and Probability},
   author={Howard, Steven R. and Ramdas, Aaditya},
   year={2022},
   month=aug 
}

@Misc{howard2021,
  author = {Howard, Steven R. and Waudby-Smith, Ian and Ramdas, Aaditya},
  title = {{ConfSeq}: software for confidence sequences and uniform boundaries},
  year = {2021},
  url = "https://github.com/gostevehoward/confseq",
  note = {[Online; accessed <today>]}
}

@article{gama2014,
    author = {Gama, Jo{\~a}o and \v{Z}liobaitundefined, Indrundefined and Bifet, Albert and Pechenizkiy, Mykola and Bouchachia, Abdelhamid},
    title = {A survey on concept drift adaptation},
    year = {2014},
    issue_date = {April 2014},
    publisher = {Association for Computing Machinery},
    address = {New York, NY, USA},
    volume = {46},
    number = {4},
    issn = {0360-0300},
    url = {https://doi.org/10.1145/2523813},
    doi = {10.1145/2523813},
    journal = {ACM Comput. Surv.},
    month = mar,
    articleno = {44},
    numpages = {37}
}

@article{hoi2021,
    title = {Online learning: A comprehensive survey},
    journal = {Neurocomputing},
    volume = {459},
    pages = {249-289},
    year = {2021},
    issn = {0925-2312},
    doi = {https://doi.org/10.1016/j.neucom.2021.04.112},
    url = {https://www.sciencedirect.com/science/article/pii/S0925231221006706},
    author = {Steven C.H. Hoi and Doyen Sahoo and Jing Lu and Peilin Zhao}
}

@misc{yang2025,
      title={Discounted Online Convex Optimization: Uniform Regret Across a Continuous Interval}, 
      author={Wenhao Yang and Sifan Yang and Lijun Zhang},
      year={2025},
      eprint={2505.19491},
      archivePrefix={arXiv},
      primaryClass={cs.LG},
      url={https://arxiv.org/abs/2505.19491}, 
}

@INPROCEEDINGS {thudi2022,
    author = {Thudi, Anvith and Deza, Gabriel and Chandrasekaran, Varun and Papernot, Nicolas},
    booktitle = { 2022 IEEE 7th European Symposium on Security and Privacy (EuroS\&P) },
    title = {{ Unrolling SGD: Understanding Factors Influencing Machine Unlearning }},
    year = {2022},
    volume = {},
    ISSN = {},
    pages = {303-319},
    doi = {10.1109/EuroSP53844.2022.00027},
    url = {https://doi.ieeecomputersociety.org/10.1109/EuroSP53844.2022.00027},
    publisher = {IEEE Computer Society},
    address = {Los Alamitos, CA, USA},
    month =Jun}

@InProceedings{you2016,
    author={You, Sheng-Chi and Lin, Hsuan-Tien},
    editor={Bailey, James and Khan, Latifur and Washio, Takashi and Dobbie, Gill and Huang, Joshua Zhexue and Wang, Ruili},
    title="A Simple Unlearning Framework for Online Learning Under Concept Drifts",
    booktitle="Advances in Knowledge Discovery and Data Mining",
    year="2016",
    publisher="Springer International Publishing",
    address="Cham",
    pages="115--126",
    isbn="978-3-319-31753-3"
}

@article{lu2019,
  author={Lu, Jie and Liu, Anjin and Dong, Fan and Gu, Feng and Gama, Jo{\~a}o and Zhang, Guangquan},
  journal={IEEE Transactions on Knowledge and Data Engineering}, 
  title={Learning under Concept Drift: A Review}, 
  year={2019},
  volume={31},
  number={12},
  pages={2346-2363},
  doi={10.1109/TKDE.2018.2876857}}

@article{zhang2017,
    title = {Online sequential ELM algorithm with forgetting factor for real applications},
    journal = {Neurocomputing},
    volume = {261},
    pages = {144-152},
    year = {2017},
    note = {Advances in Extreme Learning Machines (ELM 2015)},
    issn = {0925-2312},
    doi = {https://doi.org/10.1016/j.neucom.2016.09.121},
    url = {https://www.sciencedirect.com/science/article/pii/S0925231217302205},
    author = {Haigang Zhang and Sen Zhang and Yixin Yin}
}

@article{liu2016,
    title = {FP-ELM: An online sequential learning algorithm for dealing with concept drift},
    journal = {Neurocomputing},
    volume = {207},
    pages = {322-334},
    year = {2016},
    issn = {0925-2312},
    doi = {https://doi.org/10.1016/j.neucom.2016.04.043},
    url = {https://www.sciencedirect.com/science/article/pii/S0925231216303125},
    author = {Dong Liu and YouXi Wu and He Jiang}
}

@misc{gauthier2025, 
	title={E-Values Expand the Scope of Conformal Prediction}, 
	author={Etienne Gauthier and Francis Bach and Michael I. Jordan}, 
	year={2025}, 
	eprint={2503.13050}, 
	archivePrefix={arXiv}, 
	primaryClass={stat.ML}, 
	url={https://arxiv.org/abs/2503.13050}
 }

\appendix
\section{Proofs of the main results} \label{proofs}

\subsection{Proof of Proposition \ref{prop:equiv_TUC_goal}} \label{equiv_TUC_proof}
The backward implication follows from the fact that \begin{align*}
    \mathbb{P}\left(Z \in \widehat{C}_{T, \alpha}\right) &= \mathbb{E}\left\{\mathbb{E}\left[\mathbbm{1}\{Z \in \widehat{C}_{T, \alpha}\} \vert Z_1, \dots, Z_T\right]\right\} \\
    &= \mathbb{E}\left\{\mathbb{P}\left(\mathbbm{1}\{Z \in \widehat{C}_{T, \alpha}\} \vert Z_1, \dots, Z_T\right)\right\} \\
    &= \mathbb{E}\left[\mu_Z\left(\widehat{C}_{T, \alpha}\right)\right] \\
    &\geq \mathbb{E}\left[\min_{t \geq 1} \mu_Z\left(\widehat{C}_{t, \alpha}\right)\right].
\end{align*} This immediately implies that \begin{equation*}
    \mathbb{P}\left(Z \in \widehat{C}_{T, \alpha}\right) \geq 1-\alpha \text{ for all random times $T$ } \impliedby \mathbb{E}\left[\min_{t \geq 1} \mu_Z\left(\widehat{C}_{t, \alpha}\right)\right] \geq 1-\alpha.
\end{equation*} The forwards implication follows from considering the random time $T = \argmin_{t\geq 1}\mu_Z\left(\widehat{C}_{t, \alpha}\right)$. Then, \begin{align*}
    \mathbb{P}\left(Z \in \widehat{C}_{T, \alpha}\right) \geq 1 - \alpha \text{ for all random times $T$ } &\implies \mathbb{P}\left(Z \in \widehat{C}_{\argmin_{t\geq 1}\mu_Z\left(\widehat{C}_{t, \alpha}\right), \alpha}\right) \geq 1 - \alpha \\
    &\iff \mathbb{E}\left[\mathbb{P}\left(Z \in \widehat{C}_{\argmin_{t\geq 1}\mu_Z\left(\widehat{C}_{t, \alpha}\right), \alpha} \ \vert \ \left\{Z_t\right\}_{t=1}^{\infty}\right)\right] \geq 1-\alpha \\
    &\iff \mathbb{E}\left[\mu_Z\left(\widehat{C}_{\argmin_{t\geq 1}\mu_Z\left(\widehat{C}_{t, \alpha}\right), \alpha}\right)\right] \geq 1-\alpha \\
    &\iff \mathbb{E}\left[\min_{t \geq 1} \mu_Z\left(\widehat{C}_{t, \alpha}\right)\right] \geq 1-\alpha.
\end{align*}

\subsection{Proof of Proposition \ref{prop:equiv_TUPAC_goal}}
Analogously to the TUC goal, the TUPAC guarantee can be written in 2 equivalent forms.
\begin{proposition} \label{prop:equiv_TUPAC_goal}
Suppose $Z_t, Z \sim \mu_Z$ for each $t = 1, 2, \dots$. Then, \small\begin{equation}        
    \mathbb{P}\left\{\mathbb{P}\left(Z \in \widehat{C}_{T, \alpha, \delta} \vert Z_1, \dots, Z_T\right) \geq 1-\alpha\right\} \geq 1 - \delta \text{ for all random $T$ } \iff \mathbb{P}\left\{\min_{t \geq 1} \mu_Z(\widehat{C}_{t, \alpha, \delta}) \geq 1 - \alpha\right\} \geq 1 - \delta.
\end{equation}
\end{proposition}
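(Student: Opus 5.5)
The plan mirrors the proof of Proposition \ref{prop:equiv_TUC_goal}, with expectations of coverage replaced by probabilities of the events $\{\mu_Z(\cdot)\ge 1-\alpha\}$.

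\emph{Step 1 (conditional coverage equals probability content).} Since $Z$ is independent of the stream $\{Z_t\}_{t\ge1}$ and $\widehat{C}_{t,\alpha,\delta}$ is a measurable function of $Z_1,\dots,Z_t$, we have, for any random time $T$,
\[
\mathbb{P}\left(Z \in \widehat{C}_{T,\alpha,\delta} \,\middle|\, Z_1,\dots,Z_T\right) = \mu_Z\left(\widehat{C}_{T,\alpha,\delta}\right) \quad\text{almost surely}.
\]
This is the same computation as in the proof of Proposition \ref{prop:equiv_TUC_goal}. One conditions on the whole stream, or on $(T, Z_1,\dots,Z_T)$, and uses the independence of $Z$. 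So the left-hand side of the claimed equivalence is $\mathbb{P}\{\mu_Z(\widehat{C}_{T,\alpha,\delta})\ge 1-\alpha\}\ge 1-\delta$ for all random $T$.

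\emph{Step 2 (backward implication).} Pointwise, $\mu_Z(\widehat{C}_{T,\alpha,\delta}) \ge \min_{t\ge1}\mu_Z(\widehat{C}_{t,\alpha,\delta})$. This gives the event inclusion
\[
\left\{\min_{t\ge1}\mu_Z(\widehat{C}_{t,\alpha,\delta})\ge 1-\alpha\right\} \subseteq \left\{\mu_Z(\widehat{C}_{T,\alpha,\delta})\ge 1-\alpha\right\}.
\]
Monotonicity of probability then yields the bound for every random $T$.

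\emph{Step 3 (forward implication).} Choose the adversarial time $T=\argmin_{t\ge1}\mu_Z(\widehat{C}_{t,\alpha,\delta})$, exactly as in the TUC proof. Then $\mu_Z(\widehat{C}_{T,\alpha,\delta})=\min_t \mu_Z(\widehat{C}_{t,\alpha,\delta})$, and the hypothesis at this $T$ is precisely the right-hand side.

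The main obstacle is that the minimum over infinitely many $t$ might not be attained, so this $\argmin$ may not exist. To handle it, I would argue by contraposition. Suppose $\mathbb{P}\{\inf_t \mu_Z(\widehat{C}_t) \ge 1-\alpha\} < 1-\delta$, and let $A=\{\inf_t\mu_Z(\widehat{C}_t)<1-\alpha\}$. On $A$ some $t$ has $\mu_Z(\widehat{C}_t)<1-\alpha$, so define $T$ as the first such $t$, and set $T=1$ off $A$. This $T$ is a genuine random time, in fact a stopping time with respect to the filtration enlarged by knowledge of $\mu_Z$. It satisfies $\{\mu_Z(\widehat{C}_T)\ge1-\alpha\}\subseteq A^c$, hence $\mathbb{P}\{\mu_Z(\widehat{C}_T)\ge 1-\alpha\}<1-\delta$, contradicting the hypothesis. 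This first-violation construction avoids attainment issues entirely, and also shows that the minimum in the statement can be read as an infimum.
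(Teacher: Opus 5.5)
Your proof is correct and is essentially the paper's argument (monotonicity $\mu_Z(\widehat{C}_{T,\alpha,\delta})\ge\min_{t}\mu_Z(\widehat{C}_{t,\alpha,\delta})$ for the backward direction, an adversarial random time for the forward one), and your first-violation time is a sound replacement for the paper's $\argmin$: the attainment issue disappears because $\{\min_{t}\mu_Z(\widehat{C}_{t,\alpha,\delta})\ge 1-\alpha\}$ is simply the event that $\mu_Z(\widehat{C}_{t,\alpha,\delta})\ge 1-\alpha$ for every $t$. One small correction: because you set $T=1$ on $A^c$, the event $\{T=1\}$ depends on the whole stream, so your $T$ is not a stopping time as you claim; this is harmless here since the proposition quantifies over all random times (the paper's $\argmin$ is not a stopping time either), and if you do want a stopping time, $T_N=\min(\tau,N)$ with $\tau$ the first violation time works for large $N$ because $\mathbb{P}(\tau\le N)\uparrow\mathbb{P}(A)>\delta$.
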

The backwards implication follows from the fact that \begin{equation*}
    \mathbb{P}\left\{\mathbb{P}\left(Z \in \widehat{C}_{T, \alpha, \delta} \vert Z_1, \dots, Z_T\right) \geq 1-\alpha\right\} = \mathbb{P}\left(\mu_Z\left(\widehat{C}_{T, \alpha, \delta}\right) \geq 1-\alpha\right) \geq \mathbb{P}\left(\min_{t \geq 1} \mu_Z\left(\widehat{C}_{t, \alpha, \delta}\right) \geq 1-\alpha\right).
\end{equation*} This immediately implies that {\small\begin{equation*}
    \mathbb{P}\left\{\mathbb{P}\left(Z \in \widehat{C}_{T, \alpha, \delta} \vert Z_1, \dots, Z_T\right) \geq 1-\alpha\right\} \geq 1 - \delta \text{ for all random $T$ } \impliedby \mathbb{P}\left(\min_{t \geq 1} \mu_Z\left(\widehat{C}_{t, \alpha, \delta}\right) \geq 1-\alpha\right) \geq 1 - \delta.
\end{equation*}} The forwards implication follows from considering the random time $T = \argmin_{t \geq 1} \mu_Z\left(\widehat{C}_{t, \alpha, \delta}\right)$. Then, \begin{align*}
    &\mathbb{P}\left\{\mathbb{P}\left(Z \in \widehat{C}_{T, \alpha, \delta} \vert Z_1, \dots, Z_T\right) \geq 1-\alpha\right\} \geq 1 - \delta \text{ for all random $T$ } \implies \\
    &\qquad \mathbb{P}\left\{\mathbb{P}\left(Z \in \widehat{C}_{\argmin_{t \geq 1} \mu_Z\left(\widehat{C}_{t, \alpha, \delta}\right), \alpha, \delta} \vert \{Z_t\}_{t=1}^{\infty}\right) \geq 1-\alpha\right\} \geq 1 - \delta \iff \\
    &\qquad \mathbb{P}\left\{\mu_Z\left(\widehat{C}_{\argmin_{t \geq 1} \mu_Z\left(\widehat{C}_{t, \alpha, \delta}\right), \alpha, \delta}\right) \geq 1-\alpha\right\} \geq 1 - \delta \iff \\
    &\qquad \mathbb{P}\left\{\min_{t \geq 1} \mu_Z\left(\widehat{C}_{t, \alpha, \delta}\right) \geq 1-\alpha\right\} \geq 1 - \delta.
\end{align*}

\subsection{Proof of Theorem \ref{fixed_confidence_sequence_pac_thm}} \label{fixed_confidence_sequence_pac_proof}
From Proposition \ref{prop:equiv_TUPAC_goal} and the definition of $\mu_Z$, the TUPAC goal can be restated as \begin{equation*}
    \mathbb{P}\left\{\forall t \in \mathbb{N}: \mathbb{P}\left(Z \in \widehat{C}_{t, \alpha, \delta} \ \vert \ Z_1, \dots, Z_t\right) \geq 1-\alpha\right\} \geq 1-\delta.
\end{equation*} By the definition of $\widehat{C}_{t, \alpha, \delta}$ in Algorithm \ref{fixed_anytime_alg}, the preceding goal becomes \begin{equation*}
    \mathbb{P}\left\{\forall t \in \mathbb{N}: \mathbb{P}\left(R(Z) \leq \widehat{q}_{t, \alpha, \delta} \ \vert \ Z_1, \dots, Z_t\right) \geq 1-\alpha\right\} \geq 1-\delta.
\end{equation*} Letting $F_R$ be the CDF of $R(Z)$ and noting that $Z$ is independent of $Z_1, \dots, Z_t$, we see that the goal is equivalent to \begin{equation*}
    \mathbb{P}\left\{\forall t \in \mathbb{N}: F_R\left(\widehat{q}_{t, \alpha, \delta}\right) \geq 1-\alpha\right\} \geq 1-\delta.
\end{equation*} Now, Theorem 1 of \cite{howard2022} tells us that \begin{equation*}
    \mathbb{P}\left\{\forall t \in \mathbb{N}: \inf\left\{z: F_R\left(z\right) \geq 1-\alpha\right\} \leq \widehat{q}_{t, \alpha, \delta}\right\} \geq 1-\delta
\end{equation*} which immediately implies the former and so we conclude the proof.

\subsection{Proof of Theorem \ref{fixed_anytime_conformal_thm}} \label{fixed_anytime_conformal_proof}
The result relies on the following lemma about uniform order statistics. 
\begin{lemma} \label{fixed_anytime_conformal_lemma}
Let $U_1, \dots, U_t$ be independent standard uniform random variables and take $U_{1:t} \leq \dots \leq U_{t:t}$ to be the order statistics of $\{U_s\}_{s=1}^t$. Let $h : \mathbb{Z}^{\geq 0} \to \mathbb{R}$ satisfy $\sum_{t=0}^{\infty} h(t) = 1$. Then, for any $\alpha \in (0.5, 1]$, \begin{equation*}
    \mathbb{E}\left[\min_{t > t_0} U_{\lceil (t+1)(1-\alpha + u_t)\rceil:t}\right] \geq 1-\alpha
\end{equation*} where \begin{equation*}
    u_t = \frac{4(2\alpha - 1)\log\left(\frac{1}{h(t)}\right)}{3(t+3)} + \sqrt{\frac{2\alpha(1-\alpha)\log\left(\frac{1}{h(t)}\right)}{t+2}} + \frac{1}{2}\sqrt{\frac{2\pi\alpha(1-\alpha)}{t+2}}\left(1 - \sum_{s=0}^{t_0}h(s)\right),
\end{equation*} and $t_0$ is the smallest positive integer satisfying \begin{equation*}
    (t+1)(1-\alpha) \leq (t+1)\left(1-\alpha + u_t\right) \leq t \ \forall t > t_0.
\end{equation*}
\end{lemma}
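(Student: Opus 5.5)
The plan is to reduce the expectation of the minimum to a sum of lower-tail probabilities, then control each tail with the Bernstein-type bound for the Beta distribution (Theorem 1 of \cite{skorski2023}).

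Write $k_t=\lceil (t+1)(1-\alpha+u_t)\rceil$ and $V_t=U_{k_t:t}$. The defining condition on $t_0$ gives $1\le k_t\le t$ for all $t>t_0$, so $V_t\sim\mathrm{Beta}(k_t,\,t+1-k_t)$. Two facts about this Beta law will be used. Its mean satisfies $k_t/(t+1)\ge 1-\alpha+u_t$. Its variance is at most of order $\alpha(1-\alpha)/(t+2)$ once $k_t/(t+1)$ is near $1-\alpha$.

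\emph{Step 1 (reduction to tails).} Set $M=\min_{t>t_0}V_t$ and let $c$ be a slack constant fixed below. Then
\[
\mathbb{E}[M]\;\ge\;1-\alpha+c-\mathbb{E}\bigl[(1-\alpha+c-M)_+\bigr]
=1-\alpha+c-\int_0^\infty \mathbb{P}\bigl(M<1-\alpha+c-\varepsilon\bigr)\,d\varepsilon .
\]
A union bound over $t>t_0$ inside the integral gives
\[
\int_0^\infty \mathbb{P}\bigl(M<1-\alpha+c-\varepsilon\bigr)\,d\varepsilon\;\le\;\sum_{t>t_0}\int_0^\infty \mathbb{P}\bigl(V_t<1-\alpha+c-\varepsilon\bigr)\,d\varepsilon .
\]
The choice of $c$ is tied to the third summand $c_t$ of $u_t$. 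Here
\[
c_t=\tfrac12\sqrt{2\pi\alpha(1-\alpha)/(t+2)}\;\Bigl(1-\textstyle\sum_{s\le t_0}h(s)\Bigr),\qquad 1-\textstyle\sum_{s\le t_0}h(s)=\sum_{t>t_0}h(t).
\]
So $c_t$ is exactly a Gaussian integral $\tfrac12\sqrt{2\pi\sigma_t^2}$, with $\sigma_t^2=\alpha(1-\alpha)/(t+2)$, multiplied by the total remaining $h$-mass.

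\emph{Step 2 (per-time tail bound).} For each $t$, the event $\{V_t<1-\alpha+c-\varepsilon\}$ implies that $V_t$ falls below its mean by at least $u_t-c+\varepsilon$. Apply the Bernstein-type lower-tail bound
\[
\mathbb{P}\bigl(V_t<\mathbb{E}V_t-x\bigr)\le \exp\!\Bigl(-\frac{x^2}{2(\sigma_t^2+b_tx)}\Bigr),\qquad b_t\asymp\frac{2\alpha-1}{t+3}.
\]
This is where $\alpha>1/2$ enters: it fixes the sign of the skewness term for the lower tail. The first two summands of $u_t$ are exactly the inverse of this bound at level $h(t)$; that is, they are the $x$ solving $x^2/(2(\sigma_t^2+b_tx))=\log(1/h(t))$, up to the usual $\sqrt{a}+b$ relaxation.

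Then split the deviation as $x=x_t^{(1)}+y$, where $x_t^{(1)}$ is the part coming from the first two summands. The superadditivity $(a+b)^2\ge a^2+b^2$, together with a little care on the linear term, should give
\[
\mathbb{P}\bigl(V_t<\mathbb{E}V_t-x_t^{(1)}-y\bigr)\le h(t)\,e^{-y^2/(2\sigma_t^2)}.
\]
Integrating in $\varepsilon$ then yields a contribution of at most $h(t)\cdot\tfrac12\sqrt{2\pi}\,\sigma_t$ from time $t$.

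\emph{Step 3 (bookkeeping) and the main obstacle.} The total loss is at most $\sum_{t>t_0}h(t)\,\tfrac12\sqrt{2\pi}\,\sigma_t$, and this must be paid for by the slack $c$ supplied through the $c_t$'s. The difficulty is that $c$ in Step 1 is a single constant, while the $c_t$ vary with $t$.

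The first thing I would try is to apply Step 1 to the shifted variables $V_t-c_t$ rather than to $V_t$. Because $\sigma_t$ is decreasing in $t$, we have $h(t)\sigma_t\le h(t)\sigma_{t_0+1}$, and the identity $\sum_{t>t_0}h(t)=1-\sum_{s\le t_0}h(s)$ then makes the total loss bounded by the matching $c_t$ at the worst time. The step I expect to be delicate is showing that the slack $c_t$ built into each $u_t$ (through the mean of $V_t$) dominates the aggregated Gaussian integral uniformly over which $t$ attains the minimum.

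If that direct comparison fails, the fallback is to absorb $c_t$ into the Gaussian integral itself, integrating from $c_t$ instead of from $0$. This makes each time's contribution at most $h(t)\int_{c_t}^\infty e^{-x^2/(2\sigma_t^2)}\,dx$. I would then verify that these shifted tails sum to something nonpositive relative to $1-\alpha$, possibly by appealing to the ceiling in $k_t$ and to the factor $t/(t+1)$ used in the definition of $\widehat q_t$.
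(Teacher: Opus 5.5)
Your Step 3 is left open, and the obstacle you flag there cannot be overcome. Whenever $u_t\to0$, the displayed inequality is false, so no way of distributing the slack can close the argument.

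Write $V_t=U_{k_t:t}$ and $p_t=k_t/(t+1)$. If $\log(1/h(t))=o(t)$ (e.g.\ the discretized log-normal $h$ used in the paper's experiments), then $u_t\to0$, and
\[
\mathbb{E}\bigl|V_s-(1-\alpha)\bigr|\le u_s+\tfrac{1}{s+1}+\tfrac{1}{2\sqrt{s+2}}\to0 .
\]
Using $\min(a,b)\le\min(a,c)+|b-c|$, for every $s>t_0$,
\[
\mathbb{E}\Bigl[\inf_{t>t_0}V_t\Bigr]\le\mathbb{E}\bigl[\min(V_{t_0+1},1-\alpha)\bigr]+\mathbb{E}\bigl|V_s-(1-\alpha)\bigr| .
\]
Letting $s\to\infty$ gives $\mathbb{E}[\inf_{t>t_0}V_t]\le1-\alpha-\mathbb{E}(1-\alpha-V_{t_0+1})_+<1-\alpha$ for every $\alpha<1$. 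The last inequality is strict because $V_{t_0+1}$ is Beta with a positive density on $(0,1)$. This holds for any coupling of the $V_t$.

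You can see the failure inside your own scheme:
\begin{itemize}
\item With level $1-\alpha+c$ and $c>0$, Jensen gives $\mathbb{E}(1-\alpha+c-V_t)_+\ge 1-\alpha+c-p_t\to c$, so the union-bound series in Step 1 diverges.
\item With $c=0$, or in your shifted and fallback versions, you end with $1-\alpha$ minus a sum of strictly positive tail integrals. Because $\inf_t c_t=0$, no per-time slack can pay for this aggregate loss.
\item The ceiling adds only $O(1/t)$ to $p_t$, so it supplies no lasting slack.
\item The factor $t/(t+1)$ belongs to $\widehat q_t$ in the theorem and is already encoded in the index $\lceil(t+1)(\cdot)\rceil$, so it does not help either.
\end{itemize}
A correct statement must either weaken the conclusion (for instance to $1-\alpha$ minus an explicit positive term) or keep $\liminf_t u_t>0$, which gives up asymptotic exactness of the sets.

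The paper's proof does not supply the missing idea either. It runs your Steps 1--2 on normalized deviations. With $H=1-\sum_{s\le t_0}h(s)$, it shows that $W_t=(\sqrt2\,\sigma_t)^{-1}(p_t-V_t-u_t)+\tfrac{\sqrt\pi}{2}H$ satisfies $\mathbb{P}(W_t>\delta)\le e^{-\delta^2}h(t)$. It union-bounds over $t$ and integrates to get $\mathbb{E}\max_{t>t_0}(\sqrt2\,\sigma_t)^{-1}(p_t-V_t-u_t)\le0$. It then ``rearranges'' to $\mathbb{E}\max_{t>t_0}(p_t-V_t-u_t)\le0$, which removes an unbounded, $t$-dependent factor from inside an expected maximum. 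That is exactly your constant-versus-$c_t$ mismatch. The rearranged inequality is false in the regime above, since it implies the lemma.

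Two smaller points in your Step 2 would also need repair:
\begin{itemize}
\item For $\alpha>1/2$ and $1-\alpha<p_t<\alpha$ one has $p_t(1-p_t)>\alpha(1-\alpha)$. So $\sigma_t^2$ does not bound $\mathrm{Var}(V_t)=p_t(1-p_t)/(t+2)$.
\item A Bernstein bound with $b_t>0$ does not by itself yield the pure tail $h(t)e^{-y^2/(2\sigma_t^2)}$. The paper discards the corresponding $\delta^2$-term only via the sign condition $2\alpha-1\le0$, which contradicts the lemma's range $\alpha\in(0.5,1]$.
\end{itemize}
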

Taking this lemma as given, the result follows from writing the desired guarantee in terms of uniform order statistics. By Proposition \ref{prop:equiv_TUC_goal}, it suffices to show that \begin{equation*}
    \mathbb{E}\left[\min_{t > 0} \mu_Z\left(\widehat{C}_{t, \alpha}\right)\right] \geq 1 - \alpha.
\end{equation*} Letting $\widehat{R}_{1:t} \leq \dots \leq \widehat{R}_{t:t}$ be the order statistics of $\{R(Z_s)\}_{s=1}^t$, define \begin{equation*}
    \widehat{q}_{t, \alpha} = \begin{cases}\widehat{R}_{\lceil(t+1)(1-\alpha + u_t)\rceil:t} & \text{if } t > t_0 \\ \infty & \text{otherwise}\end{cases} 
\end{equation*} so that $\widehat{C}_{t, \alpha} = \left\{z : R(z) \leq \widehat{q}_{t, \alpha}\right\} = R^{-1}\left(-\infty, \widehat{R}_{\lceil(t+1)(1-\alpha + u_t)\rceil:t}\right]$ if $t > t_0$. Now, notice that \begin{align*}
    \mathbb{E}\left[\min_{t > 0} \mu_Z\left(\widehat{C}_t\right)\right] &= \mathbb{E}\left[\min_{t > t_0} \mu_Z\left(\widehat{C}_t\right)\right] \\
    &= \mathbb{E}\left[\min_{t > t_0} \mu_Z\left(R^{-1}\left(\infty, \widehat{R}_{\lceil(t+1)(1-\alpha + u_t)\rceil:t}\right]\right)\right] \\
    &= \mathbb{E}\left[\min_{t > t_0} F_R\left(\widehat{R}_{\lceil(t+1)(1-\alpha + u_t)\rceil:t}\right)\right]
\end{align*} where $F_R$ is the CDF of $R(Z)$. Furthermore, letting $\{U_s\}_{s=1}^t$ be independent standard uniform random variables with order statistics $U_{1:t} \leq \dots \leq U_{t:t}$, properties of the CDF imply that $F_R\left(\widehat{R}_{\lceil(t+1)(1-\alpha + u_t)\rceil:t}\right) \overset{d}{=} U_{\lceil(t+1)(1-\alpha + u_t)\rceil:t}$. Thus, \begin{equation*}
    \mathbb{E}\left[\min_{t > t_0} U_{\lceil(t+1)(1-\alpha + u_t)\rceil:t} \right] \geq 1 - \alpha \implies \eqref{eq:tuc_goal}
\end{equation*} and applying Lemma \ref{fixed_anytime_conformal_lemma} yields the desired guarantee. 

\begin{proof}[Lemma \ref{fixed_anytime_conformal_lemma}]
It is well known that $U_{\lceil(t+1)(1-\alpha + u_t)\rceil: t} \sim \text{Beta}(\lceil(t+1)(1-\alpha + u_t)\rceil, t+1 - \lceil(t+1)(1-\alpha + u_t)\rceil)$. Then, we can apply Theorem 1 from \cite{skorski2023} to conclude that if $\alpha < \frac{1}{2}$, \begin{equation*}
    \mathbb{P}\left(\frac{\lceil(t+1)(1-\alpha + u_t)\rceil}{t+1} - U_{\lceil(t+1)(1-\alpha + u_t)\rceil:t} > \epsilon\right) \leq \exp\left(-\frac{\epsilon^2}{2\left(v_t + \frac{c_t\epsilon}{3}\right)}\right) \ \forall \epsilon > 0, t > t_0
\end{equation*} where $v_t = \frac{\frac{\lceil(t+1)(1-\alpha + u_t)\rceil}{t+1}\left(1 - \frac{\lceil(t+1)(1-\alpha + u_t)\rceil}{t+1}\right)}{t+2}$ and $c_t = \frac{2\left(1-2\frac{\lceil(t+1)(1-\alpha + u_t)\rceil}{t+1}\right)}{t+3}$. This follows from choosing $t_0$ so that $(t+1)(1-\alpha) \leq \lceil(t+1)(1-\alpha + u_t)\rceil \leq t \ \forall t > t_0$ which implies that $t+1 -\lceil(t+1)(1-\alpha + u_t)\rceil \leq (1-\alpha)(t+1) \leq \lceil(t+1)(1-\alpha + u_t)\rceil$ when $\alpha < \frac{1}{2}$. Next, let $\delta^2 - \log h(t) = \frac{\epsilon^2}{2\left(v_t + \frac{c_t\epsilon}{3}\right)}$. Equivalently, $\epsilon$ solves the quadratic equation \begin{equation*}
    \epsilon^2 - \frac{2c_t}{3}(\delta^2 - \log h(t))\epsilon - 2v_t(\delta^2 - \log h(t)) = 0,
\end{equation*} so \begin{equation*}
    \epsilon = \frac{1}{2}\left(\frac{2c_t}{3}(\delta^2 - \log h(t)) + \sqrt{\left(\frac{2c_t}{3}(\delta^2 - \log h(t))\right)^2 + 8v_t(\delta^2 - \log h(t))}\right)
\end{equation*} is the positive solution. It follows that \begin{align*}
    &\mathbb{P}\Biggl(\frac{\lceil(t+1)(1-\alpha + u_t)\rceil}{t+1} - U_{\lceil(t+1)(1-\alpha + u_t)\rceil:t} > \\
    &\qquad\frac{1}{2}\left(\frac{2c_t}{3}(\delta^2 -\log h(t)) + \sqrt{\left(\frac{2c_t}{3}(\delta^2 - \log h(t))\right)^2 + 8v_t(\delta^2 - \log h(t))}\right)\Biggr) \leq e^{-\delta^2}h(t)
\end{align*} for all $\delta > 0$ and $t > t_0$. Now, since $\sqrt{a + b} \leq \sqrt{a} + \sqrt{b}$, we have that \begin{align*}
    &\mathbb{P}\Biggl(\frac{\lceil(t+1)(1-\alpha + u_t)\rceil}{t+1} - U_{\lceil(t+1)(1-\alpha + u_t)\rceil:t} > \\
    &\qquad\frac{2c_t}{3}(\delta^2 - \log h(t)) + \sqrt{2v_t}\delta + \sqrt{-2v_t\log h(t)}\Biggr) \leq e^{-\delta^2}h(t)
\end{align*} for all $\delta > 0$ and $t > t_0$. Notice that for all $t > t_0$, $v_t \leq \frac{\alpha(1-\alpha)}{t+2}$ and $c_t \leq \frac{2(2\alpha-1)}{t+3}$. Therefore, it follows that \begin{align*}
    &\mathbb{P}\Biggl(\frac{\lceil(t+1)(1-\alpha + u_t)\rceil}{t+1} - U_{\lceil(t+1)(1-\alpha + u_t)\rceil:t} > \\
    &\qquad\frac{4(2\alpha - 1)}{3(t+3)}(\delta^2 - \log h(t)) + \sqrt{\frac{2\alpha(1-\alpha)}{t+2}}\delta + \sqrt{\frac{2\alpha(1-\alpha)\log\left(\frac{1}{h(t)}\right)}{t+2}}\Biggr) \leq e^{-\delta^2}h(t)
\end{align*} for all $\delta > 0$ and $t > t_0$. Equivalently, we have that \begin{align*}
    &\mathbb{P}\Biggl(\sqrt{\frac{t+2}{2\alpha(1-\alpha)}}\Biggl\{\frac{\lceil(t+1)(1-\alpha + u_t)\rceil}{t+1} - U_{\lceil(t+1)(1-\alpha + u_t)\rceil:t} + \frac{4(2\alpha - 1)\log h(t)}{3(t+3)} -\\
    &\phantom{\mathbb{P}\Biggl(\sqrt{\frac{t+2}{2\alpha(1-\alpha)}}\Biggl\{}\ \sqrt{\frac{2\alpha(1-\alpha)\log\left(\frac{1}{h(t)}\right)}{t+2}}\Biggr\} > \frac{4(2\alpha - 1)\sqrt{t+2}}{3(t+3)\sqrt{2\alpha(1-\alpha)}}\delta^2 + \delta\Biggr) \leq e^{-\delta^2}h(t)
\end{align*} for all $\delta > 0$ and $t > t_0$. Since $\frac{4(2\alpha - 1)\sqrt{t+2}}{3(t+3)\sqrt{2\alpha(1-\alpha)}}\delta^2 \leq 0$, it follows that \begin{align*}
    &\mathbb{P}\Biggl(\sqrt{\frac{t+2}{2\alpha(1-\alpha)}}\Biggl\{\frac{\lceil(t+1)(1-\alpha + u_t)\rceil}{t+1} - U_{\lceil(t+1)(1-\alpha + u_t)\rceil:t} + \frac{4(2\alpha - 1)\log h(t)}{3(t+3)} - \\
    &\phantom{\mathbb{P}\Biggl(\sqrt{\frac{t+2}{2\alpha(1-\alpha)}}\Biggl\{}\ \sqrt{\frac{2\alpha(1-\alpha)\log \left(\frac{1}{h(t)}\right)}{t+2}}\Biggr\} > \delta\Biggr) \leq e^{-\delta^2}h(t)
\end{align*} for all $\delta > 0$ and $t > t_0$. From here, applying a union bound argument yields \begin{align*}
    &\mathbb{P}\Biggl(\max_{t > t_0} \sqrt{\frac{t+2}{2\alpha(1-\alpha)}}\Biggl\{\frac{\lceil(t+1)(1-\alpha + u_t)\rceil}{t+1} - U_{\lceil(t+1)(1-\alpha + u_t)\rceil:t} + \frac{4(2\alpha - 1)\log h(t)}{3(t+3)} - \\
    &\phantom{\mathbb{P}\Biggl(\max_{t > t_0} \sqrt{\frac{t+2}{2\alpha(1-\alpha)}}\Biggl\{}\ \sqrt{\frac{2\alpha(1-\alpha)\log \left(\frac{1}{h(t)}\right)}{t+2}}\Biggr\} > \delta\Biggr) \leq e^{-\delta^2}\sum_{t=t_0+1}^{\infty}h(t) = e^{-\delta^2}\left(1 - \sum_{t=0}^{t_0}h(t)\right)
\end{align*} for all $\delta > 0$. Now, \begin{align*}
    &\mathbb{E}\Biggl[\max_{t > t_0} \sqrt{\frac{t+2}{2\alpha(1-\alpha)}}\Biggl\{\frac{\lceil(t+1)(1-\alpha + u_t)\rceil}{t+1} - U_{\lceil(t+1)(1-\alpha + u_t)\rceil:t} + \frac{4(2\alpha - 1)\log h(t)}{3(t+3)} - \\
    &\phantom{\mathbb{E}\Biggl[\max_{t > t_0} \sqrt{\frac{t+2}{2\alpha(1-\alpha)}}\Biggl\{}\ \sqrt{\frac{2\alpha(1-\alpha)\log\left(\frac{1}{h(t)}\right)}{t+2}}\Biggr\}\Biggr] \leq \frac{\sqrt{\pi}}{2}\left(1 - \sum_{t=0}^{t_0}h(t)\right).
\end{align*} Rearranging, it follows that \begin{equation*}
    \mathbb{E}\left[\max_{t > t_0} \left\{\frac{\lceil(t+1)(1-\alpha + u_t)\rceil}{t+1} - U_{\lceil(t+1)(1-\alpha + u_t)\rceil:t} - u_t\right\}\right] \leq 0.
\end{equation*} So, \begin{align*}
    \mathbb{E}\left[\min_{t > t_0} U_{\lceil(t+1)(1-\alpha + u_t)\rceil:t}\right] &= \mathbb{E}\Biggl[-\max_{t > t_0} \Biggl\{\frac{\lceil(t+1)(1-\alpha + u_t)\rceil}{t+1} - U_{\lceil(t+1)(1-\alpha + u_t)\rceil:t} - u_t - \\
    &\phantom{=\mathbb{E}\Biggl[-\max_{t > t_0} \Biggl\{}\ \frac{\lceil(t+1)(1-\alpha + u_t)\rceil}{t+1} + u_t\Biggr\}\Biggr] \\
    &\geq \min_{t > t_0} \left\{\frac{\lceil(t+1)(1-\alpha + u_t)\rceil}{t+1} - u_t\right\} \\
    &\geq 1-\alpha.
\end{align*}
\end{proof}

\subsection{Proof of Theorem \ref{fixed_anytime_pac_thm}} 
\label{fixed_anytime_pac_proof}

The result relies on the following lemma about uniform order statistics. 
\begin{lemma} \label{fixed_anytime_pac_lemma}
Let $U_1, \dots, U_t$ be independent standard uniform random variables and take $U_{1:t} \leq \dots \leq U_{t:t}$ to be the order statistics of $\{U_s\}_{s=1}^t$. Let $h : \mathbb{Z}^{\geq 0} \to \mathbb{R}$ satisfy $\sum_{t=0}^{\infty} h(t) = 1$. Then, for any $\alpha, \delta \in (0, 1)$, \begin{equation*}
    \mathbb{P}\left(\min_{t > t_0} U_{\beta:t} \geq 1 - \alpha\right) \geq 1-\delta
\end{equation*} where $\beta$ is the smallest natural number greater than $(1-\alpha)(t+1)$ such that \begin{equation*}
    \psi\left(1-\alpha, \frac{\beta}{t+1}\right) \geq u_t := \frac{\log\left(\frac{1}{\delta}\left(1-\sum_{s=0}^{t_0} h(s)\right)\right) - \log h(t)}{t+1},
\end{equation*} and $t_0$ is the smallest natural number such that $\beta \leq t$ for all $t > t_0$.
\end{lemma}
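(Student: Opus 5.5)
The plan is a union bound over $t>t_0$ on the complement event, bounding each term with a Chernoff/KL tail bound for uniform order statistics. Specifically,
\[
\mathbb{P}\Bigl(\min_{t>t_0} U_{\beta:t} < 1-\alpha\Bigr) \le \sum_{t>t_0} \mathbb{P}\bigl(U_{\beta:t} < 1-\alpha\bigr),
\]
and it suffices to show that each summand is at most $\delta h(t)/\bigl(1-\sum_{s=0}^{t_0}h(s)\bigr)$. Summing over $t>t_0$ then gives exactly $\delta$, because $\sum_{t>t_0}h(t) = 1-\sum_{s=0}^{t_0}h(s)$.

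For a fixed $t>t_0$, use the identity $\{U_{\beta:t} < 1-\alpha\} = \{\#\{s\le t: U_s < 1-\alpha\} \ge \beta\}$. This gives
\[
\mathbb{P}(U_{\beta:t}<1-\alpha) = \mathbb{P}\bigl(\mathrm{Bin}(t,1-\alpha)\ge \beta\bigr).
\]
Equivalently, $U_{\beta:t}\sim\mathrm{Beta}(\beta,t+1-\beta)$. The definition of $t_0$ guarantees $\beta\le t$, so this distribution is well-defined.

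The key tool is Proposition 2.1 of \cite{dumbgen1998}, which bounds the lower tail of a Beta order statistic by a KL exponent in the $(t+1)$ normalization: for $p=\beta/(t+1) > x$,
\[
\mathbb{P}\bigl(U_{\beta:t} \le x\bigr) \le \exp\bigl(-(t+1)\,\psi(x, p)\bigr),
\]
with $\psi$ the Bernoulli KL divergence as defined. Alternatively, the Chernoff bound for the binomial gives $\exp(-t\,\psi(1-\alpha,\beta/t))$, but matching the paper's $(t+1)$ scaling requires the Beta form. Applying this with $x=1-\alpha$ and using $\beta>(1-\alpha)(t+1)$ together with $\psi(1-\alpha,\beta/(t+1))\ge u_t$, we get
\[
\mathbb{P}(U_{\beta:t}<1-\alpha)\le e^{-(t+1)u_t} = \frac{\delta\, h(t)}{1-\sum_{s=0}^{t_0}h(s)},
\]
by the definition of $u_t$. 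The argument of $\psi$ here should be read consistently with the paper's notation, where $\psi(x,p)$ has the "true" mean in the first slot.

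The main obstacle is verifying that the cited Beta tail inequality holds with the exact constant $(t+1)$ in the exponent and in the required direction (lower tail, $x<p$). If the cited proposition only gives $t\,\psi(1-\alpha,\beta/t)$, I would instead derive the bound from the Beta density. This means writing $\mathbb{P}(U_{\beta:t}\le x)$ as a binomial tail in $t+1$ trials, or using the standard representation via Gamma variables with an exponential tilt, and optimizing the tilt to produce $\psi(x,\beta/(t+1))$. Monotonicity of $\psi(x,\cdot)$ in its second argument above $x$ is also needed, to ensure that taking the smallest admissible $\beta$ is coherent and that the bound is attained there. With the per-$t$ bound in hand, the union bound closes the proof.
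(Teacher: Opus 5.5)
Your proposal is correct and follows essentially the same route as the paper: a union bound over $t>t_0$, with each term controlled by the Beta lower-tail bound $\mathbb{P}(U_{\beta:t}\le x)\le e^{-(t+1)\psi(x,\beta/(t+1))}$ from Proposition 2.1 of \cite{dumbgen1998}. The definition of $u_t$ then makes $e^{-(t+1)u_t}=\delta h(t)/(1-\sum_{s=0}^{t_0}h(s))$, and these terms sum to exactly $\delta$ over $t>t_0$, a step the paper leaves implicit. Your worry about the constant is unfounded, since the Gamma-representation exponential tilt you sketch yields exactly the $(t+1)$ exponent. Monotonicity of $\psi(x,\cdot)$ is also not needed, because only the defining inequalities on the chosen $\beta$ enter the bound.
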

Taking this lemma as given, the result follows from writing the desired guarantee in terms of uniform order statistics. By Proposition \ref{prop:equiv_TUPAC_goal}, it suffices to show that \begin{equation*}
    \mathbb{P}\left(\min_{t \geq 1} \mu_Z\left(\widehat{C}_{t, \alpha, \delta}\right) \geq 1-\alpha\right) \geq 1-\delta.
\end{equation*} Letting $\widehat{R}_{1:t} \leq \dots \leq \widehat{R}_{t:t}$ be the order statistics of $\left\{R(Z_s)\right\}_{s=1}^t$, we can write $\widehat{q}_{t, \alpha, \delta}$ from Theorem \ref{fixed_anytime_pac_thm} as \begin{equation*}
    \widehat{q}_{t, \alpha, \delta} = \begin{cases}\widehat{R}_{\beta:t} & \text{if } t > t_0 \\ \infty & \text{otherwise}\end{cases}
\end{equation*} so that $\widehat{C}_{t, \alpha, \delta} = \left\{z: R(z) \leq \widehat{q}_{t, \alpha, \delta}\right\} = R^{-1}\left(-\infty, \widehat{R}_{\beta:t}\right]$ if $t > t_0$. Now, notice that \begin{align*}
    \mathbb{P}\left(\min_{t \geq 1} \mu_Z\left(\widehat{C}_{t, \alpha, \delta}\right) \geq 1-\alpha\right) &= \mathbb{P}\left(\min_{t > t_0} \mu_Z\left(\widehat{C}_{t, \alpha, \delta}\right) \geq 1-\alpha\right) \\
    &= \mathbb{P}\left(\min_{t > t_0} \mu_Z\left(R^{-1}\left(-\infty, \widehat{R}_{\beta:t}\right]\right) \geq 1-\alpha\right) \\
    &= \mathbb{P}\left(\min_{t > t_0} F_R\left(\widehat{R}_{\beta:t}\right) \geq 1-\alpha\right)
\end{align*} where $F_R$ is the CDF of $R(Z)$. Furthermore, letting $\{U_s\}_{s=1}^t$ be independent standard uniform random variables with order statistics $U_{1:t} \leq \dots \leq U_{t:t}$, properties of the CDF imply that $F_R\left(\widehat{R}_{\beta:t}\right) \overset{d}{=} U_{\beta:t}$. Thus, \begin{equation*}
     \mathbb{P}\left(\min_{t > t_0} U_{\beta:t} \geq 1-\alpha\right) \geq 1 - \delta \implies \eqref{eq:anytime_pac_goal}
\end{equation*} and applying Lemma \ref{fixed_anytime_pac_lemma} yields the desired guarantee. 

\begin{proof}[Lemma \ref{fixed_anytime_pac_lemma}]
It is well known that $U_{\beta:t} \sim \text{Beta}(\beta, t - \beta +1)$. Therefore, Proposition 2.1 of \cite{dumbgen1998} implies that \begin{equation} \label{eq:dumbgen1998} \begin{split}
    \mathbb{P}\left(U_{\beta:t} \geq x\right) &\leq e^{-(t+1)\psi\left(x, \frac{\beta}{t+1}\right)} \text{ if } x \geq \frac{\beta}{t+1} \text{ and} \\
    \mathbb{P}\left(U_{\beta:t} \leq x\right) &\leq e^{-(t+1)\psi\left(x, \frac{\beta}{t+1}\right)} \text{ if } x \leq \frac{\beta}{t+1}
\end{split}\end{equation} where \begin{equation*}
    \psi(x, p) := p\log\left(\frac{p}{x}\right) + (1-p)\log\left(\frac{1-p}{1-x}\right).
\end{equation*} Now, using a union bound argument, we have that if $\beta \geq (t+1)(1-\alpha)$, \begin{align*}
    \mathbb{P}\left(\min_{t > t_0} U_{\beta:t} \geq 1-\alpha\right) &= 1 - \mathbb{P}\left(\min_{t > t_0} U_{\beta:t} < 1-\alpha\right) \\
    &= 1 - \mathbb{P}\left(\exists \ t > t_0 \text{ s.t. } U_{\beta:t} < 1-\alpha\right) \\
    &\geq 1 - \sum_{t > t_0} \mathbb{P}\left(U_{\beta:t} < 1-\alpha\right) \\
    &\geq 1 - \sum_{t > t_0} e^{-(t+1)\psi\left(1-\alpha, \frac{\beta}{t+1}\right)}.
\end{align*} The claim follows immediately if we choose $\beta$ such that $\psi\left(1-\alpha, \frac{\beta}{t+1}\right) \geq u_t$. 
\end{proof}

\subsection{Proof of Theorem \ref{dynamic_anytime_conformal_thm}} \label{dynamic_anytime_conformal_proof}
The result relies on the following lemma about uniform order statistics. 
\begin{lemma} \label{dynamic_anytime_conformal_lemma}
Let $s_{t, k} := t - \left(\lceil \eta^k \rceil - 1\right)$ for some $\eta > 0$. Suppose $U_1, \dots, U_{s_{t, k}}$ are independent standard uniform random variables and take $U_{1:s_{t, k}} \leq \dots \leq U_{s_{t,k}:s_{t, k}}$ to be the order statistics of $\{U_s\}_{s=1}^{s_{t, k}}$. Let $h, g : \mathbb{Z}^{\geq 0} \to \mathbb{R}$ satisfy $\sum_{t=0}^{\infty} h(t) = \sum_{t=0}^{\infty} g(t) = 1$. Then, for any $\alpha \in (0.5, 1]$, \begin{equation*}
    \mathbb{E}\left[\min_{k \geq 0} \min_{t_k < t < \eta^{k+2}} U_{\lceil (s_{t, k} + 1)(1-\alpha + u_{t, k})\rceil:s_{t, k}}\right] \geq 1-\alpha
\end{equation*} where \begin{align*}
    u_{t,k} &= \frac{4(2\alpha - 1)\log \left(\frac{1}{h(s_{t, k})}\right)}{3(s_{t, k}+3)} + \sqrt{\frac{2\alpha(1-\alpha)}{s_{t, k}+2}}\left(\sqrt{\log \left(\frac{1}{h(s_{t, k})}\right)} + \sqrt{\log\left(\sum_{t'=t_k+1}^{\lfloor \eta^{k+2}\rfloor}\frac{h(s_{t', k})}{g(k)}\right)} + \frac{\sqrt{\pi}}{2}\right)
\end{align*} with $t_k > \eta^k$ being the smallest real number satisfying \begin{align*}
    (1 - \alpha)(s_{t, k} + 1) \leq (s_{t, k} + 1)(1-\alpha + u_{t, k}) \leq s_{t, k} \ \forall t \in (t_k, \eta^{k+2}).
\end{align*}
\end{lemma}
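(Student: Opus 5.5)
We follow the proof of Lemma~\ref{fixed_anytime_conformal_lemma}, with the union bound now taken over pairs $(k,t)$ instead of over $t$ alone. Write $m = s_{t,k}$ and $\beta_{t,k} = \lceil (m+1)(1-\alpha+u_{t,k})\rceil$. For each such pair, $U_{\beta_{t,k}:m}\sim \mathrm{Beta}(\beta_{t,k}, m+1-\beta_{t,k})$.

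\textbf{Step 1: per-pair tail bound.} The defining condition on $t_k$ gives $(1-\alpha)(m+1)\le \beta_{t,k}\le m$. This lets us apply Theorem~1 of \cite{skorski2023}, exactly as before, with
\[
v \le \frac{\alpha(1-\alpha)}{m+2}, \qquad c\le \frac{2(2\alpha-1)}{m+3}.
\]
We then solve the Bernstein quadratic at level $\delta^2 - \log w_{t,k}$, where the weight is
\[
w_{t,k} = \frac{h(s_{t,k})\, g(k)}{H_k}, \qquad H_k := \sum_{t'=t_k+1}^{\lfloor\eta^{k+2}\rfloor} h(s_{t',k}).
\]
By construction $\sum_{t}\sum_k w_{t,k}\le \sum_k g(k)=1$. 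Since $\log(1/w_{t,k})=\log(1/h(s_{t,k}))+\log(H_k/g(k))$, the split $\sqrt{a+b}\le\sqrt a+\sqrt b$ produces the two square-root terms in $u_{t,k}$. The sign $2\alpha-1$ in the linear term is handled as in the earlier proof: it is nonpositive in the relevant regime (the earlier proof silently treats $\alpha<1/2$ despite the stated range $\alpha>1/2$; I would follow that case). This lets us drop the $\delta^2$ linear contribution.

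\textbf{Step 2: union bound and integration.} Define the normalized deviation
\[
D_{t,k}=\sqrt{\tfrac{m+2}{2\alpha(1-\alpha)}}\Bigl\{\tfrac{\beta_{t,k}}{m+1}-U_{\beta_{t,k}:m}-(\text{deterministic terms})\Bigr\}.
\]
Step~1 gives $\mathbb{P}(D_{t,k}>\delta)\le e^{-\delta^2} w_{t,k}$. A union bound over $k\ge 0$ and $t\in(t_k,\eta^{k+2})$ yields
\[
\mathbb{P}\bigl(\max_{k,t} D_{t,k}>\delta\bigr)\le e^{-\delta^2}.
\]
Note that the joint dependence across epochs is irrelevant for a union bound. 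Integrating gives $\mathbb{E}[\max_{k,t} D_{t,k}^+]\le \int_0^\infty e^{-\delta^2}\,d\delta=\sqrt\pi/2$, which accounts for the $\sqrt\pi/2$ term in $u_{t,k}$. This step needs care, because the normalization $\sqrt{(m+2)/(2\alpha(1-\alpha))}$ varies with $(t,k)$. I would therefore bound the unnormalized maximum termwise, as the earlier proof does: since $\sqrt{2\alpha(1-\alpha)/(m+2)}\,\sqrt{\pi}/2$ is already contained in $u_{t,k}$, it suffices to control each normalized term by the common maximum. The outcome is
\[
\mathbb{E}\Bigl[\max_{k,t}\Bigl\{\tfrac{\beta_{t,k}}{m+1}-U_{\beta_{t,k}:m}-u_{t,k}\Bigr\}\Bigr]\le 0.
\]

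\textbf{Step 3: conclusion.} Finally, $\min_{k,t}U_{\beta_{t,k}:m}\ge \min_{k,t}\bigl\{\tfrac{\beta_{t,k}}{m+1}-u_{t,k}\bigr\}-\max_{k,t}\{\cdots\}$. Taking expectations and using $\beta_{t,k}/(m+1)\ge 1-\alpha+u_{t,k}$ gives the claim.

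\textbf{Main obstacle.} The hard part is the exchange in Step~2, passing from a tail bound on the normalized maximum to the expectation bound with $(t,k)$-dependent scaling. The earlier proof glosses over the same issue. If termwise control fails, the fallback is to note that $u_{t,k}$ already includes the per-pair $\sqrt\pi/2$ factor, and to bound
\[
\mathbb{E}\max_{k,t}(X_{t,k}-u_{t,k})\le \mathbb{E}\max_{k,t}\, a_{t,k}\,(D_{t,k}-\sqrt\pi/2)^+ \le \sup_{t,k} a_{t,k}\cdot \mathbb{E}\bigl(\max_{k,t} D_{t,k}-\sqrt\pi/2\bigr)^+,
\]
where $a_{t,k}=\sqrt{2\alpha(1-\alpha)/(m+2)}$. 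This may require a slightly stronger constant than $\sqrt\pi/2$.
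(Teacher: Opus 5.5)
Your outline is the paper's proof step for step. The paper also applies Theorem~1 of \cite{skorski2023} to each pair, takes a union bound over $t\in(t_k,\eta^{k+2})$ with weights $h(s_{t,k})$, reparametrizes via $\gamma g(k)$, takes a union bound over $k$, and integrates the tail to absorb $\sqrt{\pi}/2$. Your single weight $w_{t,k}=h(s_{t,k})g(k)/H_k$ is the same bookkeeping.

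\textbf{The main gap.} The step you call the main obstacle is a genuine gap, and neither of your remedies closes it. Put $Y_{t,k}=\beta_{t,k}/(m+1)-U_{\beta_{t,k}:m}-u_{t,k}$ and $a_{t,k}=\sqrt{2\alpha(1-\alpha)/(m+2)}$, so that $Y_{t,k}=a_{t,k}(D_{t,k}-\sqrt{\pi}/2)$. The tail integration gives only $\mathbb{E}[\max_{k,t}Y_{t,k}/a_{t,k}]\le 0$. Because $a_{t,k}$ varies (and tends to $0$), this does not transfer to $\mathbb{E}[\max_{k,t}Y_{t,k}]\le 0$. The natural pathwise bound $\max_{k,t}Y_{t,k}\le \sup_{k,t}a_{t,k}\,(\max_{k,t}Y_{t,k}/a_{t,k})^+$ leads exactly to your fallback. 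Its right-hand side is strictly positive whatever constant replaces $\sqrt{\pi}/2$, so it never yields $\le 0$. The paper makes the same unjustified passage: the ``so'' after the normalized expectation bound here, and ``Rearranging'' in Lemma~\ref{fixed_anytime_conformal_lemma}.

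\textbf{The conclusion is false.} Nothing can close this gap, because the stated inequality fails whenever $u_{t,k}\to 0$ along epochs. This holds for $\eta>1$ and any $h,g$ with $\log(1/h(s))=o(s)$ and $\log(1/g(k))=o(\eta^k)$, including the discretized log-normal choices used in the experiments. Indeed, $U_{\beta:m}\sim\mathrm{Beta}(\beta,m+1-\beta)$ has mean in $[1-\alpha+u,\,1-\alpha+u+1/(m+1))$ and variance at most $1/(4(m+2))$. Hence $\mathbb{E}|U_{t,k}-(1-\alpha)|\to 0$ for $t$ just below $\eta^{k+2}$ as $k\to\infty$. Fix one admissible pair $(t_1,k_1)$ and use $|\min(x,y)-\min(x,z)|\le|y-z|$. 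Whatever the joint law of the $U_{t,k}$,
\[
\mathbb{E}\Bigl[\min_{k,t}U_{t,k}\Bigr]\le \mathbb{E}\min\bigl(U_{t_1,k_1},1-\alpha\bigr)+\mathbb{E}\bigl|U_{t,k}-(1-\alpha)\bigr| .
\]
Letting $k\to\infty$ yields $\mathbb{E}[\min_{k,t}U_{t,k}]\le 1-\alpha-\mathbb{E}(1-\alpha-U_{t_1,k_1})^+$. This is strictly below $1-\alpha$ for every $\alpha\in(0.5,1)$, since $U_{t_1,k_1}$ has a positive density on $(0,1)$.

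\textbf{Step~1 is also unsound.} In the stated range $\alpha\in(0.5,1]$, your bound $c\le 2(2\alpha-1)/(m+3)$ is positive, so the $\delta^2$ term cannot be dropped. Moreover, $\beta\ge(1-\alpha)(m+1)$ no longer forces $\beta\ge m+1-\beta$.

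Switching to $\alpha<1/2$ does not help. Then $p=\beta/(m+1)>1/2$, the Beta law is left-skewed, and the lower tail you need is its heavy side. A Bernstein bound with the negative scale $2(1-2p)/(m+3)$ is therefore false there. For example, $U_{10:10}\sim\mathrm{Beta}(10,1)$ has $v=10/1452$ and $c=-18/143$. That bound would give $\mathbb{P}(10/11-U_{10:10}>0.15)\le e^{-18}$, whereas the true value is $(10/11-0.15)^{10}\approx 0.064$.

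A valid lower-tail bound (e.g.\ the upper-tail bound applied to $1-U_{\beta:m}\sim\mathrm{Beta}(m+1-\beta,\beta)$) has a positive scale. So in either regime the linear $\delta^2$ contribution survives, and the sub-Gaussian tail $e^{-\delta^2}w_{t,k}$ you integrate is not available.
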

Taking this lemma as given, the result follows from writing the desired guarantee in terms of uniform order statistics. As we saw in the fixed transformation setting, it suffices to show that \begin{equation} \label{eq:from_fixed}
    \mathbb{E}\left[\min_{t>0} \mu_Z(\widehat{C}_t)\right] \geq 1 - \alpha.
\end{equation} Let $\widehat{R}^{(k)}_{1:s_{t, k}} \leq \dots \widehat{R}^{(k)}_{s_{t, k}:s_{t, k}}$ be the order statistics of $\left\{\widehat{R}^{(k)}(Z_s)\right\}_{s=\lceil\eta^k\rceil}^t$ and define \begin{equation*}
    \widehat{q}_{t, k} = \begin{cases}
        \widehat{R}^{(k)}_{\lceil (s_{t, k} + 1)(1-\alpha + u_{t, k})\rceil:s_{t, k}} & \text{if } t > t_k \geq \eta^k \\
        \infty & \text{otherwise}.
    \end{cases}
\end{equation*} In Algorithm \ref{alg:dynamic_algo-illustration}, the transformation $\widehat{R}^{(k)}$ is used to build prediction sets for $\eta^{k} \leq t < \eta^{k+2}$. Thus, \eqref{eq:from_fixed} becomes \begin{equation*}
    \mathbb{E}\left[\min_{k \geq 0} \min_{t_k < t < \eta^{k+2}} F_k\left((\widehat{R}^{(k)})^{-1}\left(-\infty, \widehat{R}^{(k)}_{\lceil (s_{t, k} + 1)(1-\alpha + u_{t, k})\rceil:s_{t, k}}\right]\right)\right] \geq 1 - \alpha
\end{equation*} where $F_k$ is the CDF of $\widehat{R}^{(k)}(Z)$. Since $F_k\left((\widehat{R}^{(k)})^{-1}\left(-\infty, \widehat{R}^{(k)}_{\lceil (s_{t, k} + 1)(1-\alpha + u_{t, k})\rceil:s_{t, k}}\right]\right) \overset{d}{=} U_{\lceil (s_{t, k} + 1)(1-\alpha + u_{t, k})\rceil:s_{t, k}}$ where $U_{1:s_{t, k}} \leq \dots \leq U_{s_{t, k}:s_{t, k}}$ are the order statistics of $s_{t, k}$ standard uniform random variables $\{U_s\}_{s=1}^{s_{t, k}}$, the previous inequality becomes \begin{equation*}
    \mathbb{E}\left[\min_{k \geq 0} \min_{t_k < t < \eta^{k+2}} U_{\lceil (s_{t, k} + 1)(1-\alpha + u_{t, k})\rceil:s_{t, k}}\right] \geq 1 - \alpha.
\end{equation*} Thus, \begin{equation*}
    \mathbb{E}\left[\min_{k \geq 0} \min_{t_k < t < \eta^{k+2}} U_{\lceil (s_{t, k} + 1)(1-\alpha + u_{t, k})\rceil:s_{t, k}}\right] \geq 1 - \alpha \implies \eqref{eq:tuc_goal}
\end{equation*} and we conclude the proof by applying Lemma \ref{dynamic_anytime_conformal_lemma}. 

\begin{proof}[Lemma \ref{dynamic_anytime_conformal_lemma}]
For each $k \geq 0$, we choose $t_k > \eta^k$ so that $(s_{t, k} + 1)(1-\alpha) \leq (s_{t, k} + 1)(1-\alpha + u_{t, k}) \leq s_{t, k}$ for all $t > t_k$. Assuming $\alpha < \frac{1}{2}$, we can follow an analogous proof to Lemma \ref{fixed_anytime_conformal_lemma} to obtain \begin{align*}
    &\mathbb{P}\Biggl(\sqrt{\frac{s_{t, k}+2}{2\alpha(1-\alpha)}}\Biggl\{\frac{\lceil (s_{t, k} + 1)(1-\alpha + u_{t, k})\rceil}{s_{t, k} + 1} - U_{\lceil (s_{t, k} + 1)(1-\alpha + u_{t, k})\rceil:s_{t, k}} + \frac{4(2\alpha - 1)\log h(s_{t, k})}{3(s_{t, k}+3)} -\\
    &\phantom{\mathbb{P}\Biggl(\sqrt{\frac{s_{t, k}+2}{2\alpha(1-\alpha)}}\Biggl\{}\ \sqrt{\frac{2\alpha(1-\alpha)\log \left(\frac{1}{h(s_{t, k})}\right)}{s_{t, k}+2}}\Biggr\} > \delta\Biggr) \leq e^{-\delta^2}h(s_{t, k})
\end{align*} for all $\delta > 0$, $t > t_k$, $k \geq 0$. Applying a union bound argument further yields \begin{align*}
   &\mathbb{P}\Biggl(\max_{t_k < t < \eta^{k+2}} \sqrt{\frac{s_{t, k}+2}{2\alpha(1-\alpha)}}\Biggl\{\frac{\lceil (s_{t, k} + 1)(1-\alpha + u_{t, k})\rceil}{s_{t, k} + 1} - U_{\lceil (s_{t, k} + 1)(1-\alpha + u_{t, k})\rceil:s_{t, k}} + \frac{4(2\alpha - 1)\log h(s_{t, k})}{3(s_{t, k}+3)} - \\
   &\phantom{\mathbb{P}\Biggl(\max_{t_k < t < \eta^{k+2}} \sqrt{\frac{s_{t, k}+2}{2\alpha(1-\alpha)}}\Biggl\{}\ \sqrt{\frac{2\alpha(1-\alpha)\log \left(\frac{1}{h(s_{t, k})}\right)}{s_{t, k}+2}}\Biggr\} > \delta\Biggr) \leq e^{-\delta^2}\sum_{t=t_k+1}^{\lfloor \eta^{k+2}\rfloor}h(s_{t, k})
\end{align*} for all $\delta > 0$, $k \geq 0$. Now, set $\gamma g(k) = e^{-\delta^2}\sum_{t=t_k+1}^{\lfloor \eta^{k+2}\rfloor}h(s_{t, k})$ so that $\delta = \sqrt{\log\left(\frac{\sum_{t=t_k+1}^{\lfloor \eta^{k+2}\rfloor}h(s_{t, k})}{\gamma g(k)}\right)}$. Then, \begin{align*}
    &\mathbb{P}\Biggl(\max_{t_k < t < \eta^{k+2}} \sqrt{\frac{s_{t, k}+2}{2\alpha(1-\alpha)}}\Biggl\{\frac{\lceil (s_{t, k} + 1)(1-\alpha + u_{t, k})\rceil}{s_{t, k} + 1} - U_{\lceil (s_{t, k} + 1)(1-\alpha + u_{t, k})\rceil:s_{t, k}} + \frac{4(2\alpha - 1)\log h(s_{t, k})}{3(s_{t, k}+3)} - \\
    &\phantom{\mathbb{P}\Biggl(\max_{t_k < t < \eta^{k+2}} \sqrt{\frac{s_{t, k}+2}{2\alpha(1-\alpha)}}\Biggl\{}\ \sqrt{\frac{2\alpha(1-\alpha)\log \left(\frac{1}{h(s_{t, k})}\right)}{s_{t, k}+2}}\Biggr\} > \sqrt{\log\left(\frac{\sum_{t=t_k+1}^{\lfloor \eta^{k+2}\rfloor}h(s_{t, k})}{\gamma g(k)}\right)}\Biggr) \leq \gamma g(k)
\end{align*} for all $\gamma > 0$, $k \geq 0$. Since $\sqrt{a + b} \leq \sqrt{a} + \sqrt{b}$, it follows that \begin{align*}
    &\mathbb{P}\Biggl(\max_{t_k < t < \eta^{k+2}} \sqrt{\frac{s_{t, k}+2}{2\alpha(1-\alpha)}}\left\{\frac{\lceil (s_{t, k} + 1)(1-\alpha + u_{t, k})\rceil}{s_{t, k} + 1} - U_{\lceil (s_{t, k} + 1)(1-\alpha + u_{t, k})\rceil:s_{t, k}} - u_{t, k}\right\} + \frac{\sqrt{\pi}}{2} \\
    &\phantom{\mathbb{P}\Biggl(}\ > \sqrt{\log\left(\frac{1}{\gamma}\right)}\Biggr) \leq \gamma g(k)
\end{align*} for all $\gamma > 0$, $k \geq 0$. Applying a union bound argument again, we obtain that \begin{align*}
    &\mathbb{P}\Biggl(\max_{k \geq 0}\max_{t_k < t < \eta^{k+2}} \sqrt{\frac{s_{t, k}+2}{2\alpha(1-\alpha)}}\left\{\frac{\lceil (s_{t, k} + 1)(1-\alpha + u_{t, k})\rceil}{s_{t, k} + 1} - U_{\lceil (s_{t, k} + 1)(1-\alpha + u_{t, k})\rceil:s_{t, k}} - u_{t, k}\right\} + \frac{\sqrt{\pi}}{2} \\
    &\phantom{\mathbb{P}\Biggl(}\ > \sqrt{\log\left(\frac{1}{\gamma}\right)}\Biggr) \leq \gamma
\end{align*} for all $\gamma > 0$. Setting $\epsilon = \sqrt{\log\left(\frac{1}{\gamma}\right)}$ so that $\gamma = e^{-\epsilon^2}$, it follows that \begin{align*} 
    &\mathbb{P}\left(\max_{k \geq 0}\max_{t_k < t < \eta^{k+2}} \sqrt{\frac{s_{t, k}+2}{2\alpha(1-\alpha)}}\left\{\frac{\lceil (s_{t, k} + 1)(1-\alpha + u_{t, k})\rceil}{s_{t, k} + 1} - U_{\lceil (s_{t, k} + 1)(1-\alpha + u_{t, k})\rceil:s_{t, k}} -u_{t, k}\right\} + \frac{\sqrt{\pi}}{2} > \epsilon\right) \\
    &\quad\leq e^{-\epsilon^2}
\end{align*} for all $\epsilon > 0$. Thus, \begin{align*}
    \mathbb{E}\left[\max_{k \geq 0}\max_{t_k < t < \eta^{k+2}} \sqrt{\frac{s_{t, k}+2}{2\alpha(1-\alpha)}}\left\{\frac{\lceil (s_{t, k} + 1)(1-\alpha + u_{t, k})\rceil}{s_{t, k} + 1} - U_{\lceil (s_{t, k} + 1)(1-\alpha + u_{t, k})\rceil:s_{t, k}} -u_{t, k}\right\}\right] \leq 0
\end{align*} so \begin{align*}
    \mathbb{E}\left[\max_{k \geq 0}\max_{t_k < t < \eta^{k+2}} \left\{\frac{\lceil (s_{t, k} + 1)(1-\alpha + u_{t, k})\rceil}{s_{t, k} + 1} - U_{\lceil (s_{t, k} + 1)(1-\alpha + u_{t, k})\rceil:s_{t, k}} -u_{t, k}\right\}\right] \leq 0.
\end{align*} Finally, we obtain that \begin{align*}
    &\mathbb{E}\left[\min_{k \geq 0} \min_{t_k < t < \eta^{k+2}} U_{\lceil (s_{t, k} + 1)(1-\alpha + u_{t, k})\rceil:s_{t, k}}\right] \\
    &= \mathbb{E}\Biggl[-\max_{k \geq 0} \max_{t_k < t < \eta^{k+2}} \Biggl\{\frac{\lceil (s_{t, k} + 1)(1-\alpha + u_{t, k})\rceil}{s_{t, k} + 1} - U_{\lceil (s_{t, k} + 1)(1-\alpha + u_{t, k})\rceil:s_{t, k}} - \\
    &\phantom{=\mathbb{E}\Biggl[-\max_{k \geq 0} \max_{t_k < t < \eta^{k+2}} \Biggl\{}\ u_{t, k} - \frac{\lceil (s_{t, k} + 1)(1-\alpha + u_{t, k})\rceil}{s_{t, k} + 1} + u_{t, k}\Biggr] \\
    &\geq \min_{k \geq 0} \min_{t_k < t < \eta^{k+2}} \Biggl\{\frac{\lceil (s_{t, k} + 1)(1-\alpha + u_{t, k})\rceil}{s_{t, k} + 1} + u_{t, k}\Biggr\} \\
    &\geq 1-\alpha.
\end{align*}
\end{proof}

\subsection{Proof of Theorem \ref{dynamic_anytime_pac_thm}} \label{dynamic_anytime_pac_proof}
The result relies on the following lemma about uniform order statistics. 
\begin{lemma} \label{dynamic_anytime_pac_lemma}
Let $s_{t, k} := t - \left(\lceil \eta^k\rceil - 1\right)$ for some $\eta > 0$. Suppose that $U_1, \dots, U_{s_{t, k}}$ are independent standard uniform random variables and take $U_{1:s_{t, k}} \leq \dots \leq U_{s_{t, k}:s_{t, k}}$ to be the order statistics of $\{U_s\}_{s=1}^{s_{t, k}}$. Let $h, g : \mathbb{Z}^{\geq 0} \to \mathbb{R}$ satisfy $\sum_{t=0}^{\infty} h(t) = 1$. Then, for any $\alpha, \delta \in (0, 1)$, \begin{equation*}
    \mathbb{P}\left(\min_{k \geq 0} \min_{t_k < t < \eta^{k+2}} U_{\beta:s_{t, k}} \geq 1 - \alpha\right) \geq 1-\delta
\end{equation*} where $\beta \geq \left(s_{t, k} + 1\right)(1-\alpha)$ is the smallest natural number such that \begin{equation*}
    \psi\left(1-\alpha, \frac{\beta}{s_{t, k} + 1}\right) \geq u_{t, k} := (s_{t, k} + 1)^{-1}{\log\left(\sum_{t' = t_k + 1}^{\lfloor \eta^{k+2}\rfloor}\frac{ h(s_{t', k})}{\delta g(k)h(s_{t, k})}\right)}
\end{equation*} and $t_k > \eta^k$ is the smallest real number such that $\beta \leq s_{t, k}$ for all $t_k < t < \eta^{k+2}$.
\end{lemma}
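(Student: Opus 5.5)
The approach is to mirror the proof of Lemma \ref{fixed_anytime_pac_lemma}: one Beta tail bound per index pair $(k,t)$, followed by a two-level union bound, first over $t$ within an epoch and then over epochs $k$.

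\emph{Tail bound for each pair.} Fix $k \ge 0$ and $t \in (t_k, \eta^{k+2})$. Then $U_{\beta:s_{t,k}} \sim \text{Beta}(\beta, s_{t,k} - \beta + 1)$, and by construction $\beta \ge (1-\alpha)(s_{t,k}+1)$, so $1-\alpha \le \beta/(s_{t,k}+1)$. The second inequality in \eqref{eq:dumbgen1998} (Proposition 2.1 of \cite{dumbgen1998}) therefore applies with $x = 1-\alpha$ and $t$ replaced by $s_{t,k}$:
\begin{equation*}
\mathbb{P}\left(U_{\beta:s_{t,k}} < 1-\alpha\right) \le \exp\left(-(s_{t,k}+1)\,\psi\left(1-\alpha, \tfrac{\beta}{s_{t,k}+1}\right)\right) \le e^{-(s_{t,k}+1)u_{t,k}}.
\end{equation*}
The last step uses the defining property $\psi(1-\alpha, \beta/(s_{t,k}+1)) \ge u_{t,k}$. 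Writing $H_k := \sum_{t'=t_k+1}^{\lfloor \eta^{k+2}\rfloor} h(s_{t',k})$ and substituting the definition of $u_{t,k}$, the right-hand side becomes $\delta g(k) h(s_{t,k})/H_k$.

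\emph{Union bound within an epoch.} Summing over the integers $t$ with $t_k < t < \eta^{k+2}$, which lie in $\{t_k+1, \dots, \lfloor \eta^{k+2}\rfloor\}$ (taking $t_k$ to be an integer, or at least $t > t_k$ integer), gives
\begin{equation*}
\mathbb{P}\left(\exists\, t \in (t_k,\eta^{k+2}) : U_{\beta:s_{t,k}} < 1-\alpha\right) \le \frac{\delta g(k)}{H_k}\sum_{t} h(s_{t,k}) \le \delta g(k).
\end{equation*}
The normalization by $H_k$ cancels exactly here; this is precisely why $u_{t,k}$ contains $H_k$.

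\emph{Union bound over epochs and conclusion.} Summing over $k \ge 0$ and using $\sum_k g(k) = 1$ bounds the probability that some $U_{\beta:s_{t,k}}$ falls below $1-\alpha$ by $\delta$. Taking complements gives $\mathbb{P}(\min_k \min_t U_{\beta:s_{t,k}} \ge 1-\alpha) \ge 1-\delta$.

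\emph{Expected obstacle.} The main difficulty is bookkeeping rather than analysis. First, the range of $t$ in the sums must match the range in the definition of $H_k$, including the endpoint convention $t < \eta^{k+2}$ versus $t \le \lfloor \eta^{k+2}\rfloor$. Second, $\beta$ must be well defined and satisfy $\beta \le s_{t,k}$, which is exactly what the choice of $t_k$ guarantees. Third, we need $u_{t,k} \ge 0$, or else the constraint on $\psi$ holds trivially. Fourth, the bound must be applied to the marginal law of each $U_{\beta:s_{t,k}}$ only. The union bound requires no independence across $(k,t)$, so overlapping calibration windows across epochs cause no trouble.
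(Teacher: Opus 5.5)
Your proposal is correct and follows essentially the same route as the paper: you apply the D\"umbgen Beta tail bound to each pair $(k,t)$ using $\beta \ge (1-\alpha)(s_{t,k}+1)$, take a union bound over $t$ within epoch $k$ (where the normalizing sum inside $u_{t,k}$ makes the total collapse to at most $\delta g(k)$), and finish with a union bound over $k$. Your explicit computation $e^{-(s_{t,k}+1)u_{t,k}} = \delta g(k) h(s_{t,k})/H_k$ spells out a step the paper leaves implicit; like the paper, you also rely on $\sum_k g(k) = 1$ and $h, g \ge 0$, which Theorem~\ref{dynamic_anytime_pac_thm} assumes but the lemma's statement omits.
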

Taking this lemma as given, the result follows from writing the desired guarantee in terms of uniform order statistics. As in the proof of Theorem \ref{dynamic_anytime_conformal_thm}, in order to satisfy \eqref{eq:anytime_pac_goal}, it suffices to choose $\beta$ such that \begin{equation*}
    \mathbb{P}\left(\min_{k \geq 0} \min_{t_k < t < \eta^{k+2}} U_{\beta:s_{t, k}} \geq 1 - \alpha\right) \geq 1 - \delta 
\end{equation*} where $U_{1:s_{t, k}}, \dots, U_{s_{t,k}:s_{t,k}}$ are uniform order statistics. From here, applying Lemma \ref{dynamic_anytime_pac_lemma} implies the desired result. 
\begin{proof}[Lemma \ref{dynamic_anytime_pac_lemma}]
We can follow an analogous proof to Lemma \ref{fixed_anytime_pac_lemma} to obtain for any $k \geq 0$, \begin{equation*}
    \mathbb{P}\left(\min_{t_k < t < \eta^{k+2}} U_{\beta:s_{t, k}} \geq 1 -\alpha\right) \geq 1 - \sum_{t=t_k+1}^{\lfloor \eta^{k+2}\rfloor} e^{-\left(s_{t,k} + 1\right)\psi\left(1-\alpha, \frac{\beta}{s_{t, k} + 1}\right)}
\end{equation*} if $\beta \geq \left(s_{t, k} + 1\right)(1-\alpha)$. Now, if we choose $\beta$ such that $\psi\left(1-\alpha, \frac{\beta}{s_{t, k} + 1}\right) \geq u_{t, k}$, it follows that \begin{align*}
    \mathbb{P}\left(\min_{t_k < t < \eta^{k+2}} U_{\beta:s_{t, k}} \geq 1 -\alpha\right) \geq 1 - \delta g(k).
\end{align*} Applying a union bound argument over $k$ yields \begin{align*}
    \mathbb{P}\left(\min_{k \geq 0} \min_{t_k < t < \eta^{k+2}} U_{\beta:s_{t, k}} \geq 1 -\alpha\right) \geq 1 - \delta\sum_{k=0}^{\infty}g(k) = 1-\delta.
\end{align*}
\end{proof}

\subsection{Proof of Theorem \ref{thm:optimal_width}} \label{optimal_width_proof}
First, note that if $k^* = k-1$ \begin{align*}
    L(\widehat{C}_{t, \alpha}) = L(\widehat{C}_{t, k-1, \alpha}) \leq L(C^*_{\alpha}) + L(\widehat{C}_{t, k-1, \alpha} \triangle C^*_{\alpha})
\end{align*} 
and otherwise, if $k^* = k$, it must be the case that \begin{align*}
    L(\widehat{C}_{t, \alpha}) = L(\widehat{C}_{t, k^*, \alpha}) = L(\widehat{C}_{t, k, \alpha}) \leq L(\widehat{C}_{t, k-1, \alpha}) \leq L(C^*_{\alpha}) + L(\widehat{C}_{t, k-1, \alpha} \triangle C^*_{\alpha}),
\end{align*} so we can focus on $L(\widehat{C}_{t, k-1, \alpha} \triangle C^*_{\alpha})$. Now, let us rewrite rewrite $\widehat{C}_{t, k-1, \alpha}$ as \begin{align*}
    \widehat{C}_{t, k-1, \alpha} &= \{z : R(z; \widehat{\xi}^{(k-1)}) \leq \widehat{q}_{t, k-1, \alpha}\} \\
    &= \{z : R(z; \xi^*) \leq  \widehat{q}_{t, k-1, \alpha} + R(z; \xi^*) - R(z; \widehat{\xi}^{(k-1)})\} \\
    &= \{z : R(z; \xi^*) \leq q^*_{\alpha} + \widehat{q}_{t, k-1, \alpha} - q^*_{\alpha} + R(z; \xi^*) - R(z; \widehat{\xi}^{(k-1)})\}.
\end{align*} 
By Assumption~\ref{eq:Lipschitz-score}, it follows that 
\begin{align*}
    \widehat{C}_{t, k-1, \alpha} \subseteq \{z : R(z; \xi^*) \leq q^*_{\alpha} + |\widehat{q}_{t, k-1, \alpha} - q^*_{\alpha}| + d(\xi^*,\,\widehat{\xi}^{(k-1)})\}
\end{align*} 
and 
\begin{align*}
    \widehat{C}_{t, k-1, \alpha} \supseteq\{z : R(z; \xi^*) \leq q^*_{\alpha} - |\widehat{q}_{t, k-1, \alpha} - q^*_{\alpha}| - d(\xi^*,\,\widehat{\xi}^{(k-1)})\}.
\end{align*}
Thus,  
\begin{align*}
    L(\widehat{C}_{t, k-1, \alpha} \triangle C^*_{\alpha})
    &\leq L(\{z: |R(z; \xi^*) - q^*_{\alpha}| < |\widehat{q}_{t, k-1, \alpha} - q^*_{\alpha}| + d(\xi^*,\,\widehat{\xi}^{(k-1)})\}) \\
    &= g(|\widehat{q}_{t, k-1, \alpha} - q^*_{\alpha}| + d(\xi^*,\,\widehat{\xi}^{(k-1)}))
\end{align*}
Let $F(r; \xi) = \mathbb{P}(R(Z;\xi) \le r|\xi)$, $F^*(r) := \mathbb{P}(R(Z; \xi^*) \le r)$, and
\begin{equation*}
    \widehat{F}_{t, k-1}(r; \widehat{\xi}^{(k-1)}) = \frac{1}{t - \lceil \eta^{k-1} \rceil + 2}\sum_{s= \lceil\eta^{k-1}\rceil}^t \mathbbm{1}\{R(Z_s; \widehat{\xi}^{(k-1)}) \leq r\}.
\end{equation*}
It follows from~\ref{eq:Lipschitz-score} that
\[
F^*\left(r - d(\xi^*,\,\widehat{\xi}^{(k-1)})\right) ~\le~ F(r; \widehat{\xi}^{(k-1)}) ~\le~ F^*\left(r + d(\xi^*,\,\widehat{\xi}^{(k-1)})\right). 
\]
Similarly, 
\[
\widehat{F}_{t,k-1}\left(r - d(\xi^*,\,\widehat{\xi}^{(k-1)})\right) ~\le~ \widehat{F}_{t,k-1}(r) ~\le~ \widehat{F}_{t,k-1}\left(r + d(\xi^*,\,\widehat{\xi}^{(k-1)})\right). 
\]
By the DKW inequality~\citep{massart1990}, 
\[
\mathbb{P}\left(\sup_{r\in\mathbb{R}}\left|\widehat{F}_{t,k-1}(r) - F(r; \widehat{\xi}^{(k-1)})\right| \ge \sqrt{\frac{\log(2/\kappa)}{2(t - \lceil\eta^{k-1}\rceil + 2)}}\,\middle|\,\widehat{\xi}^{(k-1)}\right) \le \kappa.
\]
Note that $t - \lceil\eta^{k-1}\rceil + 2 \ge \eta^{k-1}(\eta - 1) \ge t(\eta - 1)/\eta$. Therefore, with probability at least $1 - \kappa$, we have
\begin{equation}\label{eq:distribution-inequalities-DKW}
    F^*\left(r- d(\xi^*,\,\widehat{\xi}^{(k-1)})\right) - \sqrt{\frac{\eta\log(2/\kappa)}{2t(\eta - 1)}} \le \widehat{F}_{t,k-1}(r) \le F^*\left(r + d(\xi^*,\,\widehat{\xi}^{(k-1)})\right) + \sqrt{\frac{\eta\log(2/\kappa)}{2t(\eta - 1)}}.
\end{equation}
From Algorithm \ref{alg:dynamic_algo-illustration} and Theorem \ref{dynamic_anytime_conformal_thm}, we have
\begin{equation*}
    \widehat{q}_{t, k-1, \alpha} = \widehat{F}^{-1}_{t, k-1}(1 - \alpha + u_{t,k-1})
\end{equation*} 
and $h, g$ satisfy $\sum_{k=0}^{\infty}h(k) = \sum_{k=0}^{\infty}g(k) = 1$. 
Thus,~\eqref{eq:distribution-inequalities-DKW} implies that with probability $1 - \kappa$, 
\[
q^*_{\alpha + u_{t, k-1} - \sqrt{\eta\log(2/\kappa)/(2t(\eta - 1))}} - d(\xi^*,\,\widehat{\xi}^{(k-1)}) \le \widehat{q}_{t,k-1,\alpha} \le q^*_{\alpha + u_{t, k-1} + \sqrt{\eta\log(2/\kappa)/(2t(\eta - 1))}} + d(\xi^*,\,\widehat{\xi}^{(k-1)}).
\]
Summarizing, we get
\[
|\widehat{q}_{t,k-1,\alpha} - q^*_{\alpha}| \le d(\xi^*,\,\widehat{\xi}^{(k-1)}) + \max_{\omega\in\{\pm 1\}}\left|q^{*}_{\alpha} - q^*_{\alpha + u_{t, k-1} + \omega\sqrt{\eta\log(2/\kappa)/(2t(\eta - 1))}}\right|,
\]
and hence,
\[
L(\widehat{C}_{t,k-1,\alpha}\Delta C^*_{\alpha}) \le g\left(2d(\xi^*,\,\widehat{\xi}^{(k-1)}) + \max_{\omega\in\{\pm 1\}}\left|q^{*}_{\alpha} - q^*_{\alpha + u_{t, k-1} + \omega\sqrt{\eta\log(2/\kappa)/(2t(\eta - 1))}}\right|\right)
\] with probability at least $1-\kappa$. This proves the result. 
\end{document}